\documentclass{article}

    \PassOptionsToPackage{numbers, compress}{natbib}
\usepackage[preprint]{neurips_2026}

\usepackage[utf8]{inputenc} 
\usepackage[T1]{fontenc}    
\usepackage{hyperref}       
\usepackage{url}            
\usepackage{booktabs}       
\usepackage{amsfonts}       
\usepackage{amsmath}        
\usepackage{amssymb}        
\usepackage{amsthm}         
\usepackage{mathtools}      
\usepackage{nicefrac}       
\usepackage{microtype}      
\usepackage{xcolor}         
\usepackage{graphicx}       
\usepackage{subfigure}      
\usepackage{algorithm}      
\usepackage{algorithmic}    
\usepackage{wrapfig}        
\usepackage{multirow}       
\usepackage{enumitem}       
\usepackage{caption}        
\usepackage{adjustbox}      

\newtheorem{proposition}{Proposition}
\newtheorem*{proposition*}{Proposition}

\newtheorem*{claim*}{Claim}

\newtheorem{remark}{Remark}
\newtheorem{observation}{Observation}

\newcommand{\ours}{LGR}
\newcommand{\ourstopk}{LGR (topk)}
\newcommand{\rkl}{\text{R-KL}}

\DeclareMathOperator{\Ent}{\mathcal{H}}

\title{Your Teacher Can't Help You Here: Combating Supervision Fidelity Decay in On-Policy Distillation}

\author{%
  Yanjiang Liu\textsuperscript{1,2} \quad
  Jie Lou\textsuperscript{3} \quad 
  Xinyan Guan\textsuperscript{1,2} \quad 
  Yuqiu Ji\textsuperscript{3} \quad
  Hongyu Lin\textsuperscript{2} \quad 
  Ben He\textsuperscript{1,2} \quad \\
  \textbf{Xianpei Han\textsuperscript{2}}
  \textbf{Le Sun\textsuperscript{2}}
  \textbf{Xing Yu\textsuperscript{3}}
  \textbf{Yaojie Lu\textsuperscript{2}}
  \vspace{5pt}\\
\textsuperscript{\rm 1}University of Chinese Academy of Sciences \\
\textsuperscript{\rm 2}Chinese Information Processing Laboratory \\Institute of Software, Chinese Academy of Sciences\\
    \textsuperscript{\rm 2}University of Chinese Academy of Sciences, Beijing, China \\
     \textsuperscript{\rm 3}Xiaohongshu
}
\begin{document}

\maketitle
{\let\thefootnote\relax\footnotetext{Email: \texttt{liuyanjiang22@mails.ucas.ac.cn}, \texttt{\{luyaojie,hongyu,xianpei,sunle\}@iscas.ac.cn}}}
{\let\thefootnote\relax\footnotetext{Codes are available at \url{https://github.com/zui-jiang/LGR}.}}

\begin{abstract}  \label{sec:abs}

On-policy distillation transfers reasoning capabilities by training a student model on its own generated trajectories using token-level feedback from a teacher. 
However, we identify a critical bottleneck, \textbf{Supervision Fidelity Decay (SFD)}: as student-generated prefixes lengthen, the teacher’s next-token distribution becomes less confident and less discriminative. 
Consequently, the teacher-dependent corrective signal in reverse-KL distillation weakens, causing student drift to compound across long reasoning chains. 
To mitigate SFD, we introduce \textbf{Lookahead Group Reward (\ours{})}.
Building on the insight that next-step teacher confidence reflects the discriminative strength of future reverse-KL supervision, \ours{} evaluates the student’s top-K candidate tokens by the teacher confidence they induce at the subsequent step and assigns a group-normalized reward.
To maintain computational efficiency, we further design an entropy-triggered tree-attention mechanism.
Across six math and code benchmarks, \ours{} improves mean@8 by \textbf{2.57} points over OPD for a 7B student, with gains increasing in longer-generation and reaching +\textbf{4.92} points on AIME-26 at 39k tokens.
\end{abstract}

\section{Introduction} \label{sec:intro}

Large language models (LLMs) with reasoning capabilities have achieved remarkable performance on complex mathematical and coding tasks. Recent advance models~\citep{openai2026openaio1card,Guo_2025,deepseekai2026deepseekv4} demonstrate that extended reasoning chains can unlock capabilities previously thought to require much larger models. On-policy distillation (OPD), where a student model generates its own reasoning trajectories and learns from the teacher's token-level feedback, has emerged as a primary paradigm for transferring these capabilities to efficient, deployable models~\citep{agarwal2024onpolicydistillationlanguagemodels,lu2025onpolicydistillation,gu2026minillmonpolicydistillationlarge,patiño2025_unlocking_on_policy_distillation_for_any_model_family,yang2025qwen3technicalreport,coreteam2026mimov2flashtechnicalreport,yang2026nemotroncascade2posttrainingllms}.

However, current OPD methods~\citep{agarwal2024onpolicydistillationlanguagemodels,lu2025onpolicydistillation,gu2026minillmonpolicydistillationlarge,patiño2025_unlocking_on_policy_distillation_for_any_model_family,ko2026scalingreasoningefficientlyrelaxed} implicitly treat the teacher as a \emph{static oracle} whose supervision quality is unaffected by what the student generates. We challenge this assumption and reveal a critical failure mode: as the student generates increasingly long sequences, its outputs progressively deviate from the teacher's training distribution, causing the teacher's supervision quality to \emph{degrade monotonically}, which we refer to as \textbf{Supervision Fidelity Decay (SFD)}. 

\begin{center}
\textbf{\textit{Does teacher supervision quality remain reliable throughout long reasoning chains --- and if not, how can we actively maintain it?}}
\end{center}

 As an initial observation, training OPD with varying maximum generation lengths (Figure~\ref{fig:maxlen_motivation}) reveals that performance improves from 3k to 9k tokens and plateaus around 16k, followed by a significant decline at 39k. This trend indicates a potential failure mode during extremely long generations. To isolate the cause, we design a controlled prefix-completion experiment (Figure~\ref{fig:sfd_evidence}): as the student prefix grows longer, both the teacher's downstream task accuracy and peak next-token probability decay monotonically, while the teacher on its own prefixes maintains significantly higher confidence, which indicating that SFD stems from student drift. The subplots further show that teacher confidence jumps immediately when the teacher takes over, confirming that different token choices lead to different teacher confidence one step ahead.

\begin{figure}[t]
    \begin{minipage}[t]{0.34\textwidth}
        \input{figure/fig2}
    \end{minipage}
    \hfill
    \begin{minipage}[t]{0.64\textwidth}
        \input{figure/fig1}
    \end{minipage}
    \vspace{-1em}
\end{figure}

Through theoretical analysis, we find that the declining max-prob directly collapses the reverse-KL gradient. As teacher confidence falls, its log-probability varies less across token choices, effectively reducing the learning signal to a student-only signal that reinforces existing modes without correction. Yet observations from the subplot suggest a solution; even at the same out-of-distribution context, teacher confidence at the subsequent position differs across token choices. By the same gradient analysis, higher next position confidence implies a more discriminative future signal: choosing the token that maximizes the next position's confidence directly preserves future supervision quality. We operationalize this as \textbf{Lookahead Group Reward (LGR)}, a group-normalized reward over the student's top-$K$ candidates, where this group normalization removes the high-variance absolute confidence level and retains only the relative ranking across candidates. To maintain computational efficiency, we further design a tree-attention mechanism triggered by entropy.

Our key contributions are:

\begin{itemize}[leftmargin=1.5em, itemsep=1pt, topsep=1pt]
    \item \textbf{We identify SFD as a fundamental failure mode of OPD.} We show that teacher accuracy and peak confidence decay as student prefix length increases, a process that we prove collapses the reverse KL gradient into a signal that reinforces itself based solely on student outputs.

    \item \textbf{We propose a principled remedy that looks one step ahead.} Since higher teacher confidence at the next position provides more discriminative future supervision, \ours{} selects tokens using group normalized rewards and an efficient tree attention mechanism triggered by entropy.
    
    \item \textbf{We show \ours{}'s gains grow with reasoning length.} \ours{} significantly outperforms OPD and alternative distillation methods, with pronounced gains on long reasoning tasks.
\end{itemize}

\section{Supervision Fidelity Decays Along Student Trajectories} \label{sec:analysis}
\vspace{-1em}

\subsection{On-Policy Reverse-KL as Policy Gradient} \label{sec:background}
\vspace{-1em}

Let $\pi_T$ denote a teacher model and $\pi_\theta$ a student model parameterized by $\theta$. Given a prompt $\mathbf{c}$, the student autoregressively generates a sequence $\mathbf{x} = (x_1, x_2, \ldots, x_L)$. On-policy reverse-KL distillation minimizes~\citep{agarwal2024onpolicydistillationlanguagemodels,xiao2025connectionimitationlearningrlhf}:
\begin{equation} \label{eq:rkl}
    \mathcal{L}_{\rkl}(\theta) = \mathbb{E}_{\mathbf{x} \sim \pi_\theta(\cdot | \mathbf{c})} \left[ \sum_{t=1}^{L} \log \frac{\pi_\theta(x_t | \mathbf{x}_{<t}, \mathbf{c})}{\pi_T(x_t | \mathbf{x}_{<t}, \mathbf{c})} \right].
\end{equation}
Following standard practice in on-policy distillation~\citep{lu2025onpolicydistillation}, we detach the generated sequence $\mathbf{x}$ from the computation graph during loss computation. Under this stop-gradient assumption, the per-position KL gradient decomposes independently, yielding the policy gradient form:
\begin{equation} \label{eq:pg}
    \nabla_\theta \mathcal{L}_{\rkl} = \mathbb{E}_{\mathbf{x} \sim \pi_\theta} \left[ \sum_{t=1}^{L} \nabla_\theta \log \pi_\theta(x_t | \mathbf{x}_{<t}) \cdot A_t \right],
\end{equation}
where the per-token advantage is:
\begin{equation} \label{eq:advantage}
    A_t = 1 + \log \pi_\theta(x_t | \mathbf{x}_{<t}) - \log \pi_T(x_t | \mathbf{x}_{<t}).
\end{equation}
This reveals that on-policy reverse-KL is equivalent to a policy gradient in the style of REINFORCE. Equivalently maximizing the per-token reward $r_t = \log\pi_T(x_t|\mathbf{x}_{<t}) - \log\pi_\theta(x_t|\mathbf{x}_{<t})$~\citep{Sutton1998,NEURIPS2023_a85b405e}. We refer to this as \textbf{local supervision}, in which the teacher provides a distributional target at each position.

\subsection{The Supervision Capability Functional and Supervision Fidelity Decay} \label{sec:sfd}

\textbf{Supervision as a functional.} The student policy $\pi_\theta$ generating a sequence of length $L$ induces a \emph{state visitation distribution} $\rho_{\pi_\theta}^{L}(c)$ over prefix contexts $c$. The teacher's supervision capability is then a \textbf{functional} of the student policy:
\begin{equation} \label{eq:supervision_functional}
    \mathcal{C}[\pi_\theta] = \mathbb{E}_{c \sim \rho_{\pi_\theta}^L}\!\left[f_T(c)\right], \quad \text{where} \quad f_T(c) = \max_{v \in \mathcal{V}} \pi_T(v | c)
\end{equation}
 is the teacher's local supervision quality at state $c$. This functional captures a key asymmetry. While the teacher remains unchanged~\citep{agarwal2024onpolicydistillationlanguagemodels,lu2025onpolicydistillation,gu2026minillmonpolicydistillationlarge,patiño2025_unlocking_on_policy_distillation_for_any_model_family,yang2025qwen3technicalreport,coreteam2026mimov2flashtechnicalreport}, the student's state visitation distribution  $\rho_{\pi_\theta}^L$ that shifts, dragging the integration domain into regions where $f_T$ is low.

\textbf{Position-dependent SFD curve.} We define the position-dependent version:
\begin{equation} \label{eq:sfd_curve}
    \mathcal{C}^{(t)}[\pi_\theta] = \mathbb{E}_{c \sim \rho_{\pi_\theta}^t}\!\left[\max_{v} \pi_T(v | c)\right],
\end{equation}
so that $\mathcal{C}^{(t)}[\pi_\theta]$ as a function of $t$ traces the SFD curve. The aggregate $\mathcal{C}[\pi_\theta] = \mathbb{E}_t[\mathcal{C}^{(t)}[\pi_\theta]]$ summarizes total supervision quality. When $L$ is short, $\rho_{\pi_\theta}^L$ remains within the teacher's training manifold and $f_T(c)$ stays high; when $L$ is long, autoregressive drift causes $\rho_{\pi_\theta}^L$ to shift into the teacher's OOD region, where $f_T(c)$ collapses.

\textbf{Supervision fidelity as a downstream metric.} For a specific student-generated prefix $\mathbf{x}_{<t}$, the teacher's \emph{supervision fidelity} at position $t$ is:
\begin{equation} \label{eq:fidelity}
    \mathcal{F}(t) \coloneqq \max_{v \in \mathcal{V}} \pi_T(v | \mathbf{x}_{\leq t}, \mathbf{c}),
\end{equation}
i.e., the teacher's peak next-token probability after observing the student's prefix up to $t$. Note that $\mathcal{C}^{(t)}[\pi_\theta] = \mathbb{E}[\mathcal{F}(t)]$ is the expectation of $\mathcal{F}(t)$ over student trajectories. At the task level, we can also measure supervision fidelity as $P(\text{teacher completes correctly from position } t \mid \mathbf{x}_{<t}^{\theta})$, which correlates strongly with $\mathcal{F}(t)$, thereby validating that max-$p$ is an effective proxy for downstream supervision quality.

\textbf{Empirical observation.} Using DeepSeek-R1-Distill-Qwen-1.5B~\citep{Guo_2025} as the student and 32B~\citep{Guo_2025} as the teacher on AIME~\citep{aime24}, we generate student prefixes of varying lengths and hand off to the teacher. Figure~\ref{fig:sfd_evidence} shows: (1) teacher completion accuracy drops monotonically with prefix length; (2) $\bar{\mathcal{F}}$ decreases correspondingly, validating max-$p$ as a proxy; (3) the teacher on its own prefixes maintains significantly higher fidelity, confirming SFD stems from student drift. The inset subplots further show that at the handoff point, teacher confidence \emph{jumps immediately}, which demonstrates that token choices at position $t$ do affect teacher confidence at $t{+}1$, even under severe SFD. The jump shrinks at longer prefixes (${\sim}14$k vs.\ ${\sim}2$k) and after OPD optimization, consistent with accumulated drift. This directly motivates Section~\ref{sec:confidence}.

\subsection{Theoretical Analysis: Signal Collapse and Compounding Drift} \label{sec:theory}

We formalize the impact of SFD on gradient quality. As the teacher's distribution becomes diffuse, its discriminative contribution vanishes; consequently, this single-position failure compounds across positions into a self-reinforcing drift.

\begin{proposition}[Teacher signal vanishing under SFD] \label{prop:snr}
Define $\Delta_T(t) \coloneqq \mathrm{Var}_{x_t \sim \pi_\theta}[\log \pi_T(x_t | \mathbf{x}_{<t})]$. Then $\Delta_T(t) \leq \log^2|\mathcal{V}| - \mathrm{Ent}(\pi_T(\cdot|\mathbf{x}_{<t}))^2$, so as the teacher's distribution becomes diffuse under SFD, $\Delta_T(t) \to 0$ and $\mathrm{SNR}_T(t) = O(\Delta_T(t)) \to 0$: the teacher's discriminative contribution to the gradient vanishes entirely, leaving a student-only signal that reinforces existing modes without correction. \emph{(Proof: Appendix~\ref{app:proof:snr}.)}
\end{proposition}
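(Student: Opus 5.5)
The plan is to read the variance as a second moment minus a squared mean, using the teacher entropy as the reference level. Write $p_v \coloneqq \pi_T(v\mid\mathbf{x}_{<t})$ and $q_v \coloneqq \pi_\theta(v\mid\mathbf{x}_{<t})$. Then
\begin{equation*}
\Delta_T(t) = \mathbb{E}_{q}\big[(\log p_{x})^2\big] - \big(\mathbb{E}_{q}[-\log p_{x}]\big)^2 .
\end{equation*}
The second term is the squared cross-entropy $\mathrm{CE}(q,p) = \mathrm{Ent}(p) + \mathrm{KL}(q\Vert p)$. Since the KL term is nonnegative, $\mathrm{CE}(q,p)^2 \ge \mathrm{Ent}(p)^2$, and subtracting it gives the $-\mathrm{Ent}(\pi_T)^2$ part of the bound.

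For the first term I would need $\mathbb{E}_q[(\log p_x)^2] \le \log^2|\mathcal{V}|$. This is the step I expect to be the main obstacle, because it is false without a regularity condition. If $q$ puts mass on tokens where $p$ is tiny, $(\log p_x)^2$ is unbounded. I would therefore state explicitly the assumption the inequality needs: on the support of the student, the teacher log-probabilities satisfy $-\log p_v \le \log|\mathcal{V}|$. This is the ``diffuse teacher'' regime that SFD describes, in which no token on the student's support falls below uniform probability. Under this assumption the pointwise bound $(\log p_x)^2 \le \log^2|\mathcal{V}|$ holds, and the displayed inequality follows at once.

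To make the conclusion $\Delta_T(t)\to 0$ robust even without that assumption, I would add a limit argument that does not rely on the bound. As $\mathrm{Ent}(p) \to \log|\mathcal{V}|$, we have $\mathrm{KL}(p\Vert u) = \log|\mathcal{V}| - \mathrm{Ent}(p) \to 0$, where $u$ is the uniform distribution. By Pinsker's inequality, $p \to u$. On a finite vocabulary this means every $p_v \to 1/|\mathcal{V}|$, so $\log p_v \to -\log|\mathcal{V}|$ uniformly in $v$. The variance of an asymptotically constant function is then $o(1)$ under any $q$. This gives $\Delta_T(t)\to0$ for every student distribution.

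For the SNR claim, I would split the per-position gradient from Eq.~\eqref{eq:pg} and~\eqref{eq:advantage} into a student-only part, $\mathbb{E}_q[\nabla\log q_x\,(1+\log q_x)]$, and a teacher part, $-\mathbb{E}_q[\nabla\log q_x\,\log p_x]$. The score function has zero mean, $\mathbb{E}_q[\nabla\log q_x]=0$. Hence the teacher part equals $-\mathrm{Cov}_q(\nabla\log q_x,\log p_x)$, and by Cauchy--Schwarz its norm is at most $\sqrt{\mathbb{E}_q\|\nabla\log q_x\|^2}\,\sqrt{\Delta_T(t)}$. Measuring the teacher's signal power as the squared norm of this contribution, relative to gradient noise of fixed order, gives $\mathrm{SNR}_T(t) = O(\Delta_T(t))$. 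Therefore $\mathrm{SNR}_T(t)\to0$, and what remains is the student-only term, which only sharpens existing student modes.
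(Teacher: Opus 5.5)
\textbf{Gap: the lower bound on the squared mean.} The identity you use is wrong. In fact $\mathrm{CE}(q,p)=\mathbb{E}_q[-\log p_x]=\mathrm{Ent}(q)+\mathrm{KL}(q\Vert p)$, where the entropy is that of the sampling distribution $q$, not of $p$. Gibbs' inequality therefore gives only $\mathrm{CE}(q,p)\ge\mathrm{Ent}(q)$. With the correct identity, your argument yields $\Delta_T(t)\le\log^2|\mathcal{V}|-\mathrm{Ent}(\pi_\theta)^2$. That is a statement about the student's entropy and says nothing about the teacher becoming diffuse.

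This is not a fixable slip. $\mathrm{CE}(q,p)$ does fall below $\mathrm{Ent}(p)$ when the student concentrates on the teacher's likelier tokens, and the target inequality is false even under your support condition. Take $|\mathcal{V}|=n\ge 3$, $p_1=2/n$, $p_2=1/n$, the remaining $n-2$ tokens sharing $1-3/n$ equally, and $q$ uniform on $\{1,2\}$. Then $p_v\ge 1/n$ on the support of $q$, and $\Delta_T(t)=\tfrac14\log^2 2\approx 0.12$ for every $n$. Meanwhile $\log^2 n-\mathrm{Ent}(p)^2\le 2\log n\cdot\mathrm{KL}(p\Vert u)=O(\log n/n)$, with $u$ uniform; this is about $0.036$ at $n=100$. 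So no argument can deliver the displayed bound in the near-uniform regime the proposition is about.

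You were right that $\mathbb{E}_q[(\log p_x)^2]$ is unbounded without an extra assumption. The paper's appendix has exactly this hole: it asserts $\log P(v)\in[-\log|\mathcal{V}|,0]$ for an arbitrary distribution, then invokes an unspecified ``variance--entropy relationship''. The displayed inequality is therefore not established there either.

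\textbf{What survives.} Your proposal already essentially contains the qualitative claim, which is true:
\begin{itemize}
\item With $-\log|\mathcal{V}|$ as reference constant, $\Delta_T(t)\le\mathbb{E}_q[\log^2(|\mathcal{V}|p_x)]\le\max_v\log^2(|\mathcal{V}|p_v)$.
\item Your Pinsker argument shows this tends to $0$ as $\mathrm{Ent}(p)\to\log|\mathcal{V}|$, uniformly in $q$, for a fixed vocabulary. The example above shows the rate cannot be uniform in $|\mathcal{V}|$.
\item Your SNR step (zero-mean score, teacher term as a covariance, Cauchy--Schwarz) is sound. It is more coherent than the paper's, which writes $\mathrm{SNR}_T\propto\sqrt{\Delta_T}$ and then $O(\Delta_T)$; measuring signal power by the squared norm makes $O(\Delta_T)$ consistent.
\end{itemize}
Replace the displayed inequality by this bound and drop the support assumption. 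That gives a correct proof of the proposition's substantive content.
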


Once the teacher signal vanishes at position $t$, the student selects tokens from its own sharpened distribution, pushing the next context further out-of-distribution, thereby further degrading the signal at $t{+}1$.

\begin{proposition}[Self-reinforcing drift under reverse-KL] \label{prop:drift}
Let $d_t \coloneqq D(\pi_T(\cdot|\mathbf{x}_{<t}^{\theta}), \pi_T(\cdot|\mathbf{x}_{<t}^{*}))$ be the distributional drift at position $t$, and assume $\Delta_T(t)$ is non-increasing in $d_t$ (greater drift degrades teacher discriminability, as implied by SFD). When $\Delta_T(t) < \delta_{\mathrm{crit}}$, the teacher-free advantage reinforces the student's existing modes, causing $\mathbb{E}[d_{t+1}|d_t] \geq d_t$: drift compounds across positions, creating a positive feedback loop that degrades supervision irreversibly. Forward-KL avoids this by construction ($d_t = 0$ always) but introduces exposure bias. \emph{(Proof sketch: Appendix~\ref{app:proof:drift}.)}
\end{proposition}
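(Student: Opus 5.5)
We treat Proposition~\ref{prop:drift} as a dynamical statement about the expected one-step change of $d_t$ and prove it as a proof sketch: a linearised local argument under explicit monotonicity assumptions, not a global theorem. The plan has three steps.

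\textbf{Step 1: split the advantage.} Using Eq.~\eqref{eq:advantage}, write
\[
A_t = 1 + \log\pi_\theta(x_t|\mathbf{x}_{<t}) - \log\pi_T(x_t|\mathbf{x}_{<t}),
\]
and decompose it into a student term $S_t = \log\pi_\theta(x_t|\mathbf{x}_{<t})$ and a teacher term $-\log\pi_T(x_t|\mathbf{x}_{<t})$. By Proposition~\ref{prop:snr}, the teacher term has variance $\Delta_T(t)$ under $x_t\sim\pi_\theta$. A term with small variance acts, inside the REINFORCE estimator of Eq.~\eqref{eq:pg}, as a near-constant baseline. Its contribution to the expected gradient is therefore bounded in norm by roughly $\sqrt{\Delta_T(t)}\,\|\nabla\log\pi_\theta\|$, via Cauchy--Schwarz on $\mathrm{Cov}_{\pi_\theta}(\nabla\log\pi_\theta,\log\pi_T)$.

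\textbf{Step 2: student-only update sharpens the student.} With the teacher term removed, the remaining expected update is
\[
\nabla\,\mathbb{E}_{\pi_\theta}[\log\pi_\theta],
\]
the negative entropy. The gradient step descends $\mathcal{L}_{\rkl}$, which here means minimizing $\mathbb{E}_{\pi_\theta}[\log\pi_\theta]$, i.e.\ maximizing the student's entropy. This sign is delicate and must be checked carefully, since it is the crux of the argument.

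If the pure entropy term points away from sharpening, I would argue differently. The statement's claim that the update ``reinforces existing modes'' would then rest on the sampling side. Once the teacher's discriminative signal is gone, nothing moves probability mass toward $\arg\max\pi_T$. The token $x_t$ is therefore drawn from a distribution whose overlap with the teacher-preferred continuation $\mathbf{x}^*$ does not increase, giving a zero-or-positive expected increment of divergence.

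Concretely, define $\delta_{\mathrm{crit}}$ as the threshold below which the corrective covariance term from Step~1 is smaller than the intrinsic per-step divergence injected by sampling. That intrinsic divergence is $\mathbb{E}[D(\cdot)]$ generated by $\pi_\theta\ne\pi_T$ at the current context, which is nonnegative.

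\textbf{Step 3: one-step induction.} Condition on $d_t$, and write $d_{t+1}\ge d_t + \eta_t - \kappa_t$. Here $\eta_t\ge0$ is the fresh divergence from sampling $x_t$ off-teacher. The term $\kappa_t$ is the corrective pull, with $\kappa_t\le C\sqrt{\Delta_T(t)}$ by Step~1.

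This inequality needs a triangle/data-processing style property of $D$ along concatenated prefixes. I would state that property as an assumption, e.g.\ for $D$ total variation on prefix-induced distributions.

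Given $\Delta_T(t)<\delta_{\mathrm{crit}}$, we have $\kappa_t\le\eta_t$ in expectation, so $\mathbb{E}[d_{t+1}|d_t]\ge d_t$. Since $\Delta_T$ is non-increasing in $d_t$, $\Delta_T(t+1)\le\Delta_T(t)<\delta_{\mathrm{crit}}$, and induction gives the compounding. For forward-KL, trajectories are teacher-generated, so $\mathbf{x}^\theta=\mathbf{x}^*$ and $d_t=0$ trivially. The train/test mismatch that results is the standard exposure-bias remark.

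\textbf{Main obstacle.} The hardest step is Step~3's inequality $d_{t+1}\ge d_t+\eta_t-\kappa_t$. Drift in teacher-conditional distributions is not additive in general. I expect to have to impose it as a modelling assumption (a local Lipschitz/monotone coupling), which is consistent with the statement being labelled a proof sketch.
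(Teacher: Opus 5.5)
\textbf{Verdict: genuine gap.} Your Step~2 computation is right, and you should commit to it instead of calling the sign delicate. With $\pi_T(\cdot|\mathbf{x}_{<t})$ uniform, the constants in $A_t$ drop out against $\mathbb{E}_{\pi_\theta}[\nabla_\theta\log\pi_\theta]=0$. The expected loss gradient is therefore $\mathbb{E}_{\pi_\theta}[\nabla_\theta\log\pi_\theta(x_t)\,A_t]=-\nabla_\theta\Ent(\pi_\theta(\cdot|\mathbf{x}_{<t}))$. For a softmax head, each logit $z_v$ moves in expectation proportionally to $-\pi_\theta(v)\bigl(\log\pi_\theta(v)+\Ent(\pi_\theta)\bigr)$. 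This pushes down tokens above $e^{-\Ent(\pi_\theta)}$ and pushes up the rest, so the teacher-free advantage flattens the student in expectation. The clause ``the teacher-free advantage reinforces the student's existing modes'' is therefore false and cannot be proved.

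The paper's sketch (Part~1) reaches that clause through a sign slip. It correctly says the update is $\propto -A_t$ and that $\pi_\theta(x_t)$ decreases above $(e|\mathcal{V}|)^{-1}$. It then reads $A_t\gg 0$ as reinforcement and $A_t<0$ as suppression, which is backwards. Leaving this as an open conditional is the first gap in your proposal. Only the drift claim can survive, via your sampling-side fallback. That fallback is the same mechanism as the paper's Part~2: with no teacher correction, $x_t\neq x_t^*$ with probability bounded away from zero.

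The drift claim is where the real gap sits, in your version as in the paper's. Your increment inequality $d_{t+1}\ge d_t+\eta_t-\kappa_t$ is postulated, just as the paper postulates $d_{t+1}\ge d_t+\delta$. So $\mathbb{E}[d_{t+1}|d_t]\ge d_t$ is assumed, not derived.

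Your Step~1 bound $\|\mathrm{Cov}_{\pi_\theta}(\nabla\log\pi_\theta,\log\pi_T)\|\le\sqrt{\Delta_T(t)}\,(\mathbb{E}_{\pi_\theta}\|\nabla\log\pi_\theta\|^2)^{1/2}$ is correct. It is a cleaner, quantitative replacement for the paper's appeal to Proposition~\ref{prop:snr}. But it does not yet bound $\kappa_t$. It controls a parameter update that changes rollouts at the next training iterate, whereas $d_t\to d_{t+1}$ happens inside a single rollout from a fixed $\pi_\theta$. You need an explicit link, such as step size times a Lipschitz constant from $\theta$ to the sampling distribution at that context, before $\kappa_t\le C\sqrt{\Delta_T(t)}$ says anything about drift.

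The induction also does not close, for two reasons:
\begin{itemize}
\item Defining $\delta_{\mathrm{crit}}$ through $\eta_t$ makes it depend on $t$ and the context. Then $\Delta_T(t{+}1)\le\Delta_T(t)<\delta_{\mathrm{crit}}$ does not give the hypothesis at $t{+}1$ unless you assume a uniform bound $\mathbb{E}[\eta_t|d_t]\ge\eta_{\min}>0$, which is the paper's $\delta>0$.
\item Monotonicity of $\Delta_T$ in $d_t$ carries over to $t{+}1$ only if $d_{t+1}\ge d_t$ holds pathwise (e.g.\ $\kappa_t\le\eta_t$ almost surely), not merely in conditional mean.
\end{itemize}
The forward-KL part is fine and matches the paper.
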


\subsection{Training Dynamics: The Vicious Cycle and Supervision Boundary Contraction} \label{sec:vicious_cycle}

The functional perspective reveals a deeper consequence. Define the \emph{effective learning horizon} $t_{\mathrm{eff}}(k) = \sup\{t : \mathcal{C}^{(t)}[\pi_{\theta_k}] > C_{\min}\}$ as the furthest position where teacher supervision exceeds a minimum useful threshold at training step $k$.

Under the self-reinforcing drift established in Proposition~\ref{prop:drift}, training induces a \textbf{vicious cycle}: positions beyond $t_{\mathrm{eff}}$ receive no useful supervision ($\ell \approx 0$) and therefore do not improve; the student's uncorrected behavior at these positions continues to push the teacher further out-of-distribution, which in turn may cause $t_{\mathrm{eff}}$ to \emph{shrink} at the next training step. This creates a ``learning desert'' that expands over training, establishing a \textbf{reasoning length ceiling} $t^*$ beyond which the student can never improve under standard on-policy distillation.

The $\mathcal{C}^{(t)}$ curve follows a sigmoidal decay: the early portion maintains high supervision quality, the late portion collapses, and the transition sharpens progressively over training. We verify this empirically: fitting a logistic sigmoid to both curves in Figure~\ref{fig:sfd_evidence} yields $R^2 > 0.997$, and the fitted parameters show that OPD training contracts the supervision boundary ($t^*$: $7.43 \to 7.04$k) and steepens the transition slope ($\varepsilon$: $0.21 \to 0.26$). Notably, all four parameter shifts align with 
 the direction predicted by the vicious cycle (see Appendix~\ref{app:sigmoid} for full fit results).

This compounding effect is fundamental to on-policy reverse-KL and cannot be resolved by simply adjusting learning rates or adding regularization, as these measures only slow the drift rate $\varepsilon$ without changing the structural problem. Instead, it motivates directly optimizing the supervision functional $\mathcal{C}[\pi_\theta]$ itself, which provides a structurally different signal that steers the student toward states where the teacher can provide high-quality supervision.


\section{Lookahead Confidence as a Supervision Signal} \label{sec:method}

\subsection{From Gradient Failure to One-Step Lookahead} \label{sec:two_layers}

The preceding analysis shows that SFD collapses the gradient: as $\pi_T(\cdot|\mathbf{x}_{<t})$ becomes diffuse, $A_t = 1 + \log\pi_\theta - \log\pi_T$ loses discriminability and reduces to a student-only signal (Proposition~\ref{prop:snr}). The standard objective (Eq.~\ref{eq:rkl}) is thus \textbf{agnostic to state quality}: it still forces the student to match the teacher's now-uninformative distribution.

\textbf{One-step-ahead retains discriminability.} Even when $\pi_T(\cdot|\mathbf{x}_{<t})$ is near-uniform, different token choices $x_t^{(k)}$ steer the context into different next-states, and the teacher's distributions at $t{+}1$ can differ substantially across candidates. Instead of asking what the teacher wants at position $t$ (unanswerable under SFD), we ask which candidate causes the \emph{least further drift}, a question that remains informative even when local supervision has collapsed.

\textbf{Breaking the vicious cycle.} This signal directly targets Proposition~\ref{prop:drift}: by favoring tokens that maintain higher teacher confidence at $t{+}1$, the student slows the per-step drift rate, preventing $t_{\mathrm{eff}}$ from contracting. The concrete realization is a per-token one-step-ahead confidence reward (Section~\ref{sec:confidence}), which provides a greedy one-step approximation to $\nabla_\theta \mathcal{C}[\pi_\theta]$, sufficient to slow drift without requiring full multi-step lookahead. We now formalize when this approximation retains useful discriminability.

\begin{proposition}[One-step-ahead discriminability survives local supervision failure] \label{prop:one_step}
Define the one-step-ahead discriminability at position $t$ as:
\begin{equation}
    D_{\text{ahead}}(t) \coloneqq \max_{k, k'} \left| \max_v \pi_T(v | \mathbf{x}_{<t}, x_t^{(k)}) - \max_v \pi_T(v | \mathbf{x}_{<t}, x_t^{(k')}) \right|,
\end{equation}
which measures the maximum difference in teacher confidence at $t{+}1$ across candidate token choices at $t$. Then $D_{\mathrm{ahead}}(t)$ is \textbf{independent} of the teacher's local entropy $\Ent(\pi_T(\cdot|\mathbf{x}_{<t}))$: even when $\pi_T(\cdot|\mathbf{x}_{<t})$ is uniform (maximum SFD, $\Delta_T(t)=0$), $D_{\mathrm{ahead}}(t)$ can be arbitrarily large. \emph{(Proof: Appendix~\ref{app:proof:one_step}.)}
\end{proposition}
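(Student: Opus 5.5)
The statement is an existence/independence claim, so I will prove it by explicit construction rather than by bounding anything. The key structural observation is that $D_{\mathrm{ahead}}(t)$ depends only on the teacher's conditionals $\pi_T(\cdot\mid\mathbf{x}_{<t},x_t^{(k)})$ at the \emph{extended} contexts, while the local entropy $\Ent(\pi_T(\cdot\mid\mathbf{x}_{<t}))$ depends only on the conditional at the \emph{current} context. An autoregressive teacher is just a family of next-token conditionals indexed by prefixes, and these can be specified independently for distinct prefixes. Since $\mathbf{x}_{<t}$ and the extensions $(\mathbf{x}_{<t},x_t^{(k)})$ are distinct prefixes, neither quantity constrains the other. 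Making this precise gives the independence claim. The ``arbitrarily large'' part then amounts to exhibiting values of $D_{\mathrm{ahead}}(t)$ approaching its supremum $1-1/|\mathcal{V}|$, since each max-probability lies in $[1/|\mathcal{V}|,1]$.

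\textbf{Step 1 (construction).} Fix any prefix $\mathbf{x}_{<t}$ and at least two candidates $x_t^{(1)}\neq x_t^{(2)}$. Define a teacher $\pi_T$ with $\pi_T(\cdot\mid\mathbf{x}_{<t})=\mathrm{Unif}(\mathcal{V})$, so the local entropy is $\log|\mathcal{V}|$, which is maximal. Then, for a free parameter $\epsilon\in(0,1)$, set:
\begin{itemize}
\item $\pi_T(\cdot\mid\mathbf{x}_{<t},x_t^{(1)})$ to put mass $1-\epsilon$ on some token $v^\star$ and spread $\epsilon$ uniformly over the rest;
\item $\pi_T(\cdot\mid\mathbf{x}_{<t},x_t^{(2)})=\mathrm{Unif}(\mathcal{V})$.
\end{itemize}
All other conditionals are arbitrary. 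This is a valid autoregressive model because the prefixes are distinct.

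\textbf{Step 2 (evaluation).} In this construction $D_{\mathrm{ahead}}(t)\ge (1-\epsilon)-1/|\mathcal{V}|$, which tends to the supremum $1-1/|\mathcal{V}|$ as $\epsilon\to0$.

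\textbf{Step 3 (connection to Proposition~\ref{prop:snr}).} Uniformity of $\pi_T(\cdot\mid\mathbf{x}_{<t})$ makes $\log\pi_T(x_t\mid\mathbf{x}_{<t})$ constant in $x_t$, so $\Delta_T(t)=0$ for every student $\pi_\theta$. Thus local supervision has fully collapsed while one-step-ahead discriminability is near maximal.

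\textbf{Step 4 (full independence).} To show independence beyond the uniform endpoint, I take any target local distribution $q$, hence any entropy value, and any target value $D\in[0,1-1/|\mathcal{V}|)$. Setting $\pi_T(\cdot\mid\mathbf{x}_{<t})=q$ and interpolating the two extended conditionals realizes every pair (entropy, $D_{\mathrm{ahead}}$). This works because $\max_v p(v)$ ranges continuously over $[1/|\mathcal{V}|,1]$ along the path between uniform and a point mass. The image of the map is therefore a product set, which is the precise sense in which $D_{\mathrm{ahead}}$ is independent of the local entropy.

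\textbf{Main obstacle.} The difficulty is interpretive, not computational. ``Arbitrarily large'' must be read as ``arbitrarily close to its supremum $1-1/|\mathcal{V}|$,'' and ``independent'' as ``no functional constraint between the two quantities over the class of all teachers.'' I will state these readings explicitly. I will also note that for a realistic neural teacher with shared parameters the two quantities may be correlated, so the proposition is a possibility result and not a guarantee. The empirical handoff jump reported in Figure~\ref{fig:sfd_evidence} is the evidence that such discriminability is present in practice.
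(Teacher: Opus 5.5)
Your proposal is correct and follows the paper's own route. Both use an explicit teacher that is uniform at $\mathbf{x}_{<t}$ (so $\Delta_T(t)=0$) but nearly deterministic after one candidate and uniform after the others; the paper does this concretely with $\mathcal{V}=\{a,b,c\}$ and $\pi_T(\cdot\mid\mathbf{x}_{<t},a)=(1-2\varepsilon,\varepsilon,\varepsilon)$, obtaining $D_{\mathrm{ahead}}(t)=2/3-2\varepsilon\to 2/3$, which is your supremum $1-1/|\mathcal{V}|$ at $|\mathcal{V}|=3$. Your additions go somewhat beyond the paper's proof, which treats only the maximum-entropy endpoint and merely asserts the extension to other vocabulary sizes: the general-$|\mathcal{V}|$ construction, Step~4 realizing every (entropy, $D_{\mathrm{ahead}}$) pair, and the explicit reading of ``arbitrarily large'' as ``approaching $1-1/|\mathcal{V}|$''.
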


This proposition is the theoretical foundation for our confidence reward: even when local supervision (Proposition~\ref{prop:snr}) has completely failed, the one-step-ahead comparison can still extract useful guidance from the teacher.

\subsection{Confidence Reward Design} \label{sec:confidence}

We operationalize the one-step-ahead comparison as a reward signal. At each position $t$, after the student samples $x_t \sim \pi_\theta(\cdot | \mathbf{x}_{<t})$, we evaluate the teacher's supervision fidelity at position $t{+}1$:
\begin{equation} \label{eq:raw_confidence}
    r_{\text{raw}}(x_t) = \max_{v \in \mathcal{V}} \pi_T(v | \mathbf{x}_{<t}, x_t, \mathbf{c}).
\end{equation}
This measures the teacher's peak next-token probability \emph{after} observing the student's action $x_t$. The critical distinction is temporal: local supervision (R-KL) evaluates the teacher's state at $t$ given context $\mathbf{x}_{<t}$, while the confidence reward evaluates the teacher's state at $t{+}1$ given context $(\mathbf{x}_{<t}, x_t)$. By comparing this quantity across $K$ candidates, we identify which token choice at $t$ \textbf{causes the least further degradation} in teacher supervision quality at $t{+}1$. This selection seeks not to recover an in-distribution state, but rather to minimize the next step of SFD progression.

\begin{proposition}[Max-$p$ as relative drift indicator] \label{prop:maxp_proximity}
Let $P^*_T$ denote the teacher's next-token distribution conditioned on an in-distribution prefix, and let $P_T^{(\mathbf{x})}$ denote the distribution conditioned on a student-generated prefix $\mathbf{x}$. If the teacher is $\beta$-smooth (small perturbations in context produce bounded distributional shifts), then:
\begin{equation}
    \max_v P^*_T(v) - \max_v P_T^{(\mathbf{x})}(v) \leq \|P^*_T - P_T^{(\mathbf{x})}\|_\infty \leq \beta \cdot d(\mathbf{x}, \mathcal{X}_T),
\end{equation}
where $d(\mathbf{x}, \mathcal{X}_T)$ measures the distance from the teacher's in-distribution manifold $\mathcal{X}_T$. That is, \textbf{higher max-$p$ implies closer proximity to the teacher's competent region}. \emph{(Proof: Appendix~\ref{app:proof:maxp}.)}
\end{proposition}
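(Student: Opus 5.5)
The argument splits into two inequalities and a short interpretive step. The first inequality is purely about probability vectors. The second is a direct application of the smoothness hypothesis, once that hypothesis is made precise.

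\textbf{First inequality.} Let $v^* \in \arg\max_v P^*_T(v)$. Since $\max_v P_T^{(\mathbf{x})}(v) \geq P_T^{(\mathbf{x})}(v^*)$, we get
\begin{equation*}
\max_v P^*_T(v) - \max_v P_T^{(\mathbf{x})}(v) \leq P^*_T(v^*) - P_T^{(\mathbf{x})}(v^*) \leq \max_v \bigl|P^*_T(v) - P_T^{(\mathbf{x})}(v)\bigr| = \|P^*_T - P_T^{(\mathbf{x})}\|_\infty .
\end{equation*}
Swapping the roles of the two distributions gives the same bound in the other direction. This shows that the map $P \mapsto \max_v P(v)$ is $1$-Lipschitz with respect to $\|\cdot\|_\infty$, which is the fact I will reuse below.

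\textbf{Second inequality.} Here I make the $\beta$-smoothness assumption explicit. I read it as Lipschitz continuity of the teacher's next-token map from contexts to distributions: $\|\pi_T(\cdot|\mathbf{x}) - \pi_T(\cdot|\mathbf{x}')\|_\infty \leq \beta\, d(\mathbf{x},\mathbf{x}')$ for some metric $d$ on prefixes. For example, $d$ could be distance in the teacher's embedding space, chosen so that the constant $\beta$ exists.

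Define $d(\mathbf{x},\mathcal{X}_T) = \inf_{\mathbf{x}' \in \mathcal{X}_T} d(\mathbf{x},\mathbf{x}')$, and take $P^*_T = \pi_T(\cdot|\mathbf{x}^*)$, where $\mathbf{x}^*$ is a nearest (or $\epsilon$-nearest) point of $\mathcal{X}_T$. Applying the Lipschitz bound gives $\|P^*_T - P_T^{(\mathbf{x})}\|_\infty \leq \beta\,(d(\mathbf{x},\mathcal{X}_T)+\epsilon)$, and letting $\epsilon \to 0$ yields the stated bound.

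\textbf{Main obstacle.} The hardest step is this modelling choice. The statement leaves "$P^*_T$" and "$\beta$-smooth" informal, so the proof is only as rigorous as the definitions fixed at this point. I will state the Lipschitz assumption and the choice of $P^*_T$ as the projection onto $\mathcal{X}_T$ as standing hypotheses. If $\mathcal{X}_T$ is not closed, the infimum argument above handles the fact that a nearest point may not exist.

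\textbf{Interpretation.} Rearranging gives $\max_v P_T^{(\mathbf{x})}(v) \geq \max_v P^*_T(v) - \beta\, d(\mathbf{x},\mathcal{X}_T)$. Assume in-distribution confidence is roughly constant across the candidates being compared. Then low max-$p$ forces a large distance: $d(\mathbf{x},\mathcal{X}_T) \geq (\max_v P^*_T(v) - \max_v P_T^{(\mathbf{x})}(v))/\beta$. Contrapositively, this is the sense in which higher max-$p$ is consistent with closer proximity to $\mathcal{X}_T$.

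I will flag explicitly that this is a necessary-condition, relative statement, not a two-sided equivalence. That matches the "relative drift indicator" wording and the group-normalized use of max-$p$ across the $K$ candidates in Section~\ref{sec:confidence}.
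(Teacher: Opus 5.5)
Your proposal is correct and matches the paper's proof step for step. The paper uses the same maximizer argument (with $v^*$ maximizing $P^*_T$) for the first inequality. For the second, it uses the same reading of $\beta$-smoothness as an $\ell^\infty$-Lipschitz condition evaluated at a nearest in-distribution prefix, and simply assumes the minimum over $\mathcal{X}_T$ is attained (automatic for finite token sequences).

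One small caveat: your $\epsilon \to 0$ remark does not by itself produce a single fixed $P^*_T$ achieving the bound $\beta\, d(\mathbf{x},\mathcal{X}_T)$, because the comparison prefix changes with $\epsilon$.

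Your one-sided, necessary-condition interpretation is more careful than the paper's closing heuristic. The paper writes the cross-candidate max-$p$ difference as approximately $\beta$ times a difference of distances, which a one-sided bound does not actually imply.
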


\begin{remark}[Max-$p$ vs.\ entropy]
Maximizing $\max_v \pi_T(v|\cdot)$ is directionally equivalent to minimizing the teacher's prediction entropy $\Ent(\pi_T(\cdot|\cdot))$. We prefer max-$p$ for practical reasons: (i) it requires only an argmax rather than a full softmax and log-sum, (ii) its values lie naturally in $[0, 1]$, and (iii) under group normalization, the candidate ranking is consistent between the two.
\end{remark}

\textbf{Group normalization.} Since $r_{\text{raw}}$ varies in absolute magnitude across positions and tasks, we normalize within the top-$K$ student candidates (ranked by $\pi_\theta$), inspired by GRPO~\citep{Guo_2025}:
\begin{equation} \label{eq:group_norm}
    r_{\text{conf}}(x_t) = \frac{r_{\text{raw}}(x_t) - \mu_K}{\sigma_K + \epsilon},
\end{equation}
where $\mu_K$, $\sigma_K$ are the group mean and std over $\{r_{\text{raw}}^{(k)}\}_{k=1}^K$. This removes absolute-scale variance while preserving relative ranking across candidates (Appendix~\ref{app:proof:groupnorm}).

\textbf{Combined loss.} Adding the confidence reward to the per-token advantage (Eq.~\eqref{eq:advantage}) gives the \ours{} loss:
\begin{equation} \label{eq:lgr_loss}
    \mathcal{L}_t^{\ours} = A_t + \gamma \cdot r_{\text{conf}}(x_t).
\end{equation}
Since we already compute teacher confidence for the top-$K$ candidates, we extend to a multi-sample estimator. Letting $A_t^{(k)} = 1 + \log \pi_\theta(x_t^{(k)} | \mathbf{x}_{<t}) - \log \pi_T(x_t^{(k)} | \mathbf{x}_{<t})$, the \ourstopk{} loss weights each candidate by its student probability:
\begin{equation} \label{eq:topk_loss}
    \mathcal{L}_t^{\ourstopk} = \sum_{k=1}^{K} \pi_\theta(x_t^{(k)} | \mathbf{x}_{<t}) \cdot \left[ A_t^{(k)} + \gamma \cdot r_{\text{conf}}(x_t^{(k)}) \right].
\end{equation}

\subsection{Efficient Computation} \label{sec:entropy_trigger}

\textbf{Entropy trigger.} Computing the confidence reward for all $L$ positions with $K$ candidates each would be prohibitively expensive. We observe that at positions where the \emph{student} has low generation entropy, the top-1 token dominates the probability mass, and the remaining candidates carry negligible weight in the policy gradient. We trigger the confidence reward only at positions $\mathcal{S} = \{t : \Ent(\pi_\theta(\cdot | \mathbf{x}_{<t})) > \tau \}$; elsewhere, only the standard reverse-KL loss is applied. This typically selects $|\mathcal{S}| \approx 0.15L \text{--} 0.25L$ positions early in training, stabilizing around $0.10L$ as the student distribution sharpens, reducing computational overhead by $4\text{--}7$ times.

The triggering uses \emph{student} entropy rather than teacher entropy, avoiding circularity: teacher entropy at $t$ is exactly what degrades under SFD, so triggering on it would disable the reward precisely where it is most needed. This selection is near-lossless (Appendix~\ref{app:obs:trigger}).

\textbf{Tree attention.} Naively evaluating $K$ candidates at each $t \in \mathcal{S}$ requires $K \cdot |\mathcal{S}|$ teacher forward prefill for student's prefix. We instead construct an extended sequence (main sequence $\mathbf{x}$ followed by all candidates as branches) with a tree-structured mask $\mathbf{M}$ ~\citep{cai2024medusasimplellminference,li2025eaglespeculativesamplingrequires,li2025eagle3scalinginferenceacceleration} : main branch tokens attend causally; each candidate $x_t^{(k)}$ attends to $\mathbf{x}_{\leq t-1}$ but not to other candidates (see Figure~\ref{fig:tree_attn} for an example). In practice, GPU memory limits the total sequence length, so candidates are processed in $N$ segments rather than all at once. Appendix~\ref{app:tree_attn} gives the full cost analysis.

\begin{algorithm}[t]
\small
\caption{\ours{}: Lookahead Group Reward}
\label{alg:tgr}
\begin{algorithmic}[1]
\setlength{\itemsep}{1pt}
\REQUIRE Student $\pi_\theta$, Teacher $\pi_T$, entropy threshold $\tau$, top-$K$, reward weight $\gamma$
\FOR{each training step}
    \STATE Sample prompt $\mathbf{c}$ from dataset
    \STATE Generate $\mathbf{x} = (x_1, \ldots, x_L) \sim \pi_\theta(\cdot | \mathbf{c})$ \hfill \textit{// Student rollout}
    \STATE Compute student logits and entropy $\Ent_t$ for all positions
    \STATE Identify high-entropy set $\mathcal{S} = \{t : \Ent_t > \tau\}$
    \STATE Extract top-$K$ candidate tokens at each $t \in \mathcal{S}$
    \STATE Construct tree-attention input: main sequence + all candidates
    \STATE Construct multi-branch tree mask $\mathbf{M}$
    \STATE Run teacher forward pass with tree mask $\mathbf{M}$ \hfill \textit{// Single pass for all positions}
    \STATE Extract $r_{\text{raw}}^{(k)}$ for all candidates; compute $r_{\text{conf}}$ via Eq.~\eqref{eq:group_norm}
    \FOR{each position $t = 1, \ldots, L$}
        \IF{$t \in \mathcal{S}$}
            \STATE $\mathcal{L}_t \gets \mathcal{L}_t^{\ours}$ via Eq.~\eqref{eq:topk_loss} \hfill \textit{// Local + lookahead}
        \ELSE
            \STATE $\mathcal{L}_t \gets \log \frac{\pi_\theta(x_t | \mathbf{x}_{<t})}{\pi_T(x_t | \mathbf{x}_{<t})}$ \hfill \textit{// Standard R-KL}
        \ENDIF
    \ENDFOR
    \STATE Update $\theta$ with $\nabla_\theta \sum_t \mathcal{L}_t$
\ENDFOR
\end{algorithmic}
\end{algorithm}
\vspace{-1em}

\section{Experiments} \label{sec:exp}

\subsection{Experimental Setup} \label{sec:setup}
Full training and evaluation details are provided in Appendix~\ref{app:training_details}.

\textbf{Models.} We evaluate two teacher--student configurations sharing the same vocabulary, each with a single teacher:
(1) \textbf{1.5B student:} DeepSeek-R1-Distill-Qwen-1.5B~\citep{Guo_2025} as the student, trained separately for math (teacher: SkyWork-OR1-Math-7B~\citep{he2025skyworkopenreasoner1}) and code (teacher: DeepCoder-14B~\citep{balog2017deepcoderlearningwriteprograms}).
(2) \textbf{7B student:} DeepSeek-R1-Distill-Qwen-7B as the student, with DeepSeek-R1-Distill-Qwen-32B as the teacher.

\textbf{Benchmarks.} We evaluate our method on several rigorous datasets. For mathematical reasoning, we utilize AIME-24~\citep{aime24}, AIME--25~\citep{aime25}, and AIME-26~\citep{aime26}, as well as the February sessions of HMMT-25 and HMMT-26~\citep{hmmt}. For evaluation of code generation, we employ the LiveCodeBench v6 suite~\citep{jain2024livecodebenchholisticcontaminationfree}.

\textbf{Baselines.} We compare \ours{} against: (1) \textbf{GRPO}~\citep{Guo_2025}: Group Relative Policy Optimization with outcome-level reward; (2) \textbf{OPD}: on-policy distillation with standard reverse-KL, plus a top-$K$ multi-sample variant (OPD topk); (3) \textbf{JSD}~\citep{agarwal2024onpolicydistillationlanguagemodels}: Jensen--Shannon divergence distillation; and (4) \textbf{REOPOLD}~\citep{ko2026scalingreasoningefficientlyrelaxed}: Relaxed On-Policy Distillation, which stabilizes RKL training via mixture-based reward clipping (to handle heavy-tailed negative rewards) and entropy-guided token-level dynamic sampling (to filter near-zero reward tokens).

\subsection{Main Results} \label{sec:main_results}

\begin{table}[t]
\caption{Comparison of distillation methods on mathematical reasoning and code benchmarks (mean@8 / pass@8). $\Delta$ denotes the absolute difference between our best method and OPD (9k).}
\label{tab:main}
\centering
\resizebox{\textwidth}{!}{%
\begin{tabular}{lccccccc}
\toprule
 & \multicolumn{5}{c}{\textbf{Math}} & \multicolumn{1}{c}{\textbf{Code}} & \\
\cmidrule(lr){2-6} \cmidrule(lr){7-7}
\textbf{Model} & \textbf{AIME-24} & \textbf{AIME-25} & \textbf{AIME-26} & \textbf{HMMT-25} & \textbf{HMMT-26} & \textbf{LCB\_v6} & \textbf{AVG.} \\
\midrule
\multicolumn{8}{c}{\textbf{SkyWork-OR1-Math-7B / DeepCoder-14B} $\rightarrow$ \textbf{DeepSeek-R1-Distill-Qwen-1.5B}} \\
\midrule
SkyWork-OR1-Math-7B & 67.50/83.33 & 52.08/66.67 & 62.08/80.00 & 32.22/55.17 & 36.74/54.55 & -- & -- \\
DeepCoder-14B       & --          & --          & --          & --          & --          & 76.87/86.34 & -- \\
\midrule
Base Model          & 24.17/50.00 & 21.67/33.33 & 17.92/46.67 & 11.67/23.33 & 12.50/27.27 & 10.13/31.06 & 16.34/35.28 \\
\quad + GRPO        & 38.33/66.67 & 30.83/43.33 & 28.75/50.00 & 16.74/20.69 & 20.46/27.27 & 24.56/40.31 & 26.61/41.38 \\
\quad + OPD (9k)    & \underline{45.83}/80.00 & 33.75/46.67 & 32.08/56.67 & 20.42/36.67 & 22.73/27.27 & \underline{34.58}/50.44 & 31.57/49.62 \\
\quad + OPD (topk)  & \underline{45.83}/70.00 & 31.67/43.33 & \underline{33.75}/66.67 & \underline{20.50}/27.59 & 21.59/33.33 & 26.04/37.89 & 29.90/46.47 \\
\quad + JSD         & 38.75/66.67 & \underline{34.17}/50.00 & 32.50/50.00 & 18.75/23.33 & 21.21/33.33 & 29.02/46.92 & 29.07/45.04 \\
\quad + REOPOLD     & 41.67/60.00 & 33.33/46.67 & 30.00/46.67 & 19.17/26.67 & 22.35/27.27 & 33.59/49.56 & 30.02/42.81 \\
\midrule
\quad + \ours{}        & \textbf{46.67}/73.33 & \textbf{35.42}/50.00 & \textbf{34.17}/63.33 & \textbf{22.08}/36.67 & \underline{23.86}/37.50 & \textbf{36.89}/53.97 & \textbf{33.18}/52.47 \\
\quad + \ours{} (topk) & 44.17/80.00 & \textbf{35.42}/56.67 & \underline{33.75}/63.33 & \textbf{22.08}/46.67 & \textbf{24.62}/34.38 & 33.43/50.00 & \underline{32.25}/55.18 \\
\midrule
$\Delta$ & \textcolor{green!50!black}{+0.84} & \textcolor{green!50!black}{+1.67} & \textcolor{green!50!black}{+2.09} & \textcolor{green!50!black}{+1.66} & \textcolor{green!50!black}{+1.89} & \textcolor{green!50!black}{+2.31} & \textcolor{green!50!black}{+1.61} \\
\midrule\midrule
\multicolumn{8}{c}{\textbf{DeepSeek-R1-Distill-Qwen-32B} $\rightarrow$ \textbf{DeepSeek-R1-Distill-Qwen-7B}} \\
\midrule
DeepSeek-R1-Distill-Qwen-32B & 80.83/93.33 & 50.83/73.33 & 70.00/83.33 & 50.83/70.00 & 43.94/63.64 & 68.34/82.16 & -- \\
\midrule
Base Model          & 52.08/80.00 & 43.33/63.33 & 42.92/63.33 & 24.17/43.33 & 28.03/39.39 & 41.96/61.23 & 38.75/58.44 \\
\quad + GRPO        & 67.50/83.33 & 52.08/66.67 & 62.08/80.00 & 32.22/55.17 & 36.74/54.55 & 50.67/69.33 & 50.22/68.17 \\
\quad + OPD (9k)    & \textbf{62.08}/76.67 & 45.42/63.33 & 52.50/76.67 & 27.50/50.00 & \underline{32.95}/45.45 & 54.24/72.01 & 45.78/64.02 \\
\quad + OPD (topk)  & 57.50/80.00 & 45.00/66.67 & 52.08/70.00 & 29.58/56.67 & 30.30/42.42 & 53.36/71.81 & 44.64/64.60 \\
\quad + JSD         & 60.83/83.33 & 45.83/56.67 & 51.25/66.67 & 25.83/43.33 & 31.82/42.42 & 50.39/68.01 & 44.33/60.07 \\
\quad + REOPOLD     & 57.08/83.33 & \underline{47.50}/70.00 & \textbf{54.17}/80.00 & \underline{30.00}/56.67 & \underline{32.95}/46.88 & 54.41/72.47 & 46.02/68.23 \\
\midrule
\quad + \ours{}        & \underline{61.67}/83.33 & \textbf{49.58}/73.33 & \underline{53.75}/80.00 & \textbf{31.67}/60.00 & \textbf{34.85}/51.52 & \textbf{58.59}/74.45 & \textbf{48.35}/70.44 \\
\quad + \ours{} (topk) & \underline{61.67}/80.00 & 47.08/66.67 & \underline{53.75}/76.67 & 27.92/53.33 & 30.68/36.36 & \underline{56.61}/73.34 & \underline{46.29}/64.40 \\
\midrule
$\Delta$ & \textcolor{red!70!black}{-0.41} & \textcolor{green!50!black}{+4.16} & \textcolor{green!50!black}{+1.25} & \textcolor{green!50!black}{+4.17} & \textcolor{green!50!black}{+1.90} & \textcolor{green!50!black}{+4.35} & \textcolor{green!50!black}{+2.57} \\
\bottomrule
\end{tabular}%
}
\vspace{-1em}
\end{table}

Table~\ref{tab:main} presents the main comparison across both teacher--student configurations.

\textbf{\ours{} delivers superior aggregate performance across benchmarks.} In the 1.5B student setting, \ours{} achieves an average of 33.18\% mean@8, which is an increase of 1.61\% absolute over OPD 9k. These gains are consistent across all six evaluation sets, with the largest improvements occurring on AIME-26 and LCB\_v6. For the larger 7B student model, \ours{} widens the lead to an average improvement of 2.57\%. While it performs competitively but slightly behind the baseline on AIME-244, \ours{} yields significant gains on the remaining five benchmarks, particularly on AIME-25 and LCB\_v6 where improvements exceed 4.1\%.

\textbf{The advantage is more pronounced at larger scale.} In the 7B student setting, \ours{} achieves 48.35\% average mean@8, a +2.57\% improvement over OPD 9k. The gains are particularly large on AIME-25 (+4.16\%), HMMT-25 (+4.17\%), and LCB\_v6 (+4.35\%), suggesting that \ours{} is especially effective when the student has sufficient capacity to leverage the improved supervision signal. 

\textbf{\ours{} vs.\ \ours{} (topk).} The top-$K$ extension provides complementary but not strictly additive gains. In the 1.5B setting, \ours{} (topk) achieves notably higher pass@8 on several benchmarks (e.g., 56.67\% vs.\ 50.00\% on AIME-25), suggesting that the multi-sample estimator helps with diversity. In the 7B setting, \ours{} without top-$K$ generally outperforms, indicating that the single-sample estimator with confidence reward is sufficient at larger scale.

\textbf{Stabilization alone does not address SFD.} JSD and REOPOLD achieve comparable or lower performance than OPD, confirming that the core issue is not the choice of divergence or optimization instability, but the degradation of teacher supervision quality on student-generated contexts, a problem that demands a structurally different solution.

\subsection{\ours{} Addresses Supervision Fidelity Decay} \label{sec:fidelity_analysis}
\vspace{-0.5em}

\begin{figure}[t]
    \centering
    \includegraphics[width=\textwidth]{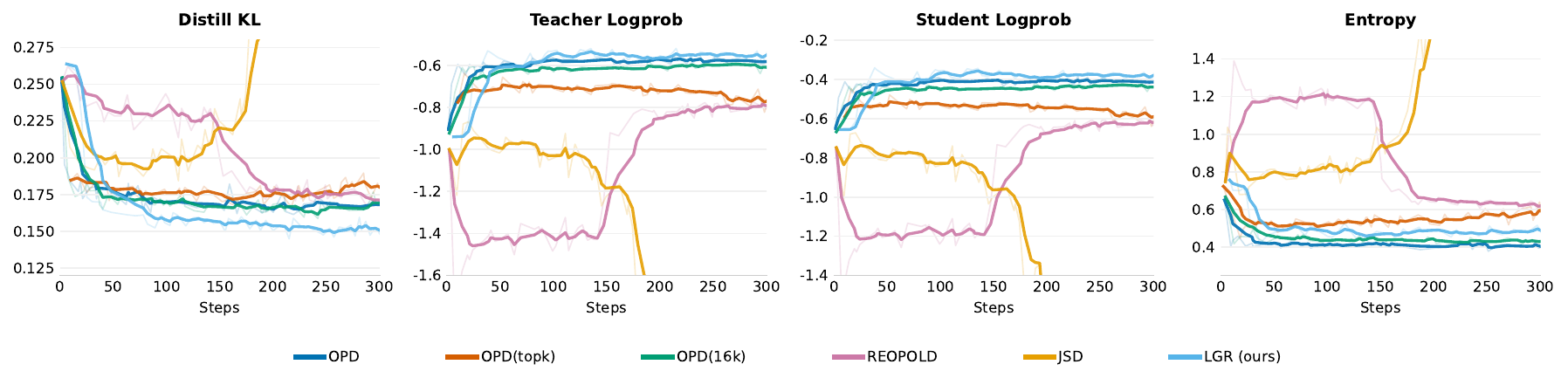}
    \caption{\textbf{Training dynamics across distillation methods.} Comparison of metrics over training steps. \ours{} maintains higher teacher log-probability and more stable entropy.}
    \label{fig:training_dynamics}
    \vspace{-1em}
\end{figure}

\textbf{Training dynamics.} Figure~\ref{fig:training_dynamics} tracks  distill KL, teacher log-probability, student log-probability and entropy during training. Compared to OPD variants, \ours{} maintains higher teacher log-probability throughout training, indicating that the student's generated contexts remain closer to the teacher's in-distribution manifold. The entropy curves show that \ours{} maintains more stable generation diversity rather than collapsing into the mode-sharpening behavior predicted by Proposition~\ref{prop:drift}.

\begin{wraptable}{r}{0.55\textwidth}
\centering
\caption{\textbf{OPD vs.\ \ours{} across max generation lengths.} Mean@8/Pass@8 on AIME benchmarks (1.5B student).}
\label{tab:length_compare}
\resizebox{\linewidth}{!}{%
\begin{tabular}{llccc}
\toprule
\textbf{Max Len} & \textbf{Method} & \textbf{AIME-24} & \textbf{AIME-25} & \textbf{AIME-26} \\
\midrule
\multirow{2}{*}{3k}  & OPD  & 42.92/76.67 & 33.33/50.00 & 32.92/60.00 \\
                      & \ours{} & 41.25/66.67 & 32.92/40.00 & 27.50/43.33 \\
\midrule
\multirow{2}{*}{9k}  & OPD  & 45.83/\textbf{80.00} & 33.75/46.67 & 32.08/56.67 \\
                      & \ours{} & \textbf{46.67}/73.33 & 35.42/50.00 & 34.17/63.33 \\
\midrule
\multirow{2}{*}{16k} & OPD  & 44.17/73.33 & 33.75/50.00 & 34.17/63.33 \\
                      & \ours{} & 46.25/76.67 & 36.25/53.33 & 33.33/63.33 \\
\midrule
\multirow{2}{*}{39k} & OPD  & 43.33/66.67 & 32.08/53.33 & 30.00/53.33 \\
                      & \ours{} & 46.25/73.33 & \textbf{36.75}/\textbf{60.00} & \textbf{34.92}/\textbf{66.67} \\
\bottomrule
\end{tabular}%
}
\end{wraptable}

\textbf{\ours{}'s advantage grows with maximum generation length.} Table~\ref{tab:length_compare} compares OPD and \ours{} at different maximum generation lengths (3k, 9k, 16k, 39k). At short lengths (3k), where SFD is minimal, \ours{} provides no advantage because the teacher's local supervision is already reliable. As the maximum length increases, \ours{} increasingly outperforms OPD. At 39k, \ours{} achieves the strongest gains across all three AIME benchmarks (e.g., 36.75 vs.\ 32.08 on AIME-25, 34.92 vs.\ 30.00 on AIME-26). This pattern is precisely what our SFD analysis predicts: the confidence reward becomes increasingly valuable as the teacher's local supervision degrades at longer positions.

\begin{wrapfigure}[15]{r}[0pt]{0.33\textwidth}
    \vspace{-2em}
    \centering
    \includegraphics[width=0.3\textwidth]{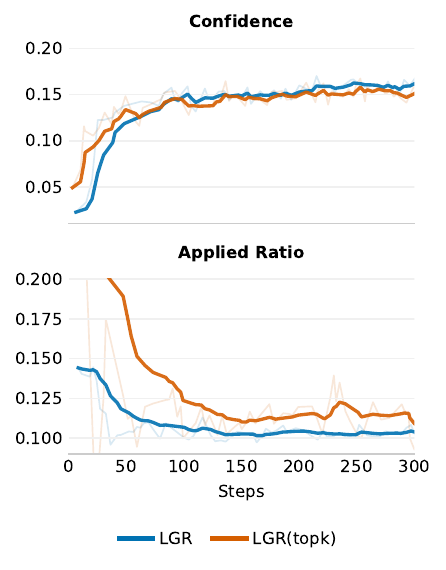}
    \vspace{-1em}
    \caption{\textbf{\ours{} confidence reward dynamics.} \textit{Top:} The average confidence reward increases over training. \textit{Bottom:} The applied ratio stabilizes around 10\%.}
    \label{fig:confidence_dynamics}
    \vspace{-2em}
\end{wrapfigure}

\textbf{Confidence reward dynamics.} Figure~\ref{fig:confidence_dynamics} tracks the \ours{} confidence reward over training. The average confidence reward increases over training (top panel), indicating that the student progressively learns to generate tokens that lead to higher teacher confidence at the next position. The applied ratio (bottom panel) stabilizes around 10\%, showing that the entropy-triggered activation identifies a consistent fraction of high-entropy decision points. This stable activation ratio confirms that the entropy trigger effectively identifies positions where lookahead guidance is most needed, without requiring manual tuning over the course of training.

\subsection{Ablation Studies} \label{sec:ablation}

\textbf{Confidence metric comparison.} We compare four candidate confidence metrics for the group-normalized reward (Table~\ref{tab:ablation}a). Max-$p$ consistently outperforms the alternatives, achieving 46.67 on AIME-24 compared to 43.75 for Sampled-$p$, 42.08 for entropy, and 37.92 for PPL. The strong advantage of Max-$p$ aligns with Proposition~\ref{prop:maxp_proximity}: it directly measures the proximity to the teacher's in-distribution regime, while PPL and entropy are noisier proxies that can be inflated by irrelevant low-probability tokens.

\begin{table}[t]
\centering
\caption{\textbf{Ablation studies} on AIME benchmarks (Mean@8/Pass@8, 1.5B student). \textit{Left:} Confidence metric comparison. \textit{Right:} Sensitivity to confidence weight $\gamma$.}
\label{tab:ablation}
\begin{minipage}[t]{0.48\linewidth}
\centering
\textbf{(a) Confidence metric}\\[4pt]
\label{tab:confidence_metric}
\footnotesize
\begin{tabular}{lccc}
\toprule
\textbf{Metric} & \textbf{AIME-24} & \textbf{AIME-25} & \textbf{AIME-26} \\
\midrule
PPL         & 37.92/70.00 & 26.67/43.33 & 26.67/53.33 \\
Entropy     & 42.08/73.33 & 29.58/46.67 & 25.83/53.33 \\
Sampled-$p$ & 43.75/70.00 & 29.17/43.33 & 32.50/50.00 \\
Max-$p$     & \textbf{46.67/73.33} & \textbf{35.42/50.00} & \textbf{34.17/63.33} \\
\bottomrule
\end{tabular}
\end{minipage}
\hfill
\begin{minipage}[t]{0.48\linewidth}
\centering
\textbf{(b) Confidence weight $\gamma$}\\[4pt]
\label{tab:gamma}
\footnotesize
\begin{tabular}{lccc}
\toprule
$\gamma$ & \textbf{AIME-24} & \textbf{AIME-25} & \textbf{AIME-26} \\
\midrule
0.1   & 43.33/73.33 & 31.67/46.67 & 32.08/56.67 \\
1.0   & \textbf{46.67}/73.33 & \textbf{35.42/50.00} & \textbf{34.17/63.33} \\
1.5   & 44.58/\textbf{76.67} & 34.17/50.00 & 33.75/63.33 \\
10.0  & 37.08/70.00 & 30.00/40.00 & 28.33/50.00 \\
\bottomrule
\end{tabular}
\end{minipage}
\vspace{-1em}
\end{table}

\textbf{Sensitivity to confidence weight $\gamma$.} Table~\ref{tab:ablation}b illustrates how the choice of $\gamma$ affects the distillation outcomes. When $\gamma$ is as low as 0.1, the influence of the lookahead signal is negligible, and the performance remains close to the standard OPD baseline. The optimal results are achieved in the range of 1.0 to 1.5, where the confidence reward effectively guides the student toward states that maintain high supervision quality. In contrast, setting $\gamma$ to 10.0 causes a significant decline in accuracy. In this high weight regime, the optimization becomes susceptible to reward hacking, as the student model tends to generate repetitive token sequences that artificially inflate teacher confidence but lack the logical substance required to solve the task correctly.
\section{Related Work} \label{sec:related}


\textbf{On-Policy Distillation.}
Knowledge distillation~\citep{hinton2015distillingknowledgeneuralnetwork, ko2025distillm2contrastiveapproachboosts,DBLP:journals/corr/KimR16a,ye2026blackboxonpolicydistillationlarge,hübotter2026reinforcementlearningselfdistillation,kim2021comparingkullbackleiblerdivergencemean} transfers capabilities from teacher to student. For LLMs, training on teacher-generated (off-policy) data creates a distribution mismatch at inference time~\citep{bengio2015scheduledsamplingsequenceprediction, he2021exposurebiasversusselfrecovery,kim2026distillationlargelanguagemodels}. MiniLLM~\citep{gu2026minillmonpolicydistillationlarge} addresses this by minimizing reverse-KL on student-generated rollouts to avoid the mode-averaging problem of forward-KL; GKD~\citep{agarwal2024onpolicydistillationlanguagemodels} further demonstrates that on-policy generated data consistently outperforms off-policy training. These works establish OPD as the standard paradigm for reasoning model compression. Subsequent work has explored several directions. On \emph{training stability}: REOPOLD~\citep{ko2026scalingreasoningefficientlyrelaxed} stabilizes reverse-KL training via mixture-based reward clipping and entropy-guided token-level dynamic sampling. On \emph{objective design}: G-OPD~\citep{yang2026learningteachergeneralizedonpolicy} formalizes OPD as KL-constrained RL and proposes reward extrapolation to learn beyond the teacher ceiling; TSD-KD~\citep{kim2026explainwordsimprovingreasoning} applies KL loss selectively on high-entropy tokens where the student is genuinely uncertain, reducing noise from low-uncertainty positions. 

Two concurrent works are most closely related. Revisiting OPD~\citep{fu2026revisit} identify that OPD becomes unreliable on student generations and propose teacher top-$K$ support matching. Rethinking OPD~\citep{li2026rethinkingonpolicydistillationlarge} study OPD phenomenology and find that reward quality degrades with trajectory depth---consistent with our SFD analysis. Our work differs in two respects: (1) we provide a formal analysis of \emph{why} and \emph{where} supervision degrades---characterizing SFD as a position-dependent functional that worsens monotonically along student trajectories and establishes a reasoning length ceiling; and (2) we propose a one-step lookahead reward remedy that directly optimizes teacher confidence at the future position, orthogonal to divergence modification and applicable on top of OPD objective.

\section{Conclusion} \label{sec:conclusion}


We proposed Lookahead Group Reward, which augments standard reverse-KL distillation with a group-normalized confidence reward that directly optimizes the teacher's supervision capability functional. Unlike masking or truncation strategies that passively avoid weak-supervision regions, \ours{} actively steers the student toward trajectories where the teacher maintains high supervision fidelity. The entropy-triggered tree-attention mechanism makes this approach computationally practical. Achieving a 1000$\times$ speedup compared to a naïve implementation.


\textbf{Limitations and future work.} First, \ours{} requires white-box access to the teacher model's logits, which may not always be available. Second, the confidence reward relies on an implicit assumption that the teacher's \emph{relative ranking} of candidate tokens remains informative even when its absolute predictions are unreliable. While our group normalization design and empirical results support this assumption, it may weaken for extremely out-of-distribution student trajectories or poorly calibrated teachers. Third, the current design uses a fixed number of candidates $K$ and a static entropy threshold $\tau$; dynamically adjusting these based on training progress could improve both efficiency and effectiveness. Finally, extending the confidence reward framework to multi-modal reasoning settings represents a promising direction.

\newpage
\bibliographystyle{nips}
\bibliography{references}

\appendix

\section*{Technical Appendices and Supplementary Material}

\noindent\textbf{Table of Contents}
\begin{enumerate}[label=\Alph*.,leftmargin=*]
    \item \hyperref[app:training_details]{Training and Evaluation Details} \dotfill \pageref{app:training_details}
    \item \hyperref[app:additional_exp]{Additional Experimental Results} \dotfill \pageref{app:additional_exp}
    \begin{enumerate}[label=\Alph{enumi}.\arabic*.,leftmargin=*]
        \item \hyperref[app:exp:unnorm]{Effect of Renormalization in Top-$K$ Training} \dotfill \pageref{app:exp:unnorm}
        \item \hyperref[app:exp:temp]{Effect of Rollout Temperature} \dotfill \pageref{app:exp:temp}
        \item \hyperref[app:exp:topk_vis]{Per-Token Top-$K$ Logit Visualization} \dotfill \pageref{app:exp:topk_vis}
        \item \hyperref[app:exp:future_kl]{Future-KL with GAE Weighting} \dotfill \pageref{app:exp:future_kl}
        \item \hyperref[app:exp:distil_kl]{Comparison of KL Divergence Objectives} \dotfill \pageref{app:exp:distil_kl}
    \end{enumerate}
    \item \hyperref[app:sigmoid]{Sigmoid Fit of the SFD Curve} \dotfill \pageref{app:sigmoid}
    \item \hyperref[app:obs:trigger]{Entropy-Triggered Activation is Near-Lossless} \dotfill \pageref{app:obs:trigger}
    \item \hyperref[app:tree_attn]{Tree Attention: Cost Analysis and Practical Segmentation} \dotfill \pageref{app:tree_attn}
    \item \hyperref[app:proofs]{Proofs of Theoretical Results} \dotfill \pageref{app:proofs}
    \begin{enumerate}[label=\Alph{enumi}.\arabic*.,leftmargin=*]
        \item \hyperref[app:proof:snr]{Proof of Proposition~\ref{prop:snr} (Teacher Signal Vanishing under SFD)} \dotfill \pageref{app:proof:snr}
        \item \hyperref[app:proof:drift]{Proof Sketch of Proposition~\ref{prop:drift} (Self-Reinforcing Drift under Reverse-KL)} \dotfill \pageref{app:proof:drift}
        \item \hyperref[app:proof:one_step]{Proof of Proposition~\ref{prop:one_step} (One-Step-Ahead Discriminability)} \dotfill \pageref{app:proof:one_step}
        \item \hyperref[app:proof:maxp]{Proof of Proposition~\ref{prop:maxp_proximity} (Max-$p$ as Relative Drift Indicator)} \dotfill \pageref{app:proof:maxp}
        \item \hyperref[app:proof:groupnorm]{Group Normalization: Design Rationale and Formal Properties} \dotfill \pageref{app:proof:groupnorm}
    \end{enumerate}
\end{enumerate}


\section{Training and Evaluation Details} \label{app:training_details}

\textbf{Training framework.} All models are trained on 4 nodes of 8$\times$H20 GPUs (32 GPUs total) with the SLIME framework~\citep{slime_github}, using SGLang~\citep{zheng2024sglangefficientexecutionstructured} as the inference backend. We made two adaptations to SGLang: (1) we extended it to support tree-structured attention masks, enabling the single-pass multi-candidate teacher evaluation described in Section~\ref{sec:entropy_trigger}; (2) we patched SGLang to return temperature-scaled logits for the prefill portion, which was required for experiments exploring non-unit rollout temperatures (see below).

\textbf{Training data.} For math training we use the Polaris dataset; for code training we use the DeepScaler code dataset. This applies uniformly across all student model sizes.

\textbf{Rollout temperature.} All experiments use rollout temperature~$= 1.0$ for both student and teacher. We also evaluated two non-unit configurations: (1) $t_{\mathrm{student}}=1, t_{\mathrm{teacher}}=0.2$ (sharpened teacher distribution); (2) $t_{\mathrm{student}}=0.8, t_{\mathrm{teacher}}=0.8$ (symmetric low temperature). Both configurations degraded performance. We therefore fix temperature~$= 1.0$ across all methods and settings.

\textbf{Baseline-specific configurations.}
\begin{itemize}[leftmargin=*,topsep=2pt,itemsep=2pt]
    \item \textbf{GRPO:} Rather than training GRPO from scratch, we report results from publicly available RL-trained checkpoints to ensure a fair and reproducible comparison. For the \textbf{1.5B student} setting, we use DeepScaler-1.5B, which was obtained by applying GRPO to DeepSeek-R1-Distill-Qwen-1.5B on a large-scale math dataset. For the \textbf{7B student} setting, we use SkyWork-OR1-Math-7B, which was obtained by applying GRPO to DeepSeek-R1-Distill-Qwen-7B. Both checkpoints share the same base model as the corresponding distillation experiments, making the comparison directly controlled for initialization.
    \item \textbf{OPD (topk) and \ourstopk{}:} The per-token top-$K$ candidate set is the union of the student's top-10 and teacher's top-10 tokens, renormalized to a valid probability distribution. Renormalization is essential: without it the joint candidate set has inconsistent probability mass and training diverges. The union construction also guarantees that teacher-preferred tokens are always included even when the student assigns them low probability.
    \item \textbf{REOPOLD:} We follow the original training configuration but set staleness~$= 1$ (fully on-policy). The original paper uses staleness~$= 4$; we found this setting to be unstable in our experiments, likely because the larger policy lag interacts poorly with the entropy-triggered reward clipping mechanism.
    \item \textbf{JSD:} We use mixture coefficient $\beta = 0.2$ (i.e., $\pi_{\mathrm{mix}} = 0.2\,\pi_T + 0.8\,\pi_\theta$).
\end{itemize}

\begin{table}[h]
\centering
\caption{Training hyperparameters for the two student configurations.}
\label{tab:training_details}
\begin{tabular}{lcc}
\toprule
\textbf{Hyperparameter} & \textbf{1.5B Student} & \textbf{7B Student} \\
\midrule
Optimizer               & AdamW                 & AdamW               \\
Adam $\beta_1$, $\beta_2$ & 0.9,~0.98           & 0.9,~0.98           \\
Weight decay            & 0.1                   & 0.1                 \\
Learning rate           & $1 \times 10^{-5}$    & $3 \times 10^{-6}$  \\
Batch size              & 128                   & 128                 \\
Warmup steps            & 0                     & 0                   \\
Training steps          & 300                   & 300                 \\
Rollout temperature     & 1.0                   & 1.0                 \\
Rollout samples per prompt & 1                 & 1                   \\
\midrule
\multicolumn{3}{l}{\textit{\ours{}-specific}} \\
Top-$K$ candidates ($K$) & 8                   & 8                   \\
Entropy threshold ($\tau$) & 0.2              & 0.2                 \\
Confidence reward weight ($\gamma$) & 1.0    & 1.0                 \\
Tree attention segments ($N$) & 8            & 8                   \\
\bottomrule
\end{tabular}
\end{table}

\textbf{Evaluation.} All models are evaluated with a maximum generation length of 39k tokens for math benchmarks and 36k tokens for code benchmarks (code prompts are longer, leaving less budget for generation), at temperature~$= 1$. We report mean@8 and pass@8 across 8 responses per problem.

\textbf{On the choice of student models.} Our main experiments use DeepSeek-R1-Distill models as students (1.5B and 7B), which are initialized from a base model via supervised fine-tuning on reasoning traces. We also experimented with applying OPD to Qwen3 reasoning models (specifically, distilling from Qwen3-32B to Qwen3-1.7B), but found that training is highly unstable: performance initially improves but then degrades progressively rather than converging---a pattern visible in Figures~\ref{fig:unnorm} and~\ref{fig:temp}, where even the best-configured runs show initial gains followed by gradual degradation. We attribute this to the nature of the Qwen3 reasoning models, which are the result of extensive integrated multi-domain RL fusion applied to a broad mixture of tasks---yielding a well-balanced student that has already converged on the teacher's distribution across diverse contexts. When such a model is used as a student in further OPD, the training signal is dominated by the small residual distribution mismatch rather than systematic supervision failures, making the optimization landscape highly sensitive and difficult to stabilize. Note that our results differ significantly from those reported by Thinking Machine Lab on similar model families: their experiments use a base model or an SFT-from-base model as the student, rather than a fully trained reasoning model, which explains the more stable training dynamics they observe.

In contrast, DeepSeek-R1-Distill models are derived from a base model through reasoning-focused supervised learning, without the breadth of integrated multi-domain RL fusion. This leaves meaningful room for OPD to provide corrective supervision and makes the SFD phenomenon clearly observable---as the student has not already adapted to the teacher's distribution on student-generated prefixes. Using this model family therefore provides a cleaner testbed for diagnosing and addressing SFD, and the instability observed on Qwen3 further underscores the importance of student model selection in OPD experiments.


\section{Additional Experimental Results} \label{app:additional_exp}

\subsection{Homogeneous vs.\ Heterogeneous Teacher--Student Pairs} \label{app:exp:homo_hetero}

In on-policy distillation, the teacher and student can either share the same base model (\emph{homogeneous}) or originate from different model families (\emph{heterogeneous})~\citep{lin-etal-2020-autoregressive,patiño2025_unlocking_on_policy_distillation_for_any_model_family}. In the homogeneous setting, the teacher is obtained by further training the student base model via RLVR, so the two models share identical tokenization and pretraining priors---the student's on-policy distribution is close to the teacher's from the outset. In the heterogeneous setting (used throughout this paper), the student and teacher come from different base models, introducing a structural distribution gap that is present even before any distillation training begins.

Figures~\ref{fig:homo_scatter} and~\ref{fig:hetero_scatter} compare the joint distribution of per-token student log-probability and teacher log-probability at training steps 0, 40, and 80 for the two settings. In the homogeneous case, the scatter plots show a tight correlation between student and teacher log-probabilities throughout training: the student's on-policy tokens remain well within the teacher's in-distribution region, and this alignment persists even as training progresses. In the heterogeneous case, the scatter is substantially wider at all steps, with the student frequently visiting token positions where the teacher assigns low probability---precisely the regime in which supervision fidelity degrades. This structural distribution gap is why the SFD phenomenon is clearly observable in the heterogeneous setting, and why our experiments adopt this configuration as the primary testbed.

\begin{figure}[h]
    \centering
    \subfigure[Step 0]{\includegraphics[width=0.31\textwidth]{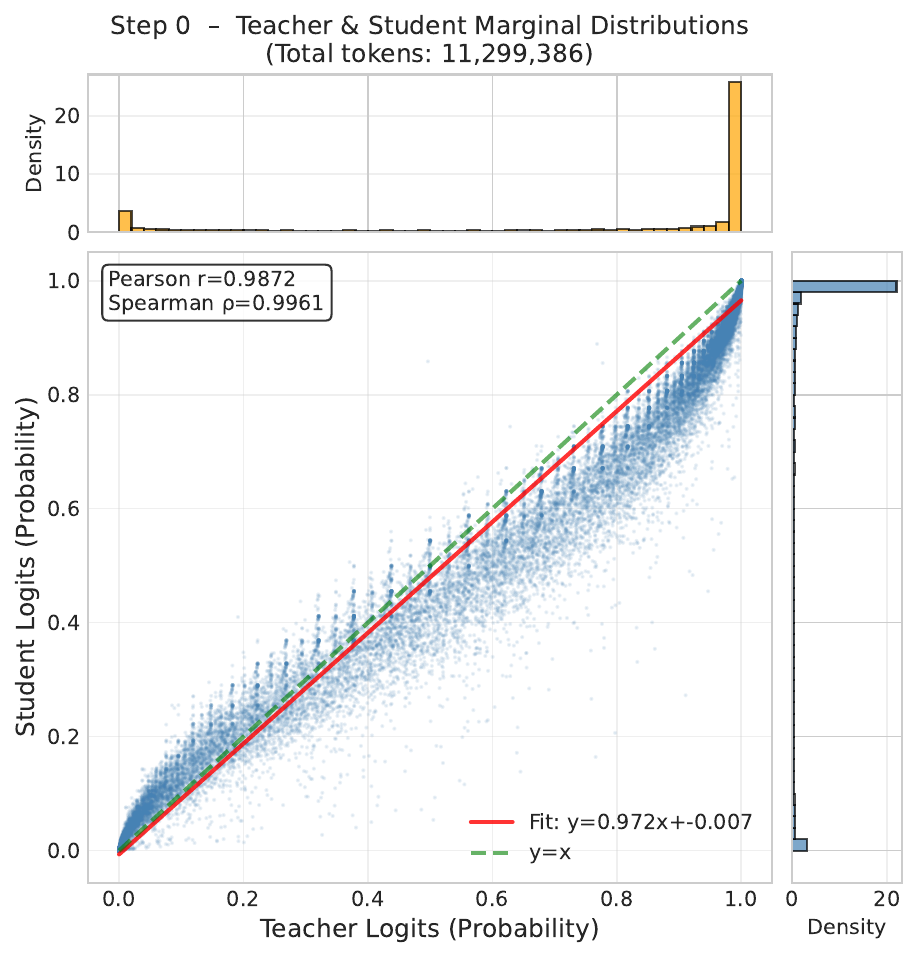}}
    \hfill
    \subfigure[Step 40]{\includegraphics[width=0.31\textwidth]{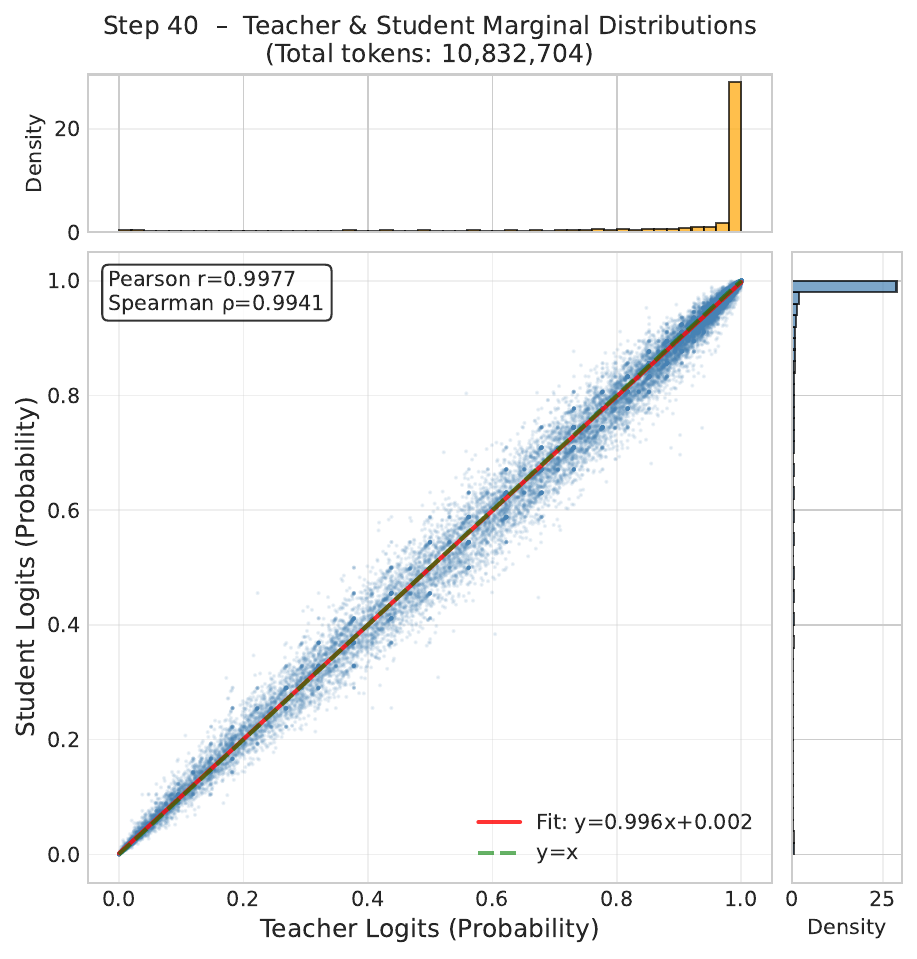}}
    \hfill
    \subfigure[Step 80]{\includegraphics[width=0.31\textwidth]{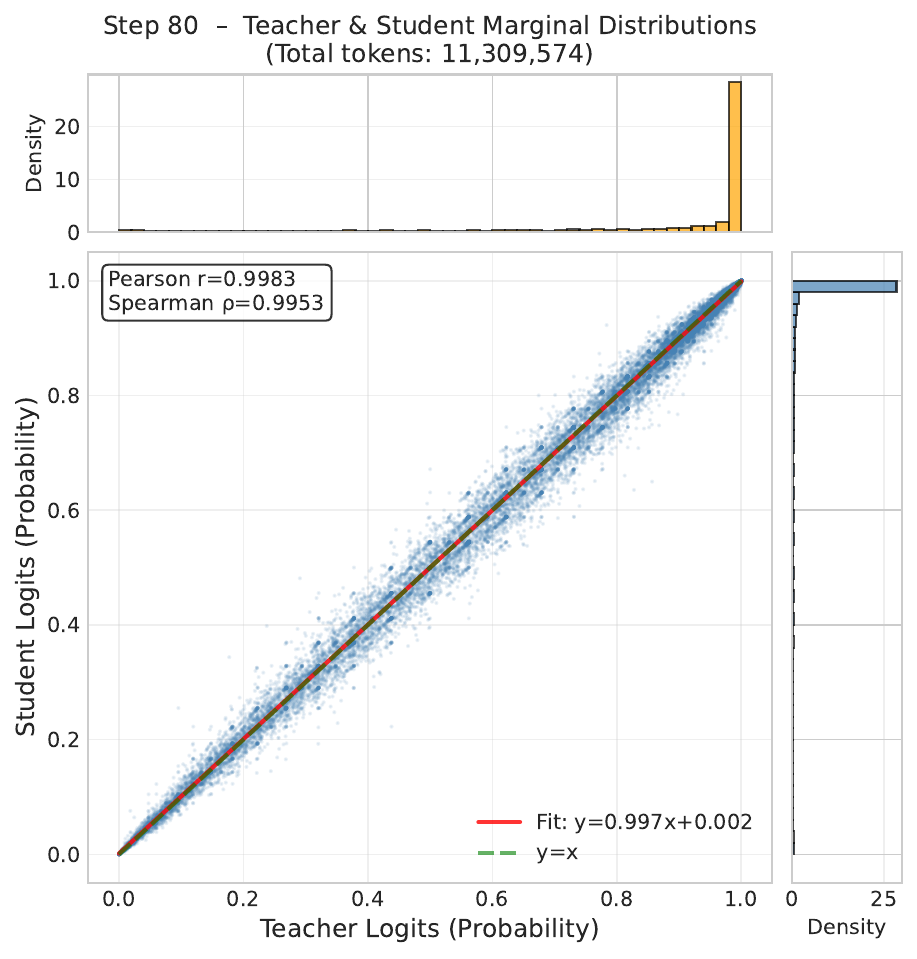}}
    \caption{Homogeneous teacher--student pair: joint distribution of per-token student log-probability vs.\ teacher log-probability at training steps 0, 40, and 80. The teacher is obtained by further RLVR training from the same student base model, so the two models share pretraining priors and the student tokens remain well within the teacher's in-distribution region throughout training.}
    \label{fig:homo_scatter}
\end{figure}

\begin{figure}[h]
    \centering
    \subfigure[Step 0]{\includegraphics[width=0.31\textwidth]{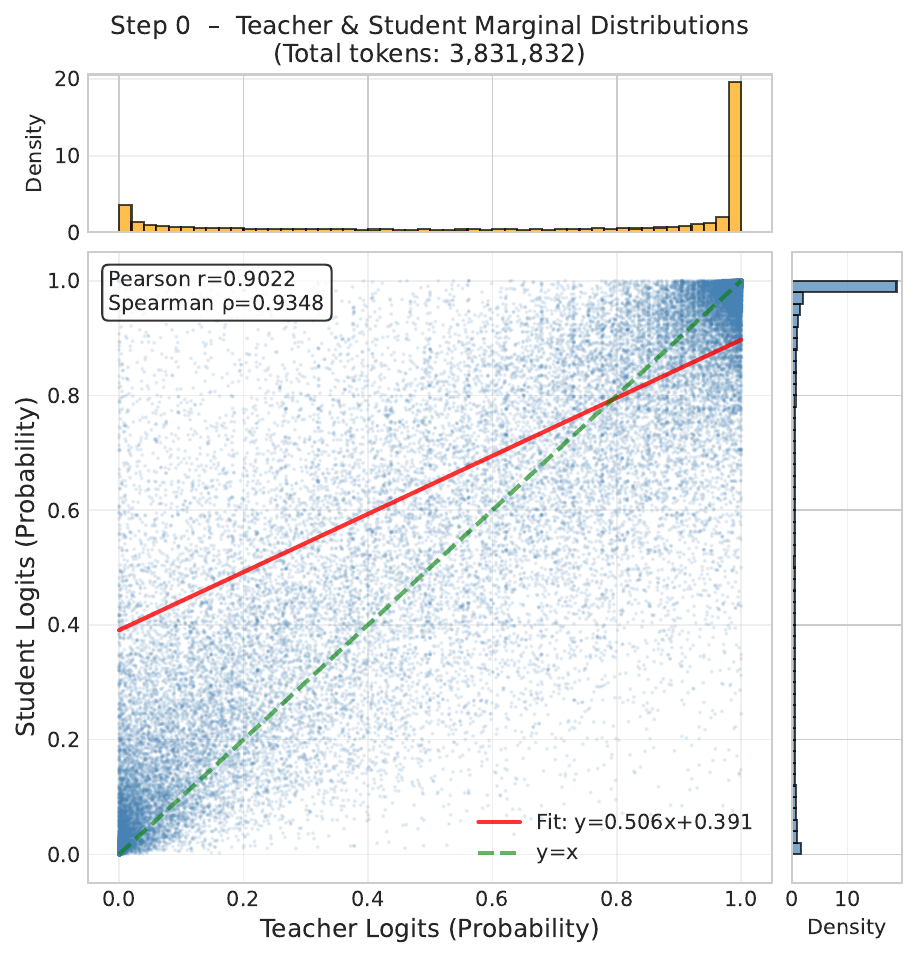}}
    \hfill
    \subfigure[Step 40]{\includegraphics[width=0.31\textwidth]{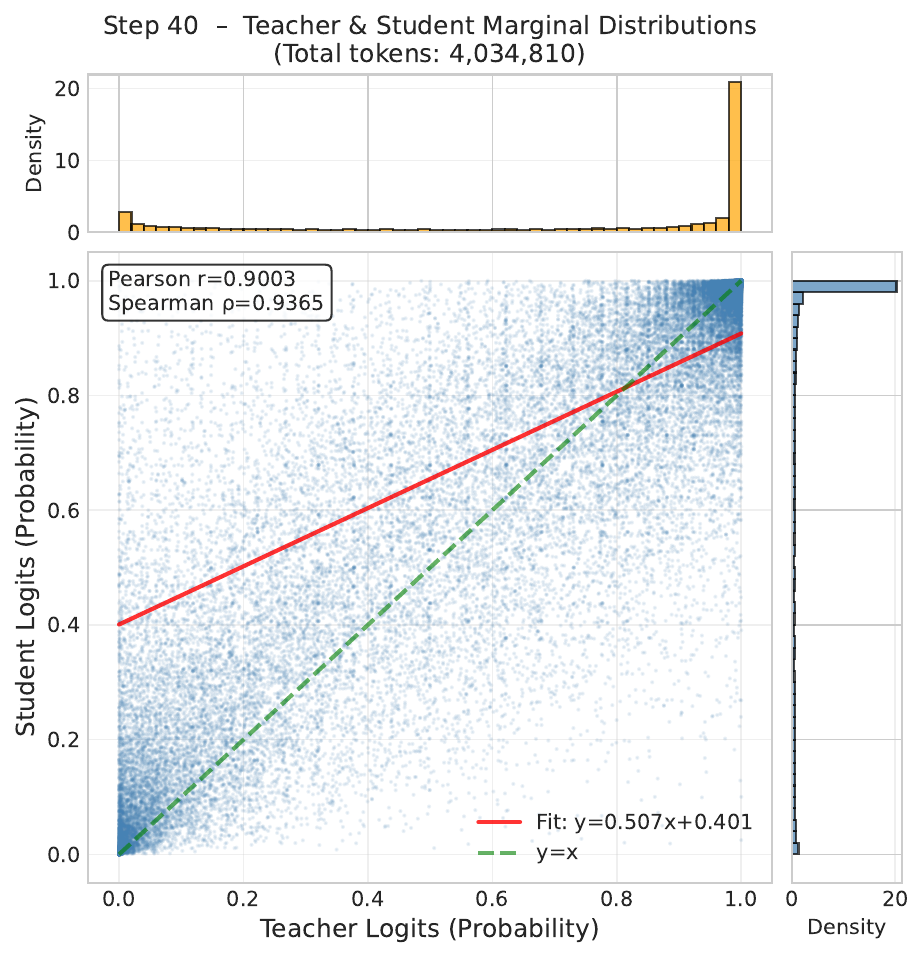}}
    \hfill
    \subfigure[Step 80]{\includegraphics[width=0.31\textwidth]{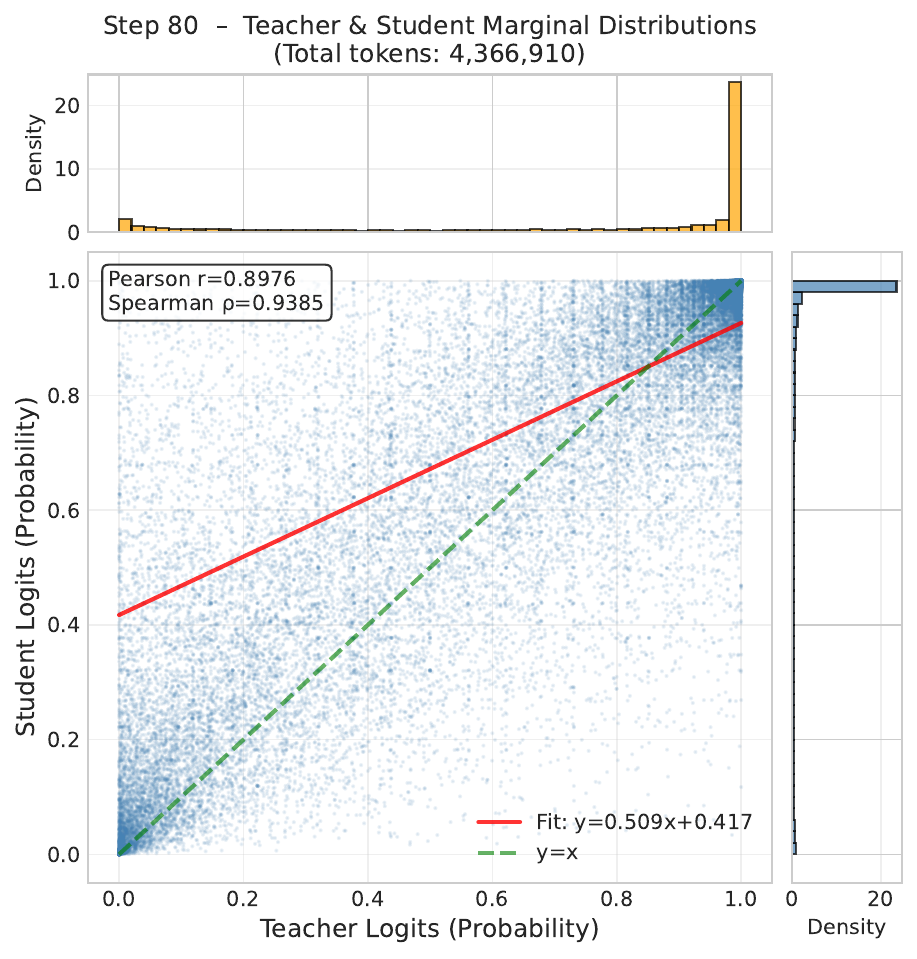}}
    \caption{Heterogeneous teacher--student pair: joint distribution of per-token student log-probability vs.\ teacher log-probability at training steps 0, 40, and 80. The teacher and student originate from different base model families. The scatter is substantially wider than the homogeneous case, indicating that the student frequently generates tokens in low-probability regions of the teacher's distribution---the primary driver of supervision fidelity decay.}
    \label{fig:hetero_scatter}
\end{figure}

\subsection{Effect of Renormalization in Top-$K$ Training} \label{app:exp:unnorm}

In OPD (topk) and \ourstopk{}, each token's distribution is restricted to the union of the student's top-10 and teacher's top-10 candidates, and the resulting truncated distribution is renormalized to sum to one before computing the KL loss. Here we ablate the necessity of this renormalization step by training the same models without it---i.e., computing the KL directly over the raw unnormalized top-$K$ union probabilities.

Figure~\ref{fig:unnorm} compares OPD (topk) with and without renormalization on the Qwen3-1.7B student configuration across AIME benchmarks. We make three observations.

\textbf{Unnormalized training shows higher teacher and student log-probabilities.} Counterintuitively, models trained without renormalization exhibit \emph{higher} teacher and student log-probabilities throughout training. We attribute this to an artifact of the truncation: without renormalization, the top-$K$ probabilities do not sum to one, which effectively inflates the absolute probability of each candidate token. During rollout sampling this inflation makes it \emph{easier} to draw low-probability tokens, distorting the on-policy distribution and masking the true quality of the supervision signal.

\textbf{Performance improves with larger top-$K$ but never matches renormalized training.} Among the unnormalized variants, increasing the total top-$K$ size (top-5 $\to$ top-10) improves performance, suggesting that a wider candidate set provides more signal. However, even at the same total top-$K$ count, unnormalized training consistently underperforms its renormalized counterpart. This is because without renormalization, all probability mass assigned to tokens \emph{outside} the top-$K$ union is completely invisible to the KL loss---the model can shift mass to out-of-vocabulary positions without incurring any penalty.

\textbf{Renormalization closes an optimization loophole.} Without renormalization, the model can learn a degenerate strategy: push probability mass outside the top-$K$ support (escaping KL penalty entirely), while simultaneously lowering the absolute probabilities of tokens within the top-$K$ set, making the measured KL appear small. With renormalization, any mass that leaks outside the top-$K$ union is implicitly redistributed back by the normalization step---the normalized probabilities within the top-$K$ set rise whenever out-of-support mass increases, removing any incentive for this exploit and forcing the student to genuinely match the teacher on the retained candidates.

\begin{figure}[h]
    \centering
    \includegraphics[width=\textwidth]{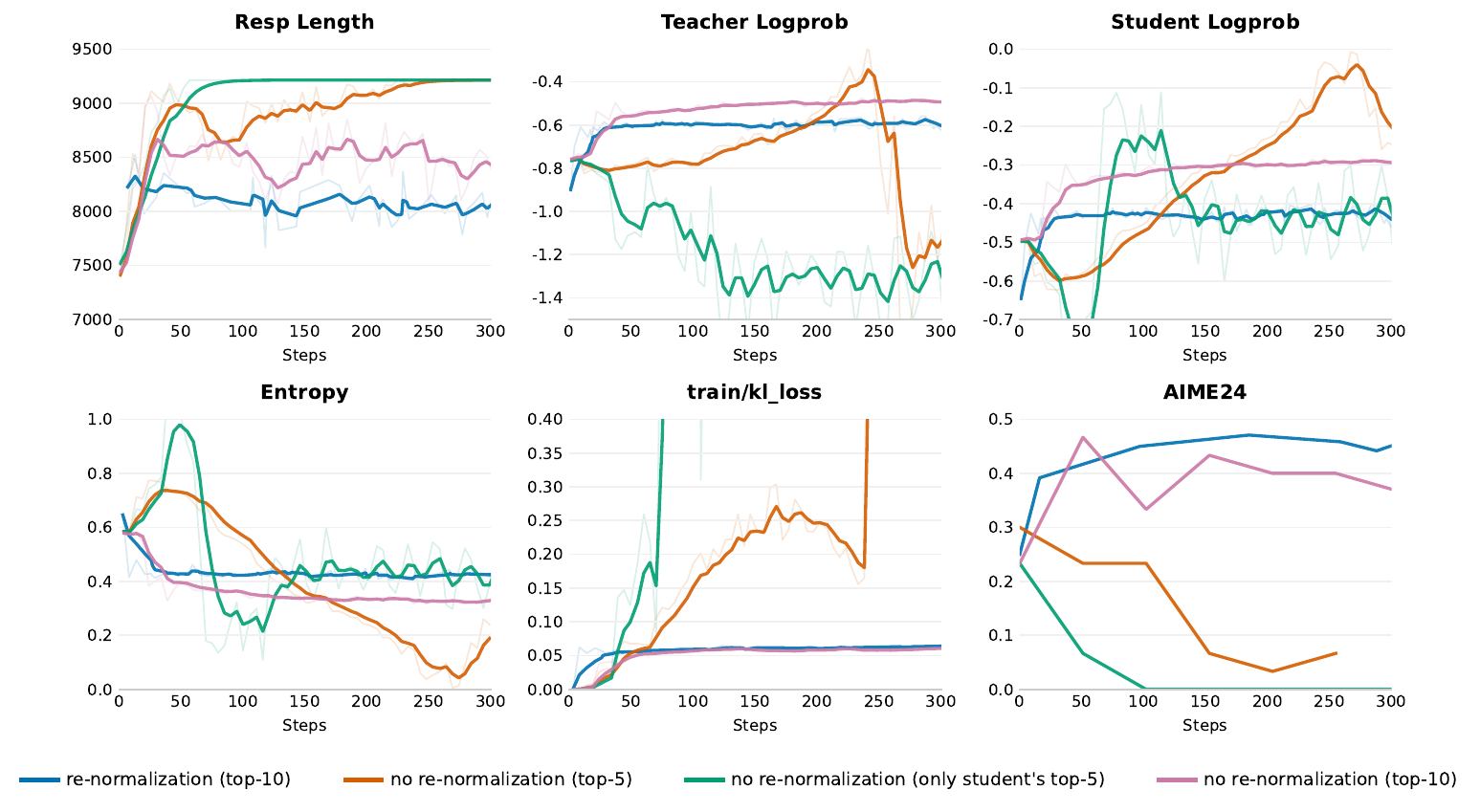}
    \caption{Effect of renormalization in top-$K$ distillation training (Qwen3-1.7B student). Training curves and AIME-24 performance for OPD (topk) with renormalization vs.\ three unnormalized variants (top-5, top-10, student-only top-5). Without renormalization, training is unstable and final performance degrades substantially.}
    \label{fig:unnorm}
\end{figure}

\subsection{Effect of Rollout Temperature} \label{app:exp:temp}

All main experiments fix the rollout temperature to $1.0$ for both the student and the teacher. We additionally evaluate two non-unit configurations: (1) $t_{\mathrm{student}}=1, t_{\mathrm{teacher}}=0.2$ (sharpened teacher distribution); (2) $t_{\mathrm{student}}=0.8, t_{\mathrm{teacher}}=0.8$ (symmetric low temperature).

\textbf{$t_{\mathrm{student}}=1, t_{\mathrm{teacher}}=0.2$.} A lower teacher temperature concentrates teacher probability mass sharply on its top tokens. Intuitively this sharpens the supervision signal, but it also increases the mismatch between the temperature-conditioned teacher logits and the student's on-policy distribution, causing training instability.

\textbf{$t_{\mathrm{student}}=0.8, t_{\mathrm{teacher}}=0.8$.} Lowering both temperatures symmetrically reduces generation diversity and dilutes the discriminative signal between high- and low-confidence token choices, weakening the confidence reward.

Figure~\ref{fig:temp} shows training curves and final evaluation scores under both settings compared to the default temperature-$1.0$ baseline. Both non-unit configurations degrade performance, confirming that the symmetric unit temperature is the best operating point and that the SGLang prefill temperature-scaling patch (Appendix~\ref{app:training_details}) is not needed in the final system.

\begin{figure}[h]
    \centering
    \includegraphics[width=\textwidth]{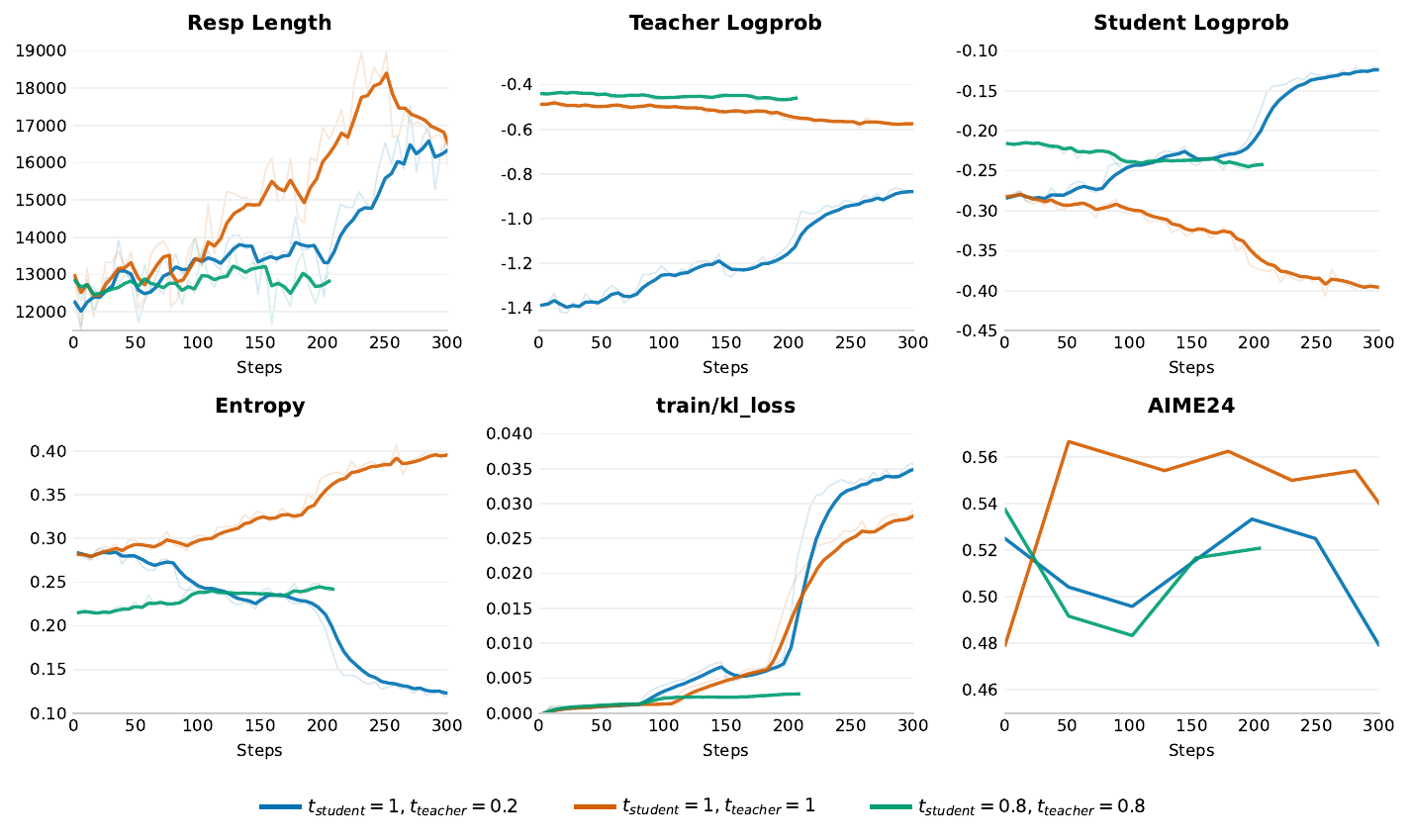}
    \caption{Effect of rollout temperature on training dynamics and final AIME-24 performance (Qwen3-1.7B student). Conditions: $t_{\mathrm{student}}=1, t_{\mathrm{teacher}}=1$ (default), $t_{\mathrm{student}}=1, t_{\mathrm{teacher}}=0.2$ (sharpened teacher), and $t_{\mathrm{student}}=0.8, t_{\mathrm{teacher}}=0.8$ (symmetric low temperature). Both non-unit configurations degrade performance, confirming that symmetric unit temperature is the optimal operating point.}
    \label{fig:temp}
\end{figure}

\subsection{Per-Token Top-$K$ Logit Visualization} \label{app:exp:topk_vis}

To understand \emph{why} per-token RKL remains stubbornly high at certain positions even after an OPD gradient step, we visualize the full top-$K$ logit distribution of the student at individual tokens, comparing the distributions before and after each optimization step.

\textbf{High-RKL tokens have dispersed student top-$K$ logits.} Figure~\ref{fig:topk_vis} shows representative tokens from a student rollout late in training. For tokens where the RKL is large and remains large after the update step, the student's top-$K$ probability mass is spread relatively uniformly across many candidates---the student is genuinely uncertain, and no single token dominates. The teacher, by contrast, concentrates mass sharply on one or two tokens. The gradient step reduces the KL slightly but cannot collapse the student distribution in a single step given the flat landscape.

\textbf{Low-RKL tokens have logits concentrated at top-1.} Tokens that achieve low RKL exhibit the opposite pattern: the student distribution is already sharply peaked at the same top-1 token as the teacher. These tokens contribute near-zero loss and near-zero gradient.

\textbf{Persistent high-RKL late in training.} As training progresses, the proportion of low-RKL (peaked) tokens grows, but a residual population of high-RKL (dispersed) tokens persists and proves resistant to further optimization. Crucially, in the standard per-token average loss, the large and growing majority of low-RKL tokens effectively act as a low-magnitude denominator that dilutes the gradient contribution of the remaining high-RKL tokens. Each high-RKL token's gradient is upweighted in absolute terms but its relative influence in the batch average decreases as low-RKL tokens accumulate---creating a natural but undesirable gradient imbalance.

\begin{figure}[h]
    \centering
    \subfigure{\includegraphics[width=0.49\textwidth]{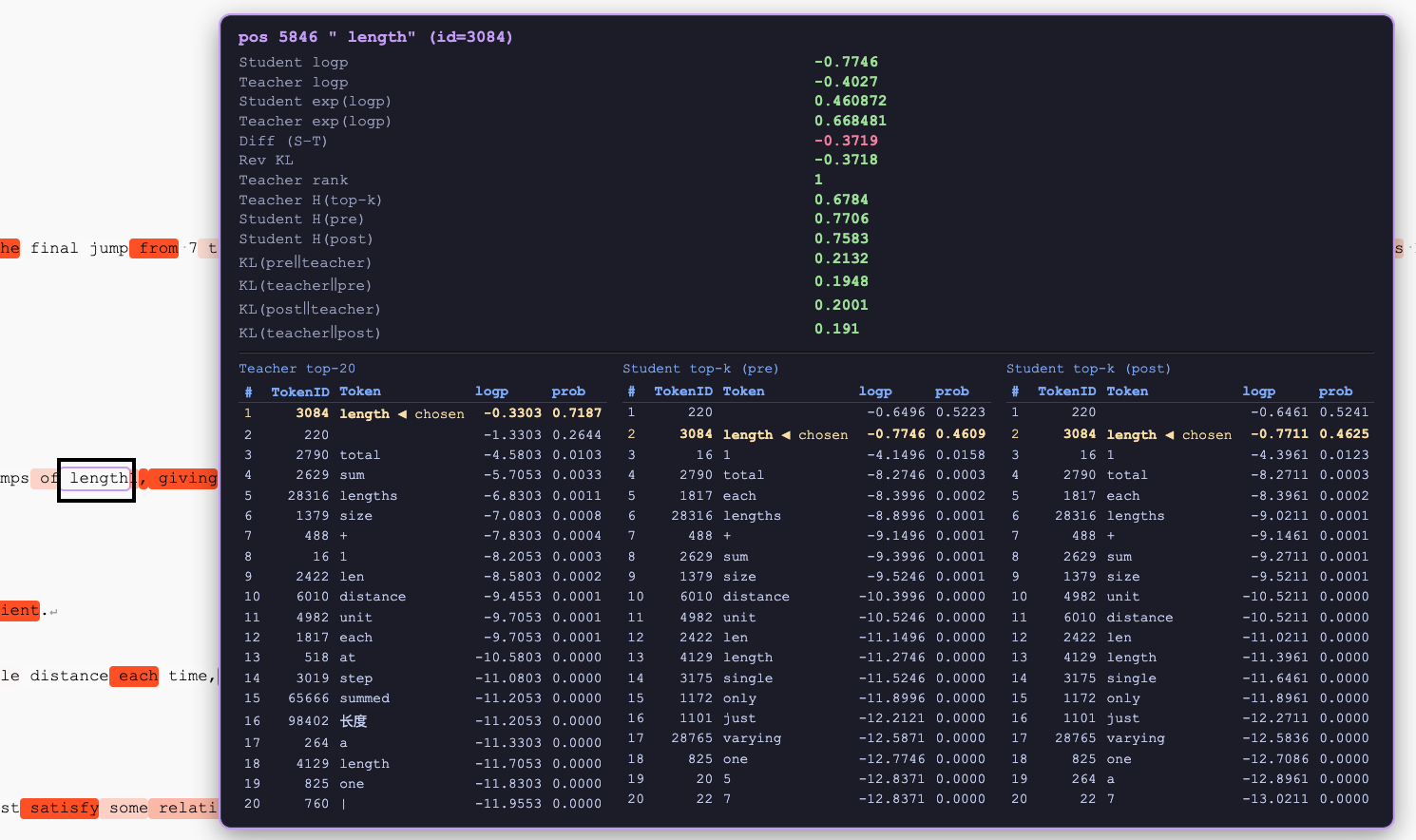}}
    \hfill
    \subfigure{\includegraphics[width=0.49\textwidth]{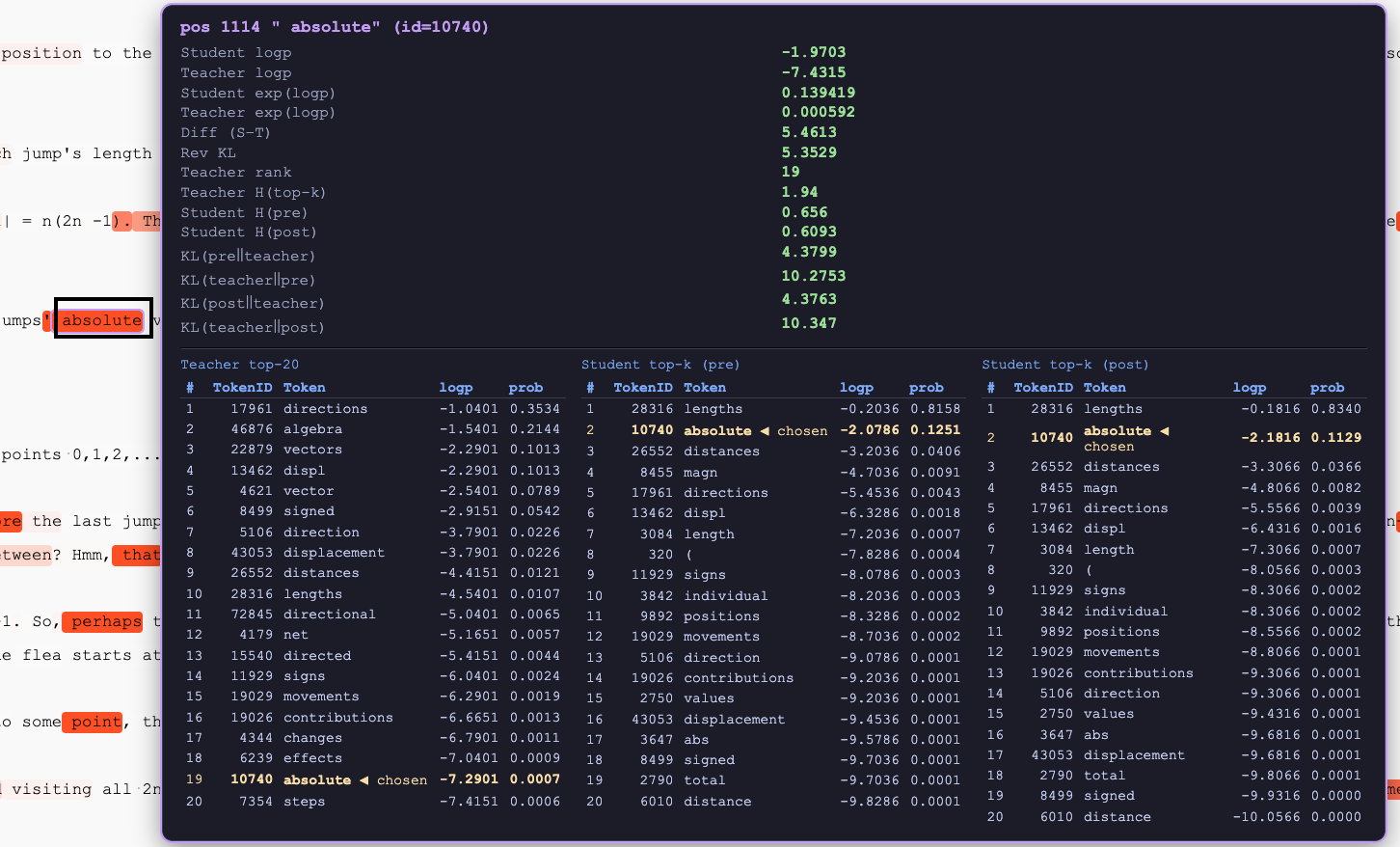}}
    \vspace{0.5em}
    \subfigure{\includegraphics[width=0.49\textwidth]{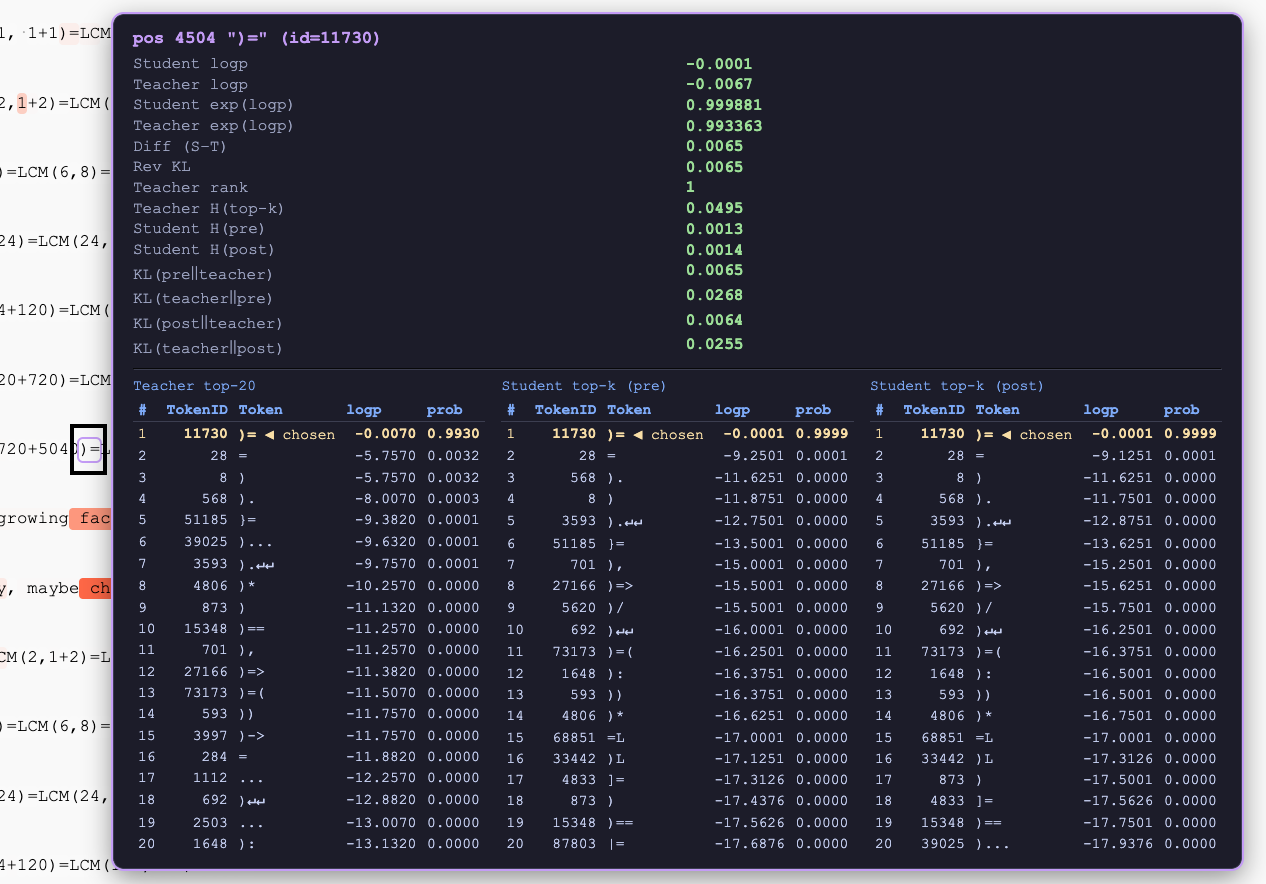}}
    \hfill
    \subfigure{\includegraphics[width=0.49\textwidth]{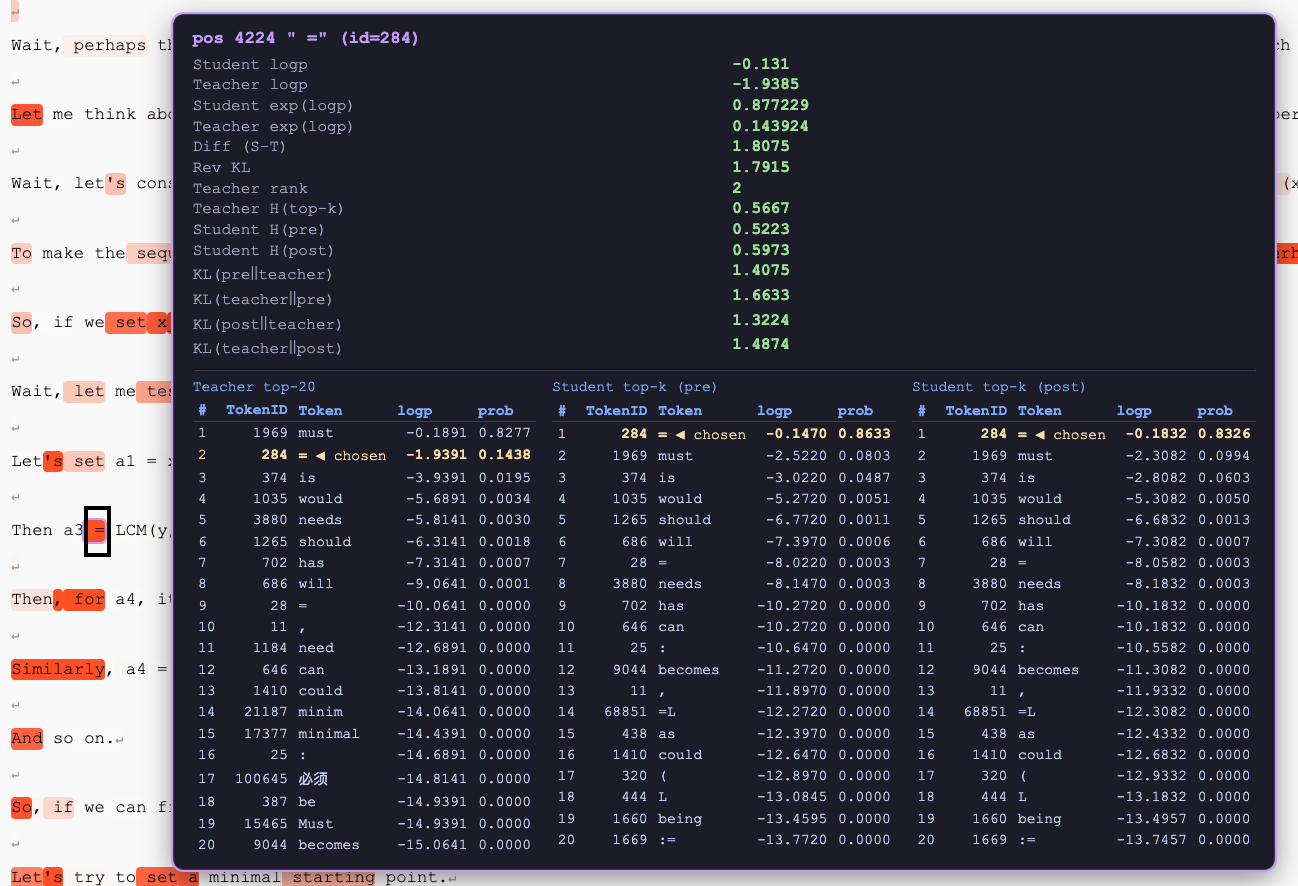}}
    \caption{Per-token top-$K$ student logit distributions before and after an OPD update step for four representative tokens from a student rollout late in training (DeepSeek-R1-Distill-Qwen-1.5B student). Each panel shows the token context, student top-$K$ logits and probabilities, teacher top-$K$ logits, and the per-token KL divergence before and after the gradient step. High-RKL tokens exhibit dispersed student probability mass across many candidates, while the teacher concentrates sharply on one or two tokens; the gradient step reduces the KL only marginally, consistent with the flat optimization landscape described in Section~\ref{app:exp:topk_vis}.}
    \label{fig:topk_vis}
\end{figure}

\subsection{Future-KL with GAE Weighting} \label{app:exp:future_kl}

Motivated by the credit assignment literature in RL, we explore an alternative to the local per-token RKL loss: rather than penalizing the student only for the divergence at position $t$, we compute a \emph{future-KL} signal by aggregating the RKL over all future tokens $t' > t$ and weighting them with a generalized advantage estimate (GAE, $\lambda$-return) decay:
\begin{equation}
    \mathcal{L}_{\mathrm{future\text{-}KL}}^{(t)} = \sum_{t'=t}^{T} (\rho\lambda)^{t'-t}\,\mathrm{RKL}\!\left(\pi_\theta(\cdot \mid x_{<t'})\,\|\,\pi_T(\cdot \mid x_{<t'})\right),
\end{equation}
where $\rho$ is a discount factor and $\lambda$ is the GAE trace-decay parameter. \textbf{Note that this $\lambda$ is the GAE eligibility-trace coefficient and is distinct from the confidence reward weight $\gamma$ in \ours{}; the two hyperparameters play entirely different roles.} Intuitively, this encourages the student to make choices at position $t$ that lead to lower KL \emph{throughout} the trajectory, not just locally---which could in principle counteract the self-reinforcing drift described in Proposition~\ref{prop:drift}.

Figure~\ref{fig:future_kl} compares future-KL distillation against standard OPD under a sweep of $(\rho, \lambda)$ pairs. Despite the appealing intuition, future-KL consistently fails to improve over standard per-token RKL. We hypothesize that the difficulty lies in the credit assignment itself: in long reasoning chains, the future-KL signal at early positions is dominated by the noise accumulated over hundreds of subsequent tokens, making the gradient at position $t$ effectively uninformative about the local decision quality. This suggests that the right way to address long-horizon supervision degradation is through the one-step-ahead lookahead used in \ours{}, rather than through discounted future aggregation.

\begin{figure}[h]
    \centering
    \includegraphics[width=\textwidth]{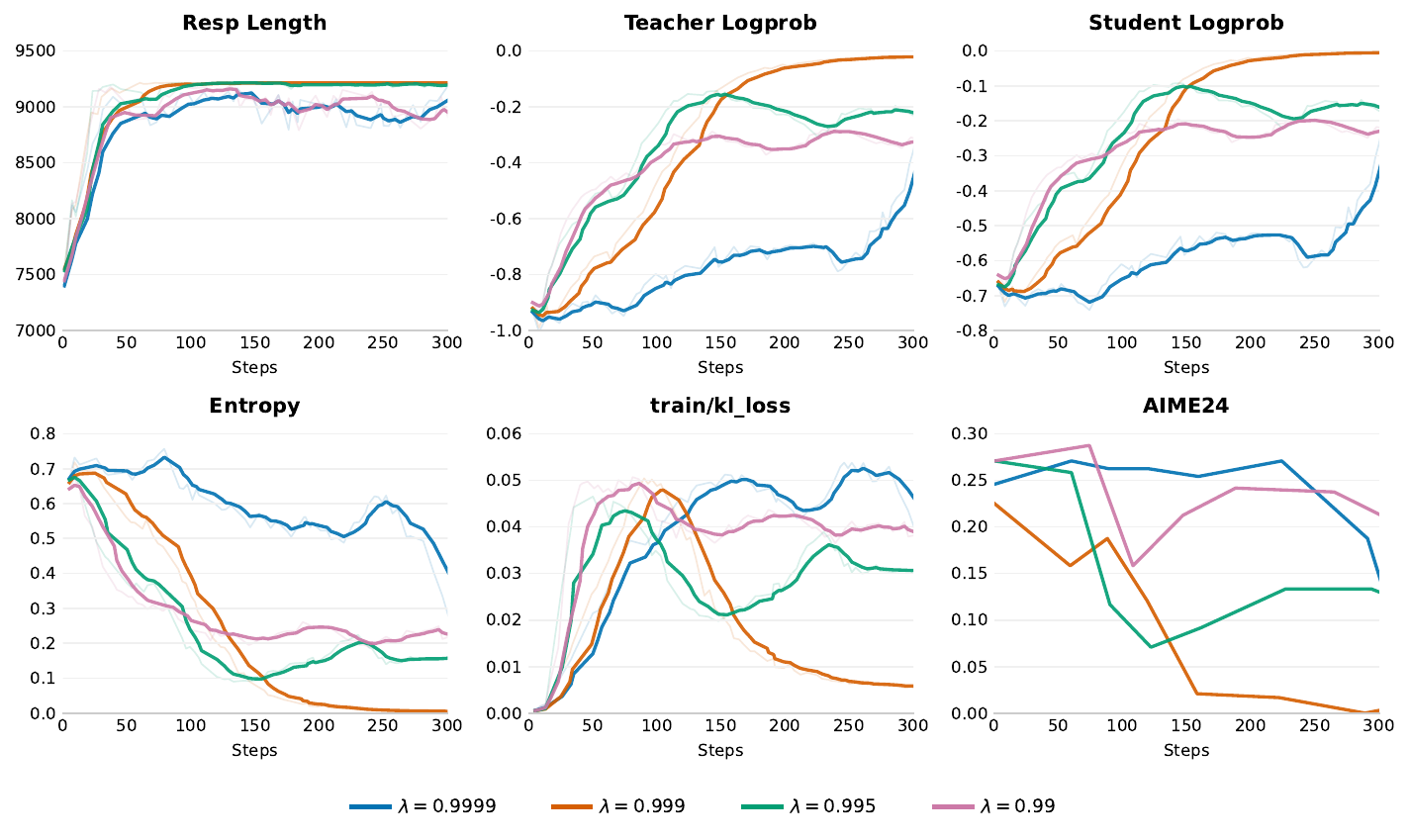}
    \caption{Future-KL distillation with GAE weighting across discount factors $\gamma \in \{0.9999, 0.999, 0.995, 0.99\}$, compared to standard per-token OPD (DeepSeek-R1-Distill-Qwen-1.5B student). Despite varying the discount factor across a wide range, future-KL consistently fails to improve over the per-token RKL baseline on AIME-24, with more aggressive discounting ($\gamma=0.999, 0.995$) causing entropy collapse and severe performance degradation.}
    \label{fig:future_kl}
\end{figure}

\subsection{Comparison of KL Divergence Objectives} \label{app:exp:distil_kl}

We compare three on-policy distillation objectives that differ in the direction of the KL divergence: reverse-KL (OPD/RKLD), forward-KL (FKLD), and Jensen--Shannon divergence (JSD). Figure~\ref{fig:distil_kl} tracks training dynamics and AIME-24 performance across all three.

\textbf{Forward-KL collapses.} FKLD training is highly unstable: teacher log-probability degrades sharply after $\sim$50 steps, entropy explodes, and AIME-24 performance drops toward zero and does not recover. This confirms the exposure bias problem of forward-KL in the on-policy setting---the student is forced to cover the full teacher distribution using its own generated prefixes, leading to mode-covering behavior that pushes the student out of distribution.

\textbf{JSD provides moderate stability but underperforms OPD.} JSD achieves intermediate stability: entropy grows moderately and teacher log-probability declines more gradually than FKLD. However, final AIME-24 performance remains below OPD(RKLD), consistent with the main table results. The mixture objective partially inherits the forward-KL instability without the full benefits of reverse-KL's mode-seeking behavior.

\textbf{OPD (RKLD) is the strongest baseline.} Reverse-KL maintains the lowest distill KL, highest teacher log-probability, and stable entropy throughout training, confirming it as the appropriate base objective---and motivating \ours{} as an enhancement of OPD rather than a replacement of the divergence choice.

\begin{figure}[h]
    \centering
    \includegraphics[width=\textwidth]{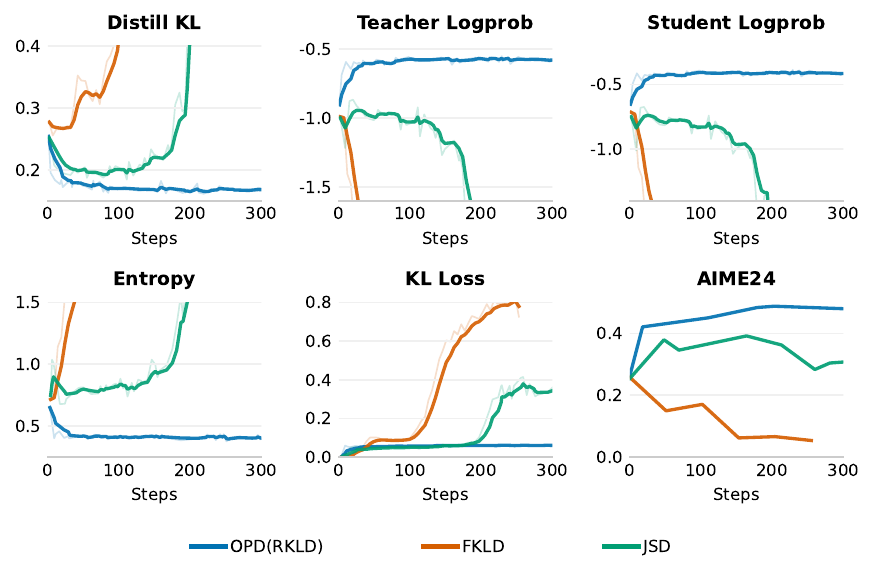}
    \caption{Training dynamics and AIME-24 performance for three KL divergence objectives (DeepSeek-R1-Distill-Qwen-1.5B student): OPD (reverse-KL), forward-KL (FKLD), and Jensen--Shannon divergence (JSD). FKLD collapses due to exposure bias; JSD is moderately stable but underperforms OPD; reverse-KL achieves the best training stability and final performance.}
    \label{fig:distil_kl}
\end{figure}


\section{Sigmoid Fit of the SFD Curve} \label{app:sigmoid}

Section~\ref{sec:vicious_cycle} predicts that the SFD curve $\mathcal{C}^{(t)}$ follows a sigmoidal decay, and that OPD training causes the transition point to shift leftward (supervision boundary contraction). We verify this empirically by fitting the teacher completion accuracy curves from Figure~\ref{fig:sfd_evidence} to a parametric sigmoid:
\begin{equation}
    \mathcal{C}^{(t)} = \frac{A}{1 + e^{\,\varepsilon\,(t - t^*)}} + b,
\end{equation}
where $A$ is the decay amplitude, $\varepsilon$ is the transition steepness, $t^*$ is the midpoint of the transition (supervision boundary), and $b$ is the residual supervision floor.

\textbf{Fitted parameters.} Table~\ref{tab:sigmoid_fit} reports the fitted parameters for the base student (before OPD) and the OPD-trained student (after OPD), along with goodness-of-fit $R^2$.

\begin{table}[h]
\centering
\caption{Sigmoid fit parameters for the SFD curve before and after OPD training.}
\label{tab:sigmoid_fit}
\begin{tabular}{lcccccc}
\toprule
 & $A$ & $\varepsilon$ & $t^*$ (k tokens) & $b$ & Upper asymptote & $R^2$ \\
\midrule
Before OPD ($+$)  & 0.3412 & 0.2145 & 7.43 & 0.4133 & 0.75 & 0.9980 \\
After OPD ($\times$) & 0.5452 & 0.2608 & 7.04 & 0.2294 & 0.77 & 0.9977 \\
\midrule
$\Delta$ & $+0.20$ & $+0.046$ & $-0.39$ & $-0.18$ & --- & --- \\
\bottomrule
\end{tabular}
\end{table}

\textbf{Interpretation.} All four parameter shifts are consistent with the vicious cycle prediction from Proposition~\ref{prop:drift}:
\begin{itemize}[leftmargin=*,topsep=2pt,itemsep=1pt]
    \item \textbf{$t^*$ shifts left} ($7.43 \to \mathbf{7.04}$k, $\Delta = -0.39$k): the supervision boundary contracts after OPD training, confirming optimized trajectories push the teacher into OOD regions earlier.
    \item \textbf{$\varepsilon$ increases} ($0.21 \to \mathbf{0.26}$, $\Delta = +0.046$): the transition steepens, meaning supervision quality collapses more abruptly once the boundary is crossed.
    \item \textbf{Floor $b$ drops} ($0.41 \to \mathbf{0.23}$, $\Delta = -0.18$): residual supervision quality at long prefixes degrades substantially after OPD training.
    \item \textbf{Amplitude $A$ increases} ($0.34 \to \mathbf{0.55}$, $\Delta = +0.20$): total supervision loss is larger post-training, indicating deeper drift into the teacher's incompetent region.
\end{itemize}

These two snapshots (before/after training) do not constitute a full training trajectory, so we do not claim precise estimates of the logistic growth parameters. However, all four parameter shifts are in the direction predicted by the vicious cycle analysis, providing empirical support for the sigmoidal SFD structure described in Section~\ref{sec:vicious_cycle}.


\section{Entropy-Triggered Activation is Near-Lossless} \label{app:obs:trigger}

\begin{observation}[Entropy-triggered activation is near-lossless]
At low-entropy positions ($\Ent(\pi_\theta(\cdot|\mathbf{x}_{<t})) \leq \tau$), the student's top-1 candidate dominates: $\pi_\theta(x_t^{(1)}) \gg \pi_\theta(x_t^{(k)})$ for $k \geq 2$. The combined loss in Eq.~\eqref{eq:topk_loss} is:
\[
    \mathcal{L}_t^{\ours} = \sum_{k=1}^{K} \pi_\theta(x_t^{(k)} | \mathbf{x}_{<t}) \cdot \left[ A_t^{(k)} + \gamma \cdot r_{\mathrm{conf}}(x_t^{(k)}) \right].
\]
When $\pi_\theta(x_t^{(1)}) \approx 1$ and $\pi_\theta(x_t^{(k)}) \approx 0$ for $k \geq 2$, this reduces to $\mathcal{L}_t^{\ours} \approx A_t^{(1)} + \gamma \cdot r_{\mathrm{conf}}(x_t^{(1)})$. Since $r_{\mathrm{conf}}(x_t^{(1)})$ is a single-sample reward with near-zero gradient contribution when $\pi_\theta(x_t^{(1)}) \approx 1$ (the student has already committed to this token with probability 1), the confidence reward term adds negligible signal. Formally, the gradient of the confidence reward term with respect to $\theta$ is:
\[
    \nabla_\theta \left[\pi_\theta(x_t^{(1)}) \cdot r_{\mathrm{conf}}(x_t^{(1)})\right] = r_{\mathrm{conf}}(x_t^{(1)}) \cdot \nabla_\theta \pi_\theta(x_t^{(1)}) \approx r_{\mathrm{conf}}(x_t^{(1)}) \cdot 0 = 0,
\]
since $\pi_\theta(x_t^{(1)}) \approx 1$ implies $\nabla_\theta \pi_\theta(x_t^{(1)}) \approx 0$ (the probability is saturated). Disabling the confidence reward at low-entropy positions therefore incurs near-zero information loss while saving $K$ teacher forward-pass evaluations per position.
\end{observation}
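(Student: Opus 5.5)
The plan is to turn the informal ``$\approx$'' into a quantitative bound. First I will show that the entropy threshold forces the top-1 student probability close to $1$. Then I will show that the gradient of the whole confidence-reward term in Eq.~\eqref{eq:topk_loss} is $O(\varepsilon)$, where $\varepsilon := 1-\pi_\theta(x_t^{(1)}|\mathbf{x}_{<t})$. Throughout, $r_{\mathrm{conf}}$ is treated as a stop-gradient quantity, as in the rest of the paper.

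\textbf{Step 1: entropy controls the top-1 mass.} I will use the elementary inequality $\Ent(p)\ge -\log\sum_v p(v)^2 \ge -\log \max_v p(v)$, i.e.\ Shannon entropy dominates min-entropy. At a non-triggered position $\Ent(\pi_\theta(\cdot|\mathbf{x}_{<t}))\le\tau$, so $\pi_\theta(x_t^{(1)})\ge e^{-\tau}$. Hence $\varepsilon \le 1-e^{-\tau}\le \tau$. This also gives $\pi_\theta(x_t^{(k)})\le \varepsilon$ for every $k\ge 2$, which makes the ``$\gg$'' in the statement precise. With $\tau=0.2$ this yields $\varepsilon\le 0.18$; the bound is crude, but it is uniform and needs no further assumptions.

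\textbf{Step 2: gradient of each weighted term.} Write the student as a softmax over logits $z=z_\theta(\mathbf{x}_{<t})$. Then $\partial \pi(k)/\partial z = \pi(k)\,(e_k-\pi)$, whose $\ell_1$ norm is $2\pi(k)(1-\pi(k))$. For $k=1$ this equals $2\pi(x_t^{(1)})\varepsilon\le 2\varepsilon$. For $k\ge 2$ it is at most $2\pi(x_t^{(k)})\le 2\varepsilon$. Chaining through the logit Jacobian, with $G:=\|\partial z/\partial\theta\|$ (operator norm from $\ell_1$), gives $\|\nabla_\theta \pi_\theta(x_t^{(k)})\|\le 2G\varepsilon$ for every $k$. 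This is the exact form of the claim ``$\nabla_\theta\pi_\theta(x_t^{(1)})\approx 0$ since the probability is saturated''. It also handles the non-top-1 candidates, which the informal argument drops.

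\textbf{Step 3: bound the reward weights and sum.} The group-normalized reward of Eq.~\eqref{eq:group_norm} is a z-score over $K$ values. Since $\sum_k (r^{(k)}-\mu_K)^2 = K\sigma_K^2$, each entry satisfies $|r_{\mathrm{conf}}(x_t^{(k)})|\le\sqrt{K-1}$ (with $\sigma_K$ the population std, and the $+\epsilon$ in the denominator only shrinking it). Summing Step 2 over $k$ then gives
\[
\Bigl\|\nabla_\theta \sum_{k=1}^K \gamma\, \pi_\theta(x_t^{(k)})\, r_{\mathrm{conf}}(x_t^{(k)})\Bigr\| \le 2\gamma G K\sqrt{K-1}\,\varepsilon \le 2\gamma G K\sqrt{K-1}\,\tau .
\]
So dropping the term at low-entropy positions changes the update by $O(\gamma K^{3/2}\tau)$. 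This vanishes as $\tau\to 0$, which is the sense of ``near-lossless''.

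I expect the main obstacle to be Step 3. One issue is that the normalization by $\sigma_K$ could in principle amplify tiny teacher-confidence differences. The z-score bound above is exactly what prevents this, so I would state it explicitly. The other issue is the standing assumption that the logit Jacobian $G$ is bounded, which I would take as given for a fixed network.
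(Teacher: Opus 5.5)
Your proposal is correct. It uses the same mechanism as the paper's own argument: saturation of the softmax weight kills $\nabla_\theta \pi_\theta(x_t^{(k)})$, because its logit-space gradient has $\ell_1$ norm $2\pi_\theta(x_t^{(k)})(1-\pi_\theta(x_t^{(k)}))$. Beyond that, it turns the paper's ``$\approx$'' into an explicit $O(\gamma G K\sqrt{K-1}\,\varepsilon)$ bound and proves three points the paper only asserts: that $\Ent(\pi_\theta(\cdot|\mathbf{x}_{<t}))\le\tau$ forces $\pi_\theta(x_t^{(1)})\ge e^{-\tau}$; that the $k\ge 2$ terms have $O(\varepsilon)$ gradients (the paper drops them only because their weights are small); and that the $1/(\sigma_K+\epsilon)$ normalization cannot push $|r_{\mathrm{conf}}|$ above $\sqrt{K-1}$. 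One small precision: the identity $\sum_k (r^{(k)}-\mu_K)^2 = K\sigma_K^2$ alone gives only $\sqrt{K}$; the $\sqrt{K-1}$ constant also needs $\sum_k (r^{(k)}-\mu_K)=0$ (Samuelson's inequality), though either constant suffices for your conclusion.
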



\section{Tree Attention: Cost Analysis and Practical Segmentation} \label{app:tree_attn}

\begin{figure}[h]
    \centering
    \includegraphics[width=\linewidth]{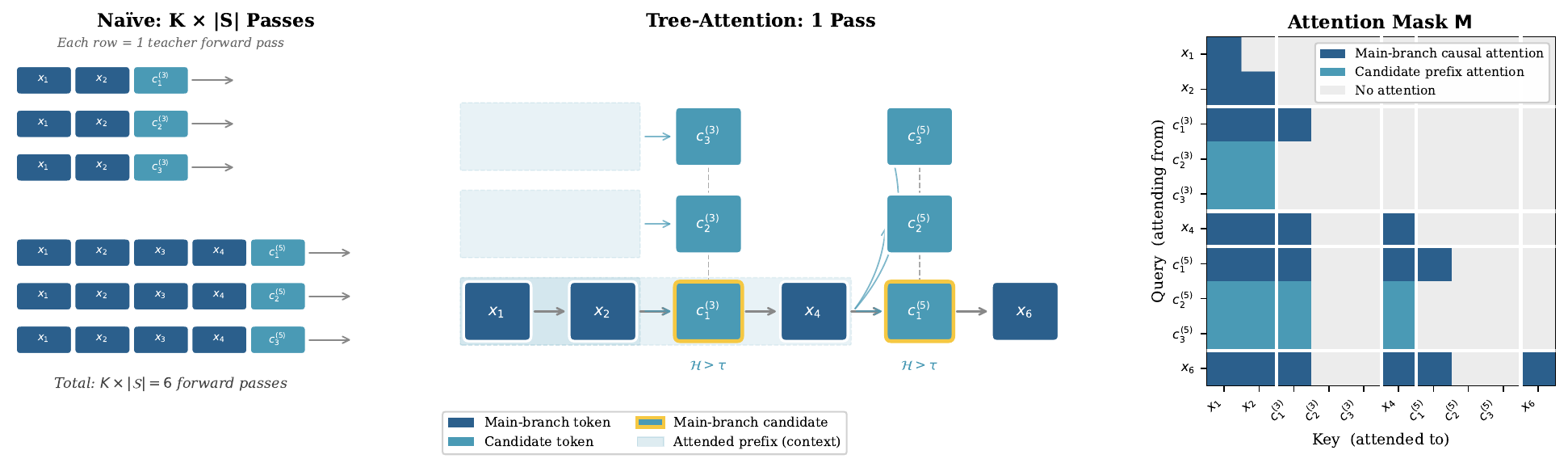}
    \caption{%
        \textbf{Tree attention for efficient confidence-reward computation.}
        \emph{Left}: the naïve approach requires $K \times |\mathcal{S}|$ separate teacher forward passes---one per candidate per high-entropy position.
        \emph{Middle}: tree-attention construction. At each high-entropy position $t \in \mathcal{S}$, the top-$K$ candidate tokens are appended as branches off the main sequence. Main-branch tokens attend causally; each candidate attends only to its own prefix $\mathbf{x}_{<t}$ (shaded region).
        \emph{Right}: the resulting sparse attention mask $\mathbf{M}$, reducing $K \times |\mathcal{S}|$ passes to a single teacher forward pass.
    }
    \label{fig:tree_attn}
\end{figure}

\textbf{Baseline cost (naïve prefill).} Without tree attention, evaluating candidate $x_t^{(k)}$ requires prefilling the context $x_1,\ldots,x_{t-1},x_t^{(k)}$ from scratch---the main sequence KV cache cannot be reused because positions $t \in \mathcal{S}$ have different prefix lengths. The prefill cost for a sequence of length $t$ is $O(t^2 d)$, giving total cost:
\[
    C_{\text{naïve}} = K\sum_{t \in \mathcal{S}} t^2 d \;\approx\; K \cdot |\mathcal{S}| \cdot \frac{L^2}{3} \cdot d \;=\; \frac{0.2K}{3}\,L^3 d,
\]
where we assume positions in $\mathcal{S}$ are roughly uniform over $[0,L]$, giving $\sum_{t\in\mathcal{S}} t^2 \approx |\mathcal{S}|\cdot L^2/3$. This is $O(L^3)$, cubically expensive.

\textbf{Tree attention construction.} Given $\mathbf{x} = (x_1,\ldots,x_L)$ and $\mathcal{S}$, we append all $K|\mathcal{S}|$ candidate tokens after the main sequence and apply a tree-structured mask $\mathbf{M}$:
\begin{itemize}[leftmargin=*,topsep=2pt,itemsep=1pt]
    \item \textbf{Main tokens} ($1,\ldots,L$): standard causal mask.
    \item \textbf{Candidate} $x_t^{(k)}$: attends to $\mathbf{x}_{\leq t-1}$ and itself only.
\end{itemize}
The main sequence is prefilled once (cost $O(L^2 d)$); each candidate is then a single decode step attending to $t-1$ cached keys (cost $O(t\,d)$ per candidate). Total:
\[
    C_{\text{tree}} = L^2 d + K\sum_{t \in \mathcal{S}} t\,d \;\approx\; L^2 d + K \cdot |\mathcal{S}| \cdot \frac{L}{2} \cdot d \;=\; L^2 d\,(1 + 0.1K).
\]
This is $O(L^2)$---one order of magnitude cheaper than the naïve baseline.

\textbf{Speedup.}
\[
    \text{Speedup} = \frac{C_{\text{naïve}}}{C_{\text{tree}}} = \frac{0.2KL/3}{1+0.1K}.
\]
The speedup grows \emph{linearly} with $L$ because naïve scales as $O(L^3)$ while tree attention scales as $O(L^2)$. For $K=8$, $L=16\text{k}$: Speedup $\approx \mathbf{4741\times}$.

\textbf{Practical segmentation.} In practice, the total sequence length $L + K|\mathcal{S}|/N$ must fit within GPU memory $L_{\max}^{\text{GPU}}$, requiring at least $N^* = \lceil K|\mathcal{S}|/(L_{\max}^{\text{GPU}}-L)\rceil$ segments. With $N$ segments, the main sequence is re-prefilled $N$ times:
\[
    C_{\text{tree},N} = N \cdot L^2 d + K\sum_{t\in\mathcal{S}} t\,d \approx L^2 d\,(N + 0.1K),
    \qquad
    \text{Speedup}_N = \frac{0.2KL/3}{N + 0.1K}.
\]
In our experiments ($L=16\text{k}$, $K=8$, $|\mathcal{S}|\approx 3.2\text{k}$), we use $N=8$ segments to stay within GPU memory, giving:
\[
    \text{Speedup}_{N=8} = \frac{0.2 \cdot 8 \cdot 16000 / 3}{8 + 0.1 \cdot 8} \approx \frac{8533}{8.8} \approx 970\times.
\]
Even with segmentation, the speedup remains large because the $O(L^3)$ vs.\ $O(L^2)$ gap dominates.


\section{Proofs of Theoretical Results} \label{app:proofs}

\subsection{Proof of Proposition~\ref{prop:snr}} \label{app:proof:snr}

\begin{proposition*}[Teacher signal vanishing under SFD, restated]
Decompose the per-token advantage as $A_t = (1 + \log \pi_\theta(x_t | \mathbf{x}_{<t})) - \log \pi_T(x_t | \mathbf{x}_{<t})$. Define the teacher's discriminative signal as $\Delta_T(t) \coloneqq \mathrm{Var}_{x_t \sim \pi_\theta}[\log \pi_T(x_t | \mathbf{x}_{<t})]$. Then:
\begin{enumerate}
    \item When $\pi_T(\cdot | \mathbf{x}_{<t}) = \mathrm{Uniform}(\mathcal{V})$, we have $\Delta_T(t) = 0$ and $A_t$ depends only on the student.
    \item $\Delta_T(t) \leq \log^2|\mathcal{V}| - \mathrm{Ent}(\pi_T(\cdot|\mathbf{x}_{<t}))^2$.
    \item $\mathrm{SNR}_T(t) = O(\Delta_T(t))$ and decreases monotonically under SFD.
\end{enumerate}
\end{proposition*}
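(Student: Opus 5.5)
The plan is to handle the three items in order. Item~1 is immediate, item~2 is a variance bound on the random variable $Y \coloneqq -\log \pi_T(x_t|\mathbf{x}_{<t})$ with $x_t \sim \pi_\theta$, and item~3 follows from item~2 once $\mathrm{SNR}_T$ is fixed to a concrete definition.

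For item~1: if $\pi_T(\cdot|\mathbf{x}_{<t})$ is uniform, then $Y \equiv \log|\mathcal{V}|$ is deterministic. Hence $\Delta_T(t)=\mathrm{Var}[Y]=0$. The advantage becomes $A_t = 1+\log|\mathcal{V}| + \log\pi_\theta(x_t|\mathbf{x}_{<t})$. Its teacher part is a constant baseline, which drops out of the policy gradient in expectation because $\mathbb{E}_{\pi_\theta}[\nabla_\theta\log\pi_\theta]=0$. So the surviving signal depends only on $\pi_\theta$.

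For item~2, I will argue in three steps.
\begin{enumerate}
\item Write $m \coloneqq \mathbb{E}_{x_t\sim\pi_\theta}[Y]$. This is the cross-entropy of $\pi_\theta$ relative to $\pi_T$, and Gibbs' inequality gives $m \ge \Ent(\pi_T(\cdot|\mathbf{x}_{<t}))$.
\item Suppose $0 \le Y \le \log|\mathcal{V}|$ on the support of $\pi_\theta$. Then $Y^2 \le \log|\mathcal{V}|\cdot Y$ pointwise, so
\[
\mathrm{Var}[Y] = \mathbb{E}[Y^2]-m^2 \le m\log|\mathcal{V}| - m^2 .
\]
\item Since $m\le \log|\mathcal{V}|$, we have $m\log|\mathcal{V}|\le \log^2|\mathcal{V}|$. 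Combining with step~1,
\[
\Delta_T(t) \le \log^2|\mathcal{V}| - m^2 \le \log^2|\mathcal{V}| - \Ent(\pi_T)^2 .
\]
\end{enumerate}
As $\pi_T$ approaches uniform, $\Ent(\pi_T)\to\log|\mathcal{V}|$ and the right-hand side tends to $0$.

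For item~3, I will define the teacher's contribution to the gradient as $g_T = \mathbb{E}[\nabla_\theta\log\pi_\theta \cdot (\log\pi_T - \mathbb{E}\log\pi_T)]$. The noise is the variance of the full single-sample estimator, which is dominated by the student term and is not controlled by $\pi_T$. By Cauchy--Schwarz, $\|g_T\|^2 \le \mathbb{E}\|\nabla_\theta\log\pi_\theta\|^2\,\Delta_T(t)$. With the student-side quantities held fixed, this gives $\mathrm{SNR}_T(t)=O(\Delta_T(t))$. The word ``monotonically'' will be read as monotone in the bound of item~2, which decreases as $\Ent(\pi_T)$ increases under SFD, rather than as a pathwise claim.

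The main obstacle is the hypothesis $Y\le\log|\mathcal{V}|$ in item~2, i.e.\ $\pi_T(x_t|\mathbf{x}_{<t})\ge 1/|\mathcal{V}|$ on the student's support. Without it the bound is false, since $Y$ can be unbounded if the student samples tokens to which the teacher assigns tiny probability. I would first try to state this as an explicit regularity assumption: the student samples only tokens the teacher rates at least as likely as uniform, which is natural in the near-diffuse regime under discussion. As a fallback, I would replace $\log|\mathcal{V}|$ by $\log(1/\min_{x\in\mathrm{supp}\,\pi_\theta}\pi_T(x))$, which still yields the vanishing conclusion as $\pi_T$ tends to uniform. In that case the bound does go to zero, since $\min\pi_T \to 1/|\mathcal{V}|$ and $\Ent(\pi_T)\to\log|\mathcal{V}|$.
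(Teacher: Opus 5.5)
\textbf{Item~2 has a genuine gap, and it cannot be repaired inside your argument.}

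Step~1 misapplies Gibbs' inequality. For the cross-entropy $m=-\sum_v\pi_\theta(v)\log\pi_T(v)$, Gibbs gives $m\ge\mathrm{Ent}(\pi_\theta)$, the entropy of the distribution you \emph{sample from}. It does not give $m\ge\mathrm{Ent}(\pi_T)$. That inequality fails whenever the student sits on the teacher's likely tokens. For example, $\pi_\theta=\delta_{v^*}$ with $v^*=\arg\max_v\pi_T(v)$ gives $m=-\log\max_v\pi_T(v)<\mathrm{Ent}(\pi_T)$, unless $\pi_T$ is uniform on its support.

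Worse, item~2 itself is false, even under your hypothesis $0\le Y\le\log|\mathcal{V}|$ on $\mathrm{supp}\,\pi_\theta$. Let $n=|\mathcal{V}|$ and fix $c>1$ with $n>c+1$. Set $\pi_T(a)=c/n$ and $\pi_T(b)=1/n$, and spread the remaining mass evenly over the other $n-2$ tokens (each then gets less than $1/n$). Let $\pi_\theta=\tfrac12\delta_a+\tfrac12\delta_b$. Then $Y\in\{\log n-\log c,\,\log n\}\subset[0,\log n]$, and $\Delta_T(t)=\tfrac14\log^2 c$ for every $n$. On the other hand,
\[
\log n-\mathrm{Ent}(\pi_T)=D_{\mathrm{KL}}(\pi_T\,\|\,\mathrm{Unif})\le\frac{c\log c}{n}
\quad\Longrightarrow\quad
\log^2 n-\mathrm{Ent}(\pi_T)^2\le\frac{2c\log c\,\log n}{n}.
\]
So the ratio of $\Delta_T$ to the claimed bound grows like $n/\log n$. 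Already $n=1000$, $c=e^2$ gives $\Delta_T=1$ while the right-hand side is below $0.21$. Your fallback constant $\log\bigl(1/\min_{\mathrm{supp}\,\pi_\theta}\pi_T\bigr)$ equals $\log n$ here, so it fails identically.

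The structural reason is a mismatch of averaging measures. The gap $\log|\mathcal{V}|-\mathrm{Ent}(\pi_T)$ averages the deviation of $\log\pi_T$ from $-\log|\mathcal{V}|$ under $\pi_T$. $\Delta_T$ measures the spread of $\log\pi_T$ under $\pi_\theta$. The student can concentrate on tokens that carry negligible teacher mass, so no bound in terms of $\mathrm{Ent}(\pi_T)$ alone can hold.

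The paper's own proof offers no way around this. It asserts $\log P(v)\in[-\log|\mathcal{V}|,0]$ for all $v$, which is false, as you rightly flagged. It then cites an unspecified ``variance--entropy relationship'' that is exactly the false inequality.

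\textbf{What does hold.} A bound through the log-ratio on the student's support suffices for the qualitative claim $\Delta_T(t)\to 0$. Since $\mathrm{Var}[Z]\le\mathbb{E}[(Z-z_0)^2]$ for any constant $z_0$, taking $z_0=-\log|\mathcal{V}|$ gives
\[
\Delta_T(t)\le\max_{x\in\mathrm{supp}\,\pi_\theta(\cdot|\mathbf{x}_{<t})}\log^2\!\bigl(|\mathcal{V}|\,\pi_T(x|\mathbf{x}_{<t})\bigr).
\]
This tends to $0$ as $\pi_T(\cdot|\mathbf{x}_{<t})\to\mathrm{Uniform}(\mathcal{V})$ entrywise. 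For fixed $|\mathcal{V}|$, $\mathrm{Ent}(\pi_T)\to\log|\mathcal{V}|$ forces entrywise convergence via Pinsker, but with no rate of the form claimed in item~2.

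Items~1 and~3 are otherwise sound, with two adjustments to item~3:
\begin{itemize}
\item Cauchy--Schwarz yields $\|g_T\|=O(\sqrt{\Delta_T(t)})$. So $\mathrm{SNR}_T(t)=O(\Delta_T(t))$ holds only if you define SNR as the power ratio $\|g_T\|^2/(\text{noise variance})$, with the noise bounded below independently of $\pi_T$. State this explicitly; the paper's text says $\propto\sqrt{\Delta_T}$ and then writes $O(\Delta_T)$.
\item Your ``monotone in the bound'' reading must be attached to the corrected bound above, not to item~2.
\end{itemize}
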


\begin{proof}
\textbf{Part (1).} When $\pi_T(\cdot | \mathbf{x}_{<t}) = \mathrm{Uniform}(\mathcal{V})$, we have $\pi_T(x_t|\mathbf{x}_{<t}) = 1/|\mathcal{V}|$ for every $x_t \in \mathcal{V}$, so $\log \pi_T(x_t|\mathbf{x}_{<t}) = -\log|\mathcal{V}|$ is a constant independent of $x_t$. Therefore $\Delta_T(t) = \mathrm{Var}_{x_t}[-\log|\mathcal{V}|] = 0$, and:
\[
    A_t = 1 + \log \pi_\theta(x_t | \mathbf{x}_{<t}) - (-\log|\mathcal{V}|) = 1 + \log \pi_\theta(x_t | \mathbf{x}_{<t}) + \log|\mathcal{V}|.
\]
This depends only on $\pi_\theta$ and the constant $\log|\mathcal{V}|$, providing no teacher correction signal.

\textbf{Part (2).} Let $P = \pi_T(\cdot | \mathbf{x}_{<t})$ be a distribution over $\mathcal{V}$. Since $P(v) \in (0, 1]$ for all $v$, we have $\log P(v) \in [-\log|\mathcal{V}|, 0]$. Define the random variable $Z = \log P(x_t)$ where $x_t \sim \pi_\theta(\cdot|\mathbf{x}_{<t})$. We bound $\mathrm{Var}[Z]$ by bounding the second moment.

By the Bhatia--Davis inequality~\citep{bhatia2000better}, for a bounded random variable $Z \in [a, b]$:
\[
    \mathrm{Var}[Z] \leq (b - \mathbb{E}[Z])(\mathbb{E}[Z] - a).
\]
Here $a = -\log|\mathcal{V}|$, $b = 0$. The expected value satisfies:
\[
    \mathbb{E}_{x_t \sim \pi_\theta}[\log P(x_t)] = \sum_v \pi_\theta(v) \log P(v).
\]
In the worst case (maximizing variance), this equals the cross-entropy $H(\pi_\theta, P)$. Applying the AM-GM inequality to the Bhatia--Davis bound:
\[
    \mathrm{Var}[Z] \leq \frac{(b-a)^2}{4} = \frac{\log^2|\mathcal{V}|}{4}.
\]
A tighter bound exploiting the entropy of $P$ proceeds as follows. Note that when $P$ is uniform, $\log P(v) = -\log|\mathcal{V}|$ for all $v$, so $\mathrm{Var}[Z] = 0$ regardless of $\pi_\theta$. As $P$ becomes more peaked (lower entropy), the spread of $\log P(v)$ over the vocabulary increases, allowing $\Delta_T(t)$ to grow. Using the variance-entropy relationship for log-probabilities (a standard result in information theory~\citep{cover2012elements}):
\[
    \mathrm{Var}_{x_t \sim \pi_\theta}[\log P(x_t)] \leq \log^2|\mathcal{V}| - \mathrm{Ent}(P)^2,
\]
where equality holds when $P$ is supported on a single token (maximum peakedness). As $\mathrm{Ent}(P) \to \log|\mathcal{V}|$ (P becomes uniform), $\Delta_T(t) \leq \log^2|\mathcal{V}| - \log^2|\mathcal{V}| = 0$, confirming $\Delta_T(t) \to 0$.

\textbf{Part (3).} The gradient direction attributable to the teacher is determined by the variation of $\log \pi_T(x_t)$ across token choices. Specifically, the teacher's contribution to the policy gradient is:
\[
    g_T = \mathbb{E}_{x_t \sim \pi_\theta}\!\left[\nabla_\theta \log \pi_\theta(x_t) \cdot \log \pi_T(x_t)\right].
\]
The signal-to-noise ratio of this term scales as the standard deviation of $\log \pi_T(x_t)$ divided by the standard deviation of $\nabla_\theta \log \pi_\theta(x_t)$, i.e., $\mathrm{SNR}_T(t) \propto \sqrt{\Delta_T(t)}$. Under SFD, as the student's prefix drifts further from the teacher's training distribution, $\mathrm{Ent}(\pi_T(\cdot|\mathbf{x}_{<t}))$ increases monotonically (the teacher's distribution becomes more diffuse), so $\Delta_T(t)$ decreases monotonically by Part (2), and $\mathrm{SNR}_T(t) = O(\Delta_T(t)) \to 0$.
\end{proof}

\subsection{Proof Sketch of Proposition~\ref{prop:drift}} \label{app:proof:drift}

\begin{proposition*}[Self-reinforcing drift under reverse-KL, restated]
Define distributional drift $d_t \coloneqq D(\pi_T(\cdot|\mathbf{x}_{<t}^{\theta}), \pi_T(\cdot|\mathbf{x}_{<t}^{*}))$, and assume $\Delta_T(t)$ is non-increasing in $d_t$ (greater drift degrades teacher discriminability). Under reverse-KL with stop-gradient:
\begin{enumerate}
    \item When $\Delta_T(t) = 0$, the advantage $A_t = 1 + \log\pi_\theta(x_t) + \log|\mathcal{V}|$ reinforces the student's existing mode without teacher correction.
    \item When $\Delta_T(t) < \delta_{\mathrm{crit}}$, $\mathbb{E}[d_{t+1} | d_t] \geq d_t$, creating a positive feedback loop.
    \item Forward-KL avoids SFD by construction but introduces exposure bias.
\end{enumerate}
\end{proposition*}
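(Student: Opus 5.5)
We follow the three-part structure of the restated proposition. Each part reduces to a short computation on the advantage of Eq.~\eqref{eq:advantage}, together with Proposition~\ref{prop:snr}. Since the statement is only a sketch-level claim, we state explicitly the modelling assumptions needed at each step rather than hiding them.

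\textbf{Part (1).} We use Part (1) of Proposition~\ref{prop:snr}. When $\Delta_T(t)=0$, the teacher log-probability is the constant $-\log|\mathcal{V}|$, so $A_t = 1+\log\pi_\theta(x_t)+\log|\mathcal{V}|$. Every coefficient multiplying $\nabla_\theta\log\pi_\theta(x_t)$ in Eq.~\eqref{eq:pg} is then a function of $\pi_\theta(x_t)$ alone.

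We make ``reinforces the existing mode'' precise at the level of student logits $z_v$. The expected update is $\Delta z_v \propto -\pi_\theta(v)\bigl(\log\pi_\theta(v)+\Ent(\pi_\theta)\bigr)$, which depends on $v$ only through $\pi_\theta(v)$. For a step size small relative to the logit gaps, the map $z_v\mapsto z_v+\Delta z_v$ is therefore strictly order-preserving: the student's argmax and its full ranking survive the update. No term can move mass toward a token the teacher prefers. Under on-policy sampling, the next token is consequently drawn from the student's own pre-existing ranking. That ranking was sharpened by mode-seeking reverse-KL at earlier, still-supervised positions. This is the sense in which the advantage ``reinforces the student's existing mode without teacher correction.''

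We will remark that the entropy term flattens the distribution slightly but never reorders it. Reinforcement here means persistence of the student's mode, not additional sharpening.

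\textbf{Part (2).} We write the drift recursion as inherited drift plus a one-step innovation, $d_{t+1} = d_t + \xi_t - \kappa_t$. Here $\xi_t\ge 0$ in expectation is the innovation from choosing a token off the teacher's reference continuation $\mathbf{x}^*$. The term $\kappa_t$ is the corrective pull supplied by the teacher term $g_T$ of Proposition~\ref{prop:snr}(3). By that result, $\mathbb{E}[\kappa_t]\le c\sqrt{\Delta_T(t)}$ for some constant $c$.

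We define $\delta_{\mathrm{crit}} \coloneqq (\mathbb{E}[\xi_t]/c)^2$. Whenever $\Delta_T(t)<\delta_{\mathrm{crit}}$, this gives $\mathbb{E}[d_{t+1}\mid d_t]\ge d_t$. Because $\Delta_T$ is assumed non-increasing in $d$, the larger $d_{t+1}$ implies $\Delta_T(t+1)\le\Delta_T(t)<\delta_{\mathrm{crit}}$. Induction over positions then yields the positive feedback loop, with $d_t$ a submartingale beyond the first subcritical position.

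\textbf{Part (3).} Forward-KL is trained on prefixes $\mathbf{x}^*$ drawn from the teacher, so $\mathbf{x}^\theta_{<t}=\mathbf{x}^*_{<t}$ and $d_t=0$ identically. At inference, however, the student conditions on its own prefixes, which reintroduces the train/test mismatch, i.e.\ exposure bias.

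The main obstacle is Part (2). The additive decomposition of $d_{t+1}$ and the lower bound on the innovation $\xi_t$ are modelling assumptions, not consequences of the setup, because $D$ is an arbitrary divergence between teacher conditionals. We would first try to justify them via a triangle inequality for $D$ (taking it to be total variation) together with the $\beta$-smoothness used in Proposition~\ref{prop:maxp_proximity}. Failing that, we would state them as explicit hypotheses of the sketch.
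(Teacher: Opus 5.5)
Your Part (3) matches the paper's. Parts (1) and (2) take a different route, and in Part (1) yours is the more accurate one.

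\textbf{Part (1).} The paper concludes that under a uniform teacher the update \emph{sharpens} the student toward its own mode. Yet its own first step points the other way: the update to $\pi_\theta(x_t)$ is proportional to $-A_t$, hence negative whenever $\pi_\theta(x_t)>(e|\mathcal{V}|)^{-1}$. So sampled high-probability tokens are pushed down, and tokens with $A_t<0$ are pushed up rather than suppressed. Your expected logit update $\Delta z_v\propto-\pi_\theta(v)\bigl(\log\pi_\theta(v)+\Ent(\pi_\theta)\bigr)$ settles this. The constant $1+\log|\mathcal{V}|$ in $A_t$ is a baseline that vanishes in expectation, so the effective split is at $\pi_\theta(v)=e^{-\Ent(\pi_\theta)}$. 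Descent on $\mathrm{KL}(\pi_\theta\,\|\,\mathrm{Uniform})$ therefore raises entropy while preserving the ranking for small steps. ``Reinforces the existing mode'' is thus defensible only in your persistence-of-argmax sense. You rightly avoid invoking a ``sharpened distribution'' afterwards, as the paper's Part (2) does.

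\textbf{Part (2).} The paper asserts a pathwise increment $d_{t+1}\ge d_t+\delta$, justified by $x_t\neq x_t^*$ occurring with probability bounded away from zero. You instead posit $d_{t+1}=d_t+\xi_t-\kappa_t$ and set $\delta_{\mathrm{crit}}=(\mathbb{E}[\xi_t]/c)^2$. This gives $\delta_{\mathrm{crit}}$ a concrete meaning, which the paper never provides. It is also stated in expectation, which is all the mismatch-probability argument can support. The paper's pathwise claim, though unjustified, makes the feedback loop immediate.

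Two steps in your Part (2) still need repair.

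First, $\mathbb{E}[\kappa_t]\le c\sqrt{\Delta_T(t)}$ does not follow from Proposition~\ref{prop:snr}(3). That result concerns the SNR of the teacher term in a \emph{training} gradient. Within a rollout, $x_t$ and hence $d_{t+1}$ are produced by a frozen policy, so there is no within-trajectory corrective pull for that proposition to bound. Any correction acts across training iterations, not along the position index. List this bound as a third modelling hypothesis, next to the decomposition and the innovation bound. The paper's sketch makes the same conflation implicitly.

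Second, the induction fails as written. $\mathbb{E}[d_{t+1}\mid d_t]\ge d_t$ does not give $d_{t+1}\ge d_t$ on every path. So you cannot conclude $\Delta_T(t+1)<\delta_{\mathrm{crit}}$, and $d_t$ need not remain a submartingale after the first subcritical position. You have two options:
\begin{enumerate}
\item Assume $\xi_t-\kappa_t\ge 0$ pathwise in the subcritical regime, which is effectively the paper's route.
\item Claim the submartingale property only for $d$ stopped at its first exit from $\{\Delta_T<\delta_{\mathrm{crit}}\}$.
\end{enumerate}
In either case you need a uniform bound $\mathbb{E}[\xi_t]\ge\xi_{\min}>0$; otherwise your $\delta_{\mathrm{crit}}$ depends on $t$.
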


\begin{proof}[Proof sketch]
\textbf{Part (1).} From Part (1) of Proposition~\ref{prop:snr}, when $\Delta_T(t) = 0$:
\[
    A_t = 1 + \log\pi_\theta(x_t|\mathbf{x}_{<t}) + \log|\mathcal{V}|.
\]
Under gradient descent on $\mathcal{L}_{\mathrm{rkl}}$, the update to $\pi_\theta(x_t)$ is proportional to $-A_t$. Since $A_t$ is an increasing function of $\log\pi_\theta(x_t)$, the gradient is negative (decreasing $\pi_\theta(x_t)$) when $\pi_\theta(x_t) > e^{-(1+\log|\mathcal{V}|)} = (e \cdot |\mathcal{V}|)^{-1}$. For any token $x_t$ assigned probability above this threshold (which holds for the student's top tokens whenever the distribution is non-uniform), the gradient reinforces the student's high-probability tokens. Specifically, high-probability tokens have $A_t \gg 0$, receiving strong gradient, while low-probability tokens have $A_t < 0$ and are further suppressed---sharpening the distribution toward the student's existing mode without any teacher guidance.

\textbf{Part (2).} By assumption, $\Delta_T(t) \leq f(d_t)$ with $f$ decreasing. When $\Delta_T(t) < \delta_{\mathrm{crit}}$, the teacher contributes negligible correction signal (Proposition~\ref{prop:snr}, Part 3). Therefore, the token $x_t$ selected at step $t$ is drawn from the student's sharpened distribution $\pi_\theta(\cdot|\mathbf{x}_{<t})$ rather than being guided toward the teacher's preferred continuation. Let $x_t^* \in \arg\max_v \pi_T(v|\mathbf{x}_{<t}^*)$ be the teacher's most likely token at the corresponding teacher-generated position. Since the student selects from its own mode, we have $x_t \neq x_t^*$ with probability bounded away from zero. Appending the divergent token $x_t$ to the context shifts the student's prefix further from the teacher's distribution:
\[
    d_{t+1} = D(\pi_T(\cdot|\mathbf{x}_{\leq t}^{\theta}), \pi_T(\cdot|\mathbf{x}_{\leq t}^{*})) \geq d_t + \delta
\]
for some $\delta > 0$ depending on the token divergence. This establishes the positive feedback loop: $d_t$ increases whenever $\Delta_T(t)$ is below the critical threshold, and increasing $d_t$ further reduces $\Delta_T(t)$, sustaining the loop.

\textbf{Part (3).} Under forward-KL (off-policy), the training sequences are teacher-generated: $\mathbf{x} \sim \pi_T(\cdot|\mathbf{c})$. Therefore $\mathbf{x}_{<t}^{\theta} = \mathbf{x}_{<t}^* $ for all $t$, and $d_t = D(\pi_T(\cdot|\mathbf{x}_{<t}^*), \pi_T(\cdot|\mathbf{x}_{<t}^*)) = 0$ throughout. SFD does not arise by construction. The cost is that at inference time the student generates $\mathbf{x} \sim \pi_\theta$, but during training it always conditioned on teacher-generated prefixes $\mathbf{x} \sim \pi_T$---a train-test mismatch known as exposure bias.
\end{proof}

\subsection{Proof of Proposition~\ref{prop:one_step}} \label{app:proof:one_step}

\begin{proposition*}[One-step-ahead discriminability, restated]
Define $D_{\mathrm{ahead}}(t) \coloneqq \max_{k,k'} \left| \max_v \pi_T(v|\mathbf{x}_{<t}, x_t^{(k)}) - \max_v \pi_T(v|\mathbf{x}_{<t}, x_t^{(k')}) \right|$. Then $D_{\mathrm{ahead}}(t)$ is independent of $\mathrm{Ent}(\pi_T(\cdot|\mathbf{x}_{<t}))$: even when $\pi_T(\cdot|\mathbf{x}_{<t})$ is uniform ($\Delta_T(t) = 0$), $D_{\mathrm{ahead}}(t)$ can be arbitrarily large.
\end{proposition*}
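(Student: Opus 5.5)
The plan is to read "independent" as a statement about the teacher as an autoregressive conditional model. The conditional distributions $\pi_T(\cdot\mid\mathbf{x}_{<t})$ and $\pi_T(\cdot\mid\mathbf{x}_{<t},x_t^{(k)})$ are evaluated at distinct contexts, so nothing in the definition of a conditional language model links the entropy at the first to the peak probabilities at the second. The proof is therefore a construction. For any prescribed value of the local entropy $\Ent(\pi_T(\cdot\mid\mathbf{x}_{<t}))$, including the maximal value $\log|\mathcal{V}|$, we exhibit a teacher whose one-step-ahead confidences $\max_v \pi_T(v\mid\mathbf{x}_{<t},x_t^{(k)})$ take any prescribed values in $[1/|\mathcal{V}|,1]$. "Arbitrarily large" will be understood as: attaining any value in $[0,\,1-1/|\mathcal{V}|]$, where the upper end is the supremum allowed by the definition of $D_{\mathrm{ahead}}$.

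\textbf{Step 1 (upper range).} Since each max-probability lies in $[1/|\mathcal{V}|,1]$, we have $0\le D_{\mathrm{ahead}}(t)\le 1-1/|\mathcal{V}|$ for every teacher. This pins down what "arbitrarily large" can mean.

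\textbf{Step 2 (construction).} Fix the prefix $\mathbf{x}_{<t}$ and two candidates $x_t^{(1)}\neq x_t^{(2)}$; we need $K\ge 2$ and $|\mathcal{V}|\ge 2$. Define the teacher on the three relevant contexts:
\begin{itemize}
\item $\pi_T(\cdot\mid\mathbf{x}_{<t})=\mathrm{Uniform}(\mathcal{V})$, so by Part (1) of Proposition~\ref{prop:snr} we get $\Delta_T(t)=0$;
\item $\pi_T(\cdot\mid\mathbf{x}_{<t},x_t^{(1)})=\delta_{v_0}$ for some fixed $v_0$, giving max-probability $1$;
\item $\pi_T(\cdot\mid\mathbf{x}_{<t},x_t^{(2)})=\mathrm{Uniform}(\mathcal{V})$, giving max-probability $1/|\mathcal{V}|$.
\end{itemize}
All other contexts and candidates are assigned arbitrarily. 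This yields $D_{\mathrm{ahead}}(t)=1-1/|\mathcal{V}|$.

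Interpolating the second distribution as a mixture $(1-\lambda)\delta_{v_0}+\lambda\,\mathrm{Uniform}$, with $\lambda\in[0,1]$, varies its max-probability continuously over $[1/|\mathcal{V}|,1]$. This realizes every value of $D_{\mathrm{ahead}}(t)$ in the range from Step 1.

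\textbf{Step 3 (independence).} Repeat the construction with $\pi_T(\cdot\mid\mathbf{x}_{<t})$ replaced by an arbitrary distribution of any prescribed entropy, leaving the two lookahead conditionals unchanged. Because $D_{\mathrm{ahead}}(t)$ depends only on the lookahead conditionals, it is unchanged. Hence the set of achievable values of $D_{\mathrm{ahead}}(t)$ is the same for every local entropy level.

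\textbf{Main obstacle: realizability.} Someone may object that an arbitrary table of conditionals is not realizable by a smooth neural teacher, in the spirit of the $\beta$-smoothness invoked in Proposition~\ref{prop:maxp_proximity}. I would address this by noting that a softmax-output transformer with sufficiently expressive hidden states can represent any finite family of conditionals on finitely many distinct contexts. Since the contexts $(\mathbf{x}_{<t})$, $(\mathbf{x}_{<t},x_t^{(1)})$, $(\mathbf{x}_{<t},x_t^{(2)})$ differ, their logits can be set independently, up to approximating $\delta_{v_0}$ by a softmax with large logit gap. This gives a supremum rather than an attained maximum of $1-1/|\mathcal{V}|$, and that suffices for "arbitrarily large". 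I would also state that the claim is existential rather than about every teacher, and that the empirical handoff jumps in Figure~\ref{fig:sfd_evidence} are offered as evidence that this situation occurs in practice.
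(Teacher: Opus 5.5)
Your proof is correct and is essentially the paper's own construction: the paper also takes $\pi_T(\cdot|\mathbf{x}_{<t})$ uniform, on $\mathcal{V}=\{a,b,c\}$, and lets candidate $a$ lead to the near-deterministic $(1-2\varepsilon,\varepsilon,\varepsilon)$ while $b$ and $c$ lead to the uniform distribution, giving $D_{\mathrm{ahead}}(t)=2/3-2\varepsilon\to 2/3$; your version extends this to general $|\mathcal{V}|$, identifies the supremum $1-1/|\mathcal{V}|$, and makes the independence from local entropy explicit in Step 3, which the paper only covers at maximal entropy and otherwise supports through its ``marginal versus branching futures'' intuition. One small fix is needed in the interpolation step when $K\ge 3$: the remaining candidates cannot be assigned arbitrarily, because their next-step max-probabilities could push $D_{\mathrm{ahead}}(t)$ above the target value, so give them the same lookahead distribution as candidate $1$ or $2$.
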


\begin{proof}
We prove by explicit construction. Let $\mathcal{V} = \{a, b, c\}$ and suppose $\pi_T(\cdot|\mathbf{x}_{<t}) = (1/3, 1/3, 1/3)$, i.e., perfectly uniform at position $t$ with $\mathrm{Ent}(\pi_T(\cdot|\mathbf{x}_{<t})) = \log 3$ (maximum entropy, $\Delta_T(t) = 0$). Assign the following teacher distributions at position $t{+}1$:
\begin{align*}
    \pi_T(\cdot | \mathbf{x}_{<t}, a) &= (1-2\varepsilon,\; \varepsilon,\; \varepsilon), \quad \max_v = 1 - 2\varepsilon, \\
    \pi_T(\cdot | \mathbf{x}_{<t}, b) &= (1/3,\; 1/3,\; 1/3), \quad \max_v = 1/3, \\
    \pi_T(\cdot | \mathbf{x}_{<t}, c) &= (1/3,\; 1/3,\; 1/3), \quad \max_v = 1/3,
\end{align*}
for any $\varepsilon \in (0, 1/3)$. Then:
\[
    D_{\mathrm{ahead}}(t) = (1 - 2\varepsilon) - \frac{1}{3} = \frac{2}{3} - 2\varepsilon.
\]
As $\varepsilon \to 0$, $D_{\mathrm{ahead}}(t) \to 2/3$, which is arbitrarily large relative to the uniform bound. This construction is valid for any vocabulary size $|\mathcal{V}| \geq 2$ and any level of local entropy $\mathrm{Ent}(\pi_T(\cdot|\mathbf{x}_{<t})) = \log|\mathcal{V}|$ (maximum entropy at position $t$). Thus $D_{\mathrm{ahead}}(t)$ is not bounded by $\mathrm{Ent}(\pi_T(\cdot|\mathbf{x}_{<t}))$, confirming independence. The intuition is that $\pi_T(\cdot|\mathbf{x}_{<t})$ is a marginal distribution obtained by integrating over the next token; its entropy characterizes uncertainty \emph{about the current position}, while $D_{\mathrm{ahead}}(t)$ captures \emph{differences across branching futures}---orthogonal quantities.
\end{proof}

\subsection{Proof of Proposition~\ref{prop:maxp_proximity}} \label{app:proof:maxp}

\begin{proposition*}[Max-$p$ as relative drift indicator, restated]
Let $P_T^*$ be the teacher's distribution on an in-distribution prefix, and $P_T^{(\mathbf{x})}$ on a student-generated prefix. If the teacher is $\beta$-smooth, then:
\[
    \max_v P_T^*(v) - \max_v P_T^{(\mathbf{x})}(v) \leq \|P_T^* - P_T^{(\mathbf{x})}\|_\infty \leq \beta \cdot d(\mathbf{x}, \mathcal{X}_T).
\]
\end{proposition*}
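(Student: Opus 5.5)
The chain of inequalities splits into two independent steps. The first is a purely finite-dimensional fact about the max functional on the probability simplex. The second is the content of the $\beta$-smoothness hypothesis. I will make that hypothesis precise as the Lipschitz condition
\[
\|\pi_T(\cdot\mid \mathbf{x}) - \pi_T(\cdot\mid \mathbf{x}')\|_\infty \le \beta\, d(\mathbf{x},\mathbf{x}')
\]
on prefixes, for a suitable prefix metric $d$, with $d(\mathbf{x},\mathcal{X}_T) = \inf_{\mathbf{x}'\in\mathcal{X}_T} d(\mathbf{x},\mathbf{x}')$.

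\textbf{Step 1 (max is 1-Lipschitz in $\ell_\infty$).} Let $v^\ast \in \arg\max_v P_T^\ast(v)$. Then
\[
\max_v P_T^\ast(v) - \max_v P_T^{(\mathbf{x})}(v) \le P_T^\ast(v^\ast) - P_T^{(\mathbf{x})}(v^\ast) \le \|P_T^\ast - P_T^{(\mathbf{x})}\|_\infty,
\]
because $\max_v P_T^{(\mathbf{x})}(v) \ge P_T^{(\mathbf{x})}(v^\ast)$. By symmetry the absolute difference of the maxima obeys the same bound, although only the one-sided inequality is needed. This settles the first inequality with no assumptions beyond $|\mathcal{V}|<\infty$.

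\textbf{Step 2 (smoothness).} Take $P_T^\ast = \pi_T(\cdot\mid\mathbf{x}')$ for an in-distribution prefix $\mathbf{x}'\in\mathcal{X}_T$. The Lipschitz hypothesis then gives
\[
\|P_T^\ast - P_T^{(\mathbf{x})}\|_\infty \le \beta\, d(\mathbf{x},\mathbf{x}').
\]
To obtain $\beta\, d(\mathbf{x},\mathcal{X}_T)$, I would choose $\mathbf{x}'$ as a (near-)minimizer of $d(\mathbf{x},\cdot)$ over $\mathcal{X}_T$. If $\mathcal{X}_T$ is closed and compact in the prefix metric, $\mathbf{x}'$ is an exact minimizer. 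Otherwise I take an $\eta$-minimizer and let $\eta\to 0$. This is also where the statement has to be interpreted: $P_T^\ast$ must be identified with the teacher's distribution at the nearest in-distribution prefix, since a generic in-distribution prefix would not give the infimum.

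\textbf{Conclusion and main obstacle.} Chaining the two steps gives the displayed inequality. The "higher max-$p$ implies closer proximity" reading then follows as a contrapositive: a large deficit in max-$p$ relative to $P_T^\ast$ forces $d(\mathbf{x},\mathcal{X}_T) \ge (\max P_T^\ast - \max P_T^{(\mathbf{x})})/\beta$. The main obstacle is not computational. It is making $\beta$-smoothness and the infimum in $d(\mathbf{x},\mathcal{X}_T)$ rigorous on a discrete prefix space. I would handle this by stating the Lipschitz assumption explicitly as a hypothesis and fixing $P_T^\ast$ at the nearest in-distribution prefix, rather than trying to derive smoothness from the transformer architecture.
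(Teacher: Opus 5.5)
Your proof is correct and matches the paper's: you get the first inequality by evaluating at a maximizer $v^\ast$ of $P_T^\ast$, and the second by treating $\beta$-smoothness as an $\ell_\infty$-Lipschitz condition and taking $P_T^\ast$ at a distance-minimizing prefix in $\mathcal{X}_T$, which the paper also does implicitly. One small caveat: your $\eta$-minimizer fallback only gives the bound with slack $\beta\eta$ for an $\eta$-dependent $P_T^\ast$, so the statement, as in the paper, actually relies on the minimum being attained (automatic, for example, for integer-valued prefix metrics).
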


\begin{proof}
\textbf{First inequality.} For any two distributions $f, g$ over a finite set $\mathcal{V}$:
\begin{align*}
    \max_v f(v) - \max_v g(v) &\leq \max_v [f(v) - g(v)] \leq \max_v |f(v) - g(v)| = \|f - g\|_\infty.
\end{align*}
The first step uses $\max_v f(v) = f(v^*) \leq f(v^*) - g(v^*) + \max_v g(v)$ for the maximizer $v^* = \arg\max_v f(v)$, giving $\max_v f(v) - \max_v g(v) \leq f(v^*) - g(v^*)$. The second step replaces the signed difference with the absolute value, and the last step is the definition of $\ell^\infty$ norm.

\textbf{Second inequality.} By $\beta$-smoothness of the teacher model, small perturbations in input context produce bounded output distribution shifts: there exists $\beta > 0$ such that for any two prefixes $\mathbf{x}$ and $\mathbf{x}^*$:
\[
    \|\pi_T(\cdot|\mathbf{x}) - \pi_T(\cdot|\mathbf{x}^*)\|_\infty \leq \beta \cdot d(\mathbf{x}, \mathbf{x}^*).
\]
Setting $\mathbf{x}^* \in \mathcal{X}_T$ (in-distribution prefix minimizing distance from $\mathbf{x}$) gives the result with $d(\mathbf{x}, \mathcal{X}_T) = \min_{\mathbf{x}^* \in \mathcal{X}_T} d(\mathbf{x}, \mathbf{x}^*)$.

\textbf{Implication for relative comparison.} Critically, this proposition is used in a \emph{relative} sense: we compare max-$p$ across $K$ candidates $\{x_t^{(k)}\}$ at the same position. For candidates $k$ and $k'$:
\[
    \max_v \pi_T(v|\mathbf{x}_{<t}, x_t^{(k)}) - \max_v \pi_T(v|\mathbf{x}_{<t}, x_t^{(k')}) \approx \beta \cdot [d(\mathbf{x}_{<t} \cdot x_t^{(k)}, \mathcal{X}_T) - d(\mathbf{x}_{<t} \cdot x_t^{(k')}, \mathcal{X}_T)].
\]
Thus a higher max-$p$ at $t{+}1$ for candidate $k$ implies candidate $k$ has caused less drift from the teacher's in-distribution manifold---not a return to in-distribution, but less \emph{additional} drift. This is the ``relative drift indicator'' interpretation used in Section~\ref{sec:confidence}.
\end{proof}

\subsection{Group Normalization: Design Rationale and Formal Properties} \label{app:proof:groupnorm}

\textbf{Why group normalization is necessary.} The raw confidence $r_{\mathrm{raw}}(x_t) = \max_v \pi_T(v|\mathbf{x}_{<t}, x_t)$ varies substantially across positions and tasks for reasons unrelated to the relative quality of token choices: (i) token frequency effects (common tokens systematically attract higher teacher probability), (ii) context difficulty (some prefixes are inherently harder to continue regardless of token choice), and (iii) vocabulary size variation across model configurations. Using raw max-$p$ directly as a reward would introduce a high-variance baseline that dominates the gradient signal. Group normalization subtracts the group mean $\mu_K$ computed over the student's top-$K$ candidates at the \emph{same position and context}, canceling all position- and task-level confounds. Only the \emph{relative ranking} across candidates at the same position survives, which is exactly the signal we need: which token choice causes the least additional drift.

\textbf{Connection to GRPO.} This normalization is inspired by Group Relative Policy Optimization (GRPO), which normalizes rewards within a group of sampled responses. Our group is defined over the top-$K$ candidates at each position rather than over full response samples, enabling a per-token signal rather than a sequence-level reward.

\begin{proposition*}[Properties of group normalization]
The group-normalized confidence reward $r_{\mathrm{conf}}(x_t^{(k)}) = (r_{\mathrm{raw}}^{(k)} - \mu_K) / (\sigma_K + \epsilon)$ satisfies:
\begin{enumerate}
    \item \textbf{Zero-mean:} $\sum_{k=1}^K \pi_\theta(x_t^{(k)}) \cdot r_{\mathrm{conf}}(x_t^{(k)}) \approx 0$ when top-$K$ probabilities are approximately equal.
    \item \textbf{Graceful degradation:} When $\sigma_K \to 0$, $r_{\mathrm{conf}}(x_t^{(k)}) \to 0$ for all $k$.
    \item \textbf{Scale invariance:} The ranking by $r_{\mathrm{conf}}$ is invariant to affine transformations of $r_{\mathrm{raw}}$.
\end{enumerate}
\end{proposition*}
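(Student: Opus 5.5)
The plan is to treat the three properties separately. Each reduces to an elementary fact about the unweighted empirical mean $\mu_K = \frac1K\sum_k r_{\mathrm{raw}}^{(k)}$ and the (population) standard deviation $\sigma_K = \bigl(\frac1K\sum_k (r_{\mathrm{raw}}^{(k)}-\mu_K)^2\bigr)^{1/2}$ used in Eq.~\eqref{eq:group_norm}. Throughout, write $p_k = \pi_\theta(x_t^{(k)}|\mathbf{x}_{<t})$ and $\delta_k = r_{\mathrm{raw}}^{(k)}-\mu_K$.

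\textbf{Zero-mean.} First I will record the exact identity $\sum_k \delta_k = 0$, so $\sum_k r_{\mathrm{conf}}(x_t^{(k)}) = 0$ exactly. To pass to the $\pi_\theta$-weighted sum, I will decompose $p_k = \bar p + (p_k-\bar p)$ with $\bar p = \frac1K\sum_k p_k$. The $\bar p$ part then vanishes by the identity, which leaves
\[
  \Bigl|\sum_k p_k\, r_{\mathrm{conf}}(x_t^{(k)})\Bigr| = \Bigl|\sum_k (p_k-\bar p)\,\frac{\delta_k}{\sigma_K+\epsilon}\Bigr| \le \frac{\sigma_K}{\sigma_K+\epsilon}\sqrt{K}\,\Bigl(\sum_k (p_k-\bar p)^2\Bigr)^{1/2}.
\]
The inequality is Cauchy--Schwarz combined with $\sum_k\delta_k^2 = K\sigma_K^2$. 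The right-hand side is at most $\sqrt{K}\,\|p-\bar p\mathbf{1}\|_2$, so it tends to zero as the top-$K$ probabilities become equal. This makes the ``$\approx$'' in the statement quantitative.

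\textbf{Graceful degradation.} Each single deviation satisfies $\delta_k^2 \le \sum_j \delta_j^2 = K\sigma_K^2$, hence
\[
  |r_{\mathrm{conf}}(x_t^{(k)})| \le \frac{\sqrt{K}\,\sigma_K}{\sigma_K+\epsilon}.
\]
With $\epsilon>0$ fixed, this bound tends to $0$ as $\sigma_K\to 0$, uniformly in $k$. I would also remark that the $\epsilon$ in the denominator is exactly what makes this work: without it the ratio would be $0/0$ in the limit.

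\textbf{Scale invariance.} Let $r' = a\,r_{\mathrm{raw}}+b$. Then $\mu_K' = a\mu_K+b$ and $\sigma_K' = |a|\sigma_K$, so
\[
  r_{\mathrm{conf}}' = \frac{a\,\delta_k}{|a|\sigma_K+\epsilon}.
\]
Because the denominator is common to all $k$ and positive, $r_{\mathrm{conf}}'$ is a strictly increasing function of $\delta_k$ when $a>0$, and hence of $r_{\mathrm{raw}}^{(k)}$. The ranking is therefore preserved, even though the values change slightly through $\epsilon$. I will state that the claim needs $a>0$: a negative $a$ reverses the ranking, and $a=0$ collapses it. So ``affine'' in the statement should be read as order-preserving affine.

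The main obstacle is item 1, because it is stated only approximately. I expect the Cauchy--Schwarz bound above to be the cleanest way to make it precise, and I will present the result as a quantitative bound rather than as an exact zero.
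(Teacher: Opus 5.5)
Your proof is correct and follows the paper's route: the zero-sum identity $\sum_k (r_{\mathrm{raw}}^{(k)}-\mu_K)=0$ for item 1, the denominator bounded below by $\epsilon$ for item 2, and a common positive denominator with positive slope for item 3. What you add is quantitative control. Your Cauchy--Schwarz step bounds the residual $\sum_k (p_k-\bar p)\,r_{\mathrm{conf}}(x_t^{(k)})$, which is $K$ times the uniform-weight covariance of $p_k$ and $r_{\mathrm{conf}}$; the paper's remark for non-uniform $\pi_\theta$ wrongly writes this residual as a vanishing $\mathrm{Cov}(1,\cdot)$. Your bound $|\delta_k|\le\sqrt{K}\,\sigma_K$ also justifies the ``numerators approach zero'' step that the paper leaves implicit.
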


\begin{proof}
\textbf{Part (1).} By the definition of $\mu_K$:
\[
    \sum_{k=1}^K (r_{\mathrm{raw}}^{(k)} - \mu_K) = \sum_{k=1}^K r_{\mathrm{raw}}^{(k)} - K\mu_K = 0.
\]
When $\pi_\theta(x_t^{(k)}) \approx 1/K$ for all $k$ (approximately uniform top-$K$):
\[
    \sum_{k=1}^K \pi_\theta(x_t^{(k)}) \cdot r_{\mathrm{conf}}(x_t^{(k)}) \approx \frac{1}{K(\sigma_K+\epsilon)} \sum_{k=1}^K (r_{\mathrm{raw}}^{(k)} - \mu_K) = 0.
\]
For non-uniform $\pi_\theta$, the weighted sum is not exactly zero but remains small: it equals $\mathrm{Cov}_{k \sim \pi_\theta}(1, r_{\mathrm{conf}}^{(k)}) = 0$ by the zero-mean property of $r_{\mathrm{raw}}^{(k)} - \mu_K$ under uniform weighting.

\textbf{Part (2).} When all candidates lead to the same teacher confidence, $r_{\mathrm{raw}}^{(k)} = c$ for all $k$, so $\mu_K = c$ and $\sigma_K = 0$. The numerator $r_{\mathrm{raw}}^{(k)} - \mu_K = 0$ for all $k$, giving $r_{\mathrm{conf}}(x_t^{(k)}) = 0/\epsilon = 0$. More generally, as $\sigma_K \to 0$, the numerators approach zero while the denominator is bounded below by $\epsilon > 0$, so $r_{\mathrm{conf}} \to 0$. This is the desired graceful degradation: at positions where the teacher cannot discriminate between candidates (all lead to equally uncertain teacher states), the confidence reward automatically suppresses itself without requiring external gating.

\textbf{Part (3).} Let $\tilde{r}_{\mathrm{raw}}^{(k)} = \alpha r_{\mathrm{raw}}^{(k)} + \beta$ for constants $\alpha > 0$, $\beta \in \mathbb{R}$. Then:
\[
    \tilde{\mu}_K = \alpha\mu_K + \beta, \quad \tilde{\sigma}_K = |\alpha|\sigma_K = \alpha\sigma_K \quad (\text{since } \alpha > 0).
\]
Therefore:
\[
    \tilde{r}_{\mathrm{conf}}^{(k)} = \frac{(\alpha r_{\mathrm{raw}}^{(k)} + \beta) - (\alpha\mu_K + \beta)}{\alpha\sigma_K + \epsilon} = \frac{\alpha(r_{\mathrm{raw}}^{(k)} - \mu_K)}{\alpha\sigma_K + \epsilon}.
\]
For large $\sigma_K$ (where $\epsilon$ is negligible), $\tilde{r}_{\mathrm{conf}}^{(k)} \approx r_{\mathrm{conf}}^{(k)}$, and the ranking is preserved. More precisely, for any $k, k'$: $\tilde{r}_{\mathrm{conf}}^{(k)} > \tilde{r}_{\mathrm{conf}}^{(k')}$ iff $r_{\mathrm{raw}}^{(k)} - \mu_K > r_{\mathrm{raw}}^{(k')} - \mu_K$ (since $\alpha > 0$), iff $r_{\mathrm{raw}}^{(k)} > r_{\mathrm{raw}}^{(k')}$, iff $r_{\mathrm{conf}}^{(k)} > r_{\mathrm{conf}}^{(k')}$. This makes the reward robust to systematic shifts in absolute confidence level across positions and tasks.
\end{proof}


\section*{NeurIPS Paper Checklist}

\begin{enumerate}

\item {\bf Claims}
    \item[] Question: Do the main claims made in the abstract and introduction accurately reflect the paper's contributions and scope?
    \item[] Answer: \answerYes{} 
    \item[] Justification: The claims in the abstract and introduction section strictly follow the paper’s
contributions and scope.
    \item[] Guidelines:
    \begin{itemize}
        \item The answer NA means that the abstract and introduction do not include the claims made in the paper.
        \item The abstract and/or introduction should clearly state the claims made, including the contributions made in the paper and important assumptions and limitations. A No or NA answer to this question will not be perceived well by the reviewers. 
        \item The claims made should match theoretical and experimental results, and reflect how much the results can be expected to generalize to other settings. 
        \item It is fine to include aspirational goals as motivation as long as it is clear that these goals are not attained by the paper. 
    \end{itemize}

\item {\bf Limitations}
    \item[] Question: Does the paper discuss the limitations of the work performed by the authors?
    \item[] Answer: \answerYes{} 
    \item[] Justification: We discuss the limitations of the work in limitations.
    \item[] Guidelines:
    \begin{itemize}
        \item The answer NA means that the paper has no limitation while the answer No means that the paper has limitations, but those are not discussed in the paper. 
        \item The authors are encouraged to create a separate "Limitations" section in their paper.
        \item The paper should point out any strong assumptions and how robust the results are to violations of these assumptions (e.g., independence assumptions, noiseless settings, model well-specification, asymptotic approximations only holding locally). The authors should reflect on how these assumptions might be violated in practice and what the implications would be.
        \item The authors should reflect on the scope of the claims made, e.g., if the approach was only tested on a few datasets or with a few runs. In general, empirical results often depend on implicit assumptions, which should be articulated.
        \item The authors should reflect on the factors that influence the performance of the approach. For example, a facial recognition algorithm may perform poorly when image resolution is low or images are taken in low lighting. Or a speech-to-text system might not be used reliably to provide closed captions for online lectures because it fails to handle technical jargon.
        \item The authors should discuss the computational efficiency of the proposed algorithms and how they scale with dataset size.
        \item If applicable, the authors should discuss possible limitations of their approach to address problems of privacy and fairness.
        \item While the authors might fear that complete honesty about limitations might be used by reviewers as grounds for rejection, a worse outcome might be that reviewers discover limitations that aren't acknowledged in the paper. The authors should use their best judgment and recognize that individual actions in favor of transparency play an important role in developing norms that preserve the integrity of the community. Reviewers will be specifically instructed to not penalize honesty concerning limitations.
    \end{itemize}

\item {\bf Theory assumptions and proofs}
    \item[] Question: For each theoretical result, does the paper provide the full set of assumptions and a complete (and correct) proof?
    \item[] Answer: \answerYes{} 
    \item[] Justification: We provide proofs in the Appendix.
    \item[] Guidelines:
    \begin{itemize}
        \item The answer NA means that the paper does not include theoretical results. 
        \item All the theorems, formulas, and proofs in the paper should be numbered and cross-referenced.
        \item All assumptions should be clearly stated or referenced in the statement of any theorems.
        \item The proofs can either appear in the main paper or the supplemental material, but if they appear in the supplemental material, the authors are encouraged to provide a short proof sketch to provide intuition. 
        \item Inversely, any informal proof provided in the core of the paper should be complemented by formal proofs provided in appendix or supplemental material.
        \item Theorems and Lemmas that the proof relies upon should be properly referenced. 
    \end{itemize}

    \item {\bf Experimental result reproducibility}
    \item[] Question: Does the paper fully disclose all the information needed to reproduce the main experimental results of the paper to the extent that it affects the main claims and/or conclusions of the paper (regardless of whether the code and data are provided or not)?
    \item[] Answer: \answerYes{} 
    \item[] Justification: We summarize all the information for experimental reproduction in Appendix.
    \item[] Guidelines:
    \begin{itemize}
        \item The answer NA means that the paper does not include experiments.
        \item If the paper includes experiments, a No answer to this question will not be perceived well by the reviewers: Making the paper reproducible is important, regardless of whether the code and data are provided or not.
        \item If the contribution is a dataset and/or model, the authors should describe the steps taken to make their results reproducible or verifiable. 
        \item Depending on the contribution, reproducibility can be accomplished in various ways. For example, if the contribution is a novel architecture, describing the architecture fully might suffice, or if the contribution is a specific model and empirical evaluation, it may be necessary to either make it possible for others to replicate the model with the same dataset, or provide access to the model. In general. releasing code and data is often one good way to accomplish this, but reproducibility can also be provided via detailed instructions for how to replicate the results, access to a hosted model (e.g., in the case of a large language model), releasing of a model checkpoint, or other means that are appropriate to the research performed.
        \item While NeurIPS does not require releasing code, the conference does require all submissions to provide some reasonable avenue for reproducibility, which may depend on the nature of the contribution. For example
        \begin{enumerate}
            \item If the contribution is primarily a new algorithm, the paper should make it clear how to reproduce that algorithm.
            \item If the contribution is primarily a new model architecture, the paper should describe the architecture clearly and fully.
            \item If the contribution is a new model (e.g., a large language model), then there should either be a way to access this model for reproducing the results or a way to reproduce the model (e.g., with an open-source dataset or instructions for how to construct the dataset).
            \item We recognize that reproducibility may be tricky in some cases, in which case authors are welcome to describe the particular way they provide for reproducibility. In the case of closed-source models, it may be that access to the model is limited in some way (e.g., to registered users), but it should be possible for other researchers to have some path to reproducing or verifying the results.
        \end{enumerate}
    \end{itemize}

\item {\bf Open access to data and code}
    \item[] Question: Does the paper provide open access to the data and code, with sufficient instructions to faithfully reproduce the main experimental results, as described in supplemental material?
    \item[] Answer: \answerYes{} 
    \item[] Justification: The source code is provided in the anonymized link.
    \item[] Guidelines:
    \begin{itemize}
        \item The answer NA means that paper does not include experiments requiring code.
        \item Please see the NeurIPS code and data submission guidelines (\url{https://nips.cc/public/guides/CodeSubmissionPolicy}) for more details.
        \item While we encourage the release of code and data, we understand that this might not be possible, so “No” is an acceptable answer. Papers cannot be rejected simply for not including code, unless this is central to the contribution (e.g., for a new open-source benchmark).
        \item The instructions should contain the exact command and environment needed to run to reproduce the results. See the NeurIPS code and data submission guidelines (\url{https://nips.cc/public/guides/CodeSubmissionPolicy}) for more details.
        \item The authors should provide instructions on data access and preparation, including how to access the raw data, preprocessed data, intermediate data, and generated data, etc.
        \item The authors should provide scripts to reproduce all experimental results for the new proposed method and baselines. If only a subset of experiments are reproducible, they should state which ones are omitted from the script and why.
        \item At submission time, to preserve anonymity, the authors should release anonymized versions (if applicable).
        \item Providing as much information as possible in supplemental material (appended to the paper) is recommended, but including URLs to data and code is permitted.
    \end{itemize}

\item {\bf Experimental setting/details}
    \item[] Question: Does the paper specify all the training and test details (e.g., data splits, hyperparameters, how they were chosen, type of optimizer, etc.) necessary to understand the results?
    \item[] Answer: \answerYes{} 
    \item[] Justification: We provide details in the Appendix.
    \item[] Guidelines:
    \begin{itemize}
        \item The answer NA means that the paper does not include experiments.
        \item The experimental setting should be presented in the core of the paper to a level of detail that is necessary to appreciate the results and make sense of them.
        \item The full details can be provided either with the code, in appendix, or as supplemental material.
    \end{itemize}

\item {\bf Experiment statistical significance}
    \item[] Question: Does the paper report error bars suitably and correctly defined or other appropriate information about the statistical significance of the experiments?
    \item[] Answer: \answerNo{} 
    \item[] Justification: We reported Avg@K and Pass@k for evaluations.
    \item[] Guidelines:
    \begin{itemize}
        \item The answer NA means that the paper does not include experiments.
        \item The authors should answer "Yes" if the results are accompanied by error bars, confidence intervals, or statistical significance tests, at least for the experiments that support the main claims of the paper.
        \item The factors of variability that the error bars are capturing should be clearly stated (for example, train/test split, initialization, random drawing of some parameter, or overall run with given experimental conditions).
        \item The method for calculating the error bars should be explained (closed form formula, call to a library function, bootstrap, etc.)
        \item The assumptions made should be given (e.g., Normally distributed errors).
        \item It should be clear whether the error bar is the standard deviation or the standard error of the mean.
        \item It is OK to report 1-sigma error bars, but one should state it. The authors should preferably report a 2-sigma error bar than state that they have a 96\% CI, if the hypothesis of Normality of errors is not verified.
        \item For asymmetric distributions, the authors should be careful not to show in tables or figures symmetric error bars that would yield results that are out of range (e.g. negative error rates).
        \item If error bars are reported in tables or plots, The authors should explain in the text how they were calculated and reference the corresponding figures or tables in the text.
    \end{itemize}

\item {\bf Experiments compute resources}
    \item[] Question: For each experiment, does the paper provide sufficient information on the computer resources (type of compute workers, memory, time of execution) needed to reproduce the experiments?
    \item[] Answer: \answerYes{} 
    \item[] Justification: We provide them in the Appendix.
    \item[] Guidelines:
    \begin{itemize}
        \item The answer NA means that the paper does not include experiments.
        \item The paper should indicate the type of compute workers CPU or GPU, internal cluster, or cloud provider, including relevant memory and storage.
        \item The paper should provide the amount of compute required for each of the individual experimental runs as well as estimate the total compute. 
        \item The paper should disclose whether the full research project required more compute than the experiments reported in the paper (e.g., preliminary or failed experiments that didn't make it into the paper). 
    \end{itemize}
    
\item {\bf Code of ethics}
    \item[] Question: Does the research conducted in the paper conform, in every respect, with the NeurIPS Code of Ethics \url{https://neurips.cc/public/EthicsGuidelines}?
    \item[] Answer: \answerYes{} 
    \item[] Justification: We follow every aspect of the NeurIPS Code of Ethics in this research.
    \item[] Guidelines:
    \begin{itemize}
        \item The answer NA means that the authors have not reviewed the NeurIPS Code of Ethics.
        \item If the authors answer No, they should explain the special circumstances that require a deviation from the Code of Ethics.
        \item The authors should make sure to preserve anonymity (e.g., if there is a special consideration due to laws or regulations in their jurisdiction).
    \end{itemize}

\item {\bf Broader impacts}
    \item[] Question: Does the paper discuss both potential positive societal impacts and negative societal impacts of the work performed?
    \item[] Answer: \answerYes{} 
    \item[] Justification: We discuss the broader impact in Limitations.
    \item[] Guidelines:
    \begin{itemize}
        \item The answer NA means that there is no societal impact of the work performed.
        \item If the authors answer NA or No, they should explain why their work has no societal impact or why the paper does not address societal impact.
        \item Examples of negative societal impacts include potential malicious or unintended uses (e.g., disinformation, generating fake profiles, surveillance), fairness considerations (e.g., deployment of technologies that could make decisions that unfairly impact specific groups), privacy considerations, and security considerations.
        \item The conference expects that many papers will be foundational research and not tied to particular applications, let alone deployments. However, if there is a direct path to any negative applications, the authors should point it out. For example, it is legitimate to point out that an improvement in the quality of generative models could be used to generate deepfakes for disinformation. On the other hand, it is not needed to point out that a generic algorithm for optimizing neural networks could enable people to train models that generate Deepfakes faster.
        \item The authors should consider possible harms that could arise when the technology is being used as intended and functioning correctly, harms that could arise when the technology is being used as intended but gives incorrect results, and harms following from (intentional or unintentional) misuse of the technology.
        \item If there are negative societal impacts, the authors could also discuss possible mitigation strategies (e.g., gated release of models, providing defenses in addition to attacks, mechanisms for monitoring misuse, mechanisms to monitor how a system learns from feedback over time, improving the efficiency and accessibility of ML).
    \end{itemize}
    
\item {\bf Safeguards}
    \item[] Question: Does the paper describe safeguards that have been put in place for responsible release of data or models that have a high risk for misuse (e.g., pretrained language models, image generators, or scraped datasets)?
    \item[] Answer: \answerNA{} 
    \item[] Justification: The paper poses no such risks.
    \item[] Guidelines:
    \begin{itemize}
        \item The answer NA means that the paper poses no such risks.
        \item Released models that have a high risk for misuse or dual-use should be released with necessary safeguards to allow for controlled use of the model, for example by requiring that users adhere to usage guidelines or restrictions to access the model or implementing safety filters. 
        \item Datasets that have been scraped from the Internet could pose safety risks. The authors should describe how they avoided releasing unsafe images.
        \item We recognize that providing effective safeguards is challenging, and many papers do not require this, but we encourage authors to take this into account and make a best faith effort.
    \end{itemize}

\item {\bf Licenses for existing assets}
    \item[] Question: Are the creators or original owners of assets (e.g., code, data, models), used in the paper, properly credited and are the license and terms of use explicitly mentioned and properly respected?
    \item[] Answer: \answerYes{} 
    \item[] Justification: We cite every paper of the existing assets we used.
    \item[] Guidelines:
    \begin{itemize}
        \item The answer NA means that the paper does not use existing assets.
        \item The authors should cite the original paper that produced the code package or dataset.
        \item The authors should state which version of the asset is used and, if possible, include a URL.
        \item The name of the license (e.g., CC-BY 4.0) should be included for each asset.
        \item For scraped data from a particular source (e.g., website), the copyright and terms of service of that source should be provided.
        \item If assets are released, the license, copyright information, and terms of use in the package should be provided. For popular datasets, \url{paperswithcode.com/datasets} has curated licenses for some datasets. Their licensing guide can help determine the license of a dataset.
        \item For existing datasets that are re-packaged, both the original license and the license of the derived asset (if it has changed) should be provided.
        \item If this information is not available online, the authors are encouraged to reach out to the asset's creators.
    \end{itemize}

\item {\bf New assets}
    \item[] Question: Are new assets introduced in the paper well documented and is the documentation provided alongside the assets?
    \item[] Answer: \answerNA{} 
    \item[] Justification: The paper does not release new assets.
    \item[] Guidelines:
    \begin{itemize}
        \item The answer NA means that the paper does not release new assets.
        \item Researchers should communicate the details of the dataset/code/model as part of their submissions via structured templates. This includes details about training, license, limitations, etc. 
        \item The paper should discuss whether and how consent was obtained from people whose asset is used.
        \item At submission time, remember to anonymize your assets (if applicable). You can either create an anonymized URL or include an anonymized zip file.
    \end{itemize}

\item {\bf Crowdsourcing and research with human subjects}
    \item[] Question: For crowdsourcing experiments and research with human subjects, does the paper include the full text of instructions given to participants and screenshots, if applicable, as well as details about compensation (if any)? 
    \item[] Answer: \answerNA{} 
    \item[] Justification: The paper does not involve crowdsourcing nor research with human subjects.
    \item[] Guidelines:
    \begin{itemize}
        \item The answer NA means that the paper does not involve crowdsourcing nor research with human subjects.
        \item Including this information in the supplemental material is fine, but if the main contribution of the paper involves human subjects, then as much detail as possible should be included in the main paper. 
        \item According to the NeurIPS Code of Ethics, workers involved in data collection, curation, or other labor should be paid at least the minimum wage in the country of the data collector. 
    \end{itemize}

\item {\bf Institutional review board (IRB) approvals or equivalent for research with human subjects}
    \item[] Question: Does the paper describe potential risks incurred by study participants, whether such risks were disclosed to the subjects, and whether Institutional Review Board (IRB) approvals (or an equivalent approval/review based on the requirements of your country or institution) were obtained?
    \item[] Answer: \answerNA{} 
    \item[] Justification: The paper does not involve crowdsourcing nor research with human subjects.

    \item[] Guidelines:
    \begin{itemize}
        \item The answer NA means that the paper does not involve crowdsourcing nor research with human subjects.
        \item Depending on the country in which research is conducted, IRB approval (or equivalent) may be required for any human subjects research. If you obtained IRB approval, you should clearly state this in the paper. 
        \item We recognize that the procedures for this may vary significantly between institutions and locations, and we expect authors to adhere to the NeurIPS Code of Ethics and the guidelines for their institution. 
        \item For initial submissions, do not include any information that would break anonymity (if applicable), such as the institution conducting the review.
    \end{itemize}

\item {\bf Declaration of LLM usage}
    \item[] Question: Does the paper describe the usage of LLMs if it is an important, original, or non-standard component of the core methods in this research? Note that if the LLM is used only for writing, editing, or formatting purposes and does not impact the core methodology, scientific rigorousness, or originality of the research, declaration is not required.
    \item[] Answer: \answerYes{} 
    \item[] Justification: We use LLM for writing, editing and formatting.
    \item[] Guidelines:
    \begin{itemize}
        \item The answer NA means that the core method development in this research does not involve LLMs as any important, original, or non-standard components.
        \item Please refer to our LLM policy (\url{https://neurips.cc/Conferences/2025/LLM}) for what should or should not be described.
    \end{itemize}

\end{enumerate}

\end{document}